\documentclass[letterpaper]{article}
\usepackage[preprint]{aaai2027}
\usepackage[hyphens]{url}
\usepackage{graphicx}
\usepackage{natbib}
\usepackage{caption}
\usepackage{algorithm}
\usepackage{algorithmic}
\usepackage{multirow}
\usepackage{mathtools}
\usepackage{amsmath}
\usepackage{flafter}
\usepackage{subcaption}
\usepackage{enumitem}
\usepackage{amssymb}
\usepackage{array}
\usepackage[table]{xcolor}
\usepackage{amsthm}
\usepackage{enumitem}
\usepackage{xcolor}
\usepackage[table]{xcolor}

\usepackage{listings}

\lstdefinestyle{cordpython}{
  language=Python,
  basicstyle=\small\ttfamily,
  identifierstyle=\color{black},
  keywordstyle=\color{blue!55!black}\bfseries,
  commentstyle=\color{green!35!black}\itshape,
  stringstyle=\color{orange!50!black},
  columns=fullflexible,
  keepspaces=true,
  showstringspaces=false,
  breaklines=true,
  breakatwhitespace=false,
  tabsize=4,
  numbers=left,
  numberstyle=\tiny\color{black!45},
  numbersep=6pt,
  stepnumber=1,
  backgroundcolor=\color{black!2},
  frame=single,
  rulecolor=\color{black!20},
  framerule=0.4pt,
  framesep=5pt,
  xleftmargin=1.5em,
  framexleftmargin=1em,
  aboveskip=0.7\baselineskip,
  belowskip=0.7\baselineskip
}

\newtheorem{proposition}{Proposition}

\usepackage{newfloat}
\usepackage{listings}
\DeclareCaptionStyle{ruled}{labelfont=normalfont,labelsep=colon,strut=off}
\floatstyle{ruled}
\newfloat{listing}{tb}{lst}{}
\floatname{listing}{Listing}

\usepackage{booktabs}

\title{Let Confidence Change, Not the Prediction: Prediction-Preserving Repair for Post-hoc Calibration}

\author{
    Daehwan Kim\textsuperscript{\rm 1},
    Haejun Chung\textsuperscript{\rm 1\textdagger},
    Ikbeom Jang\textsuperscript{\rm 2\textdagger}
}
\affiliations{
    \textsuperscript{\rm 1}Hanyang University, Seoul, Republic of Korea\\
    \textsuperscript{\rm 2}Hankuk University of Foreign Studies, Yongin, Republic of Korea\\
    \{officialhwan, haejun\}@hanyang.ac.kr, ijang@hufs.ac.kr
}

\begin{document}

\maketitle
\begingroup
\renewcommand{\thefootnote}{\textdagger}
\footnotetext{Corresponding authors.}
\endgroup

\begin{abstract}
Post-hoc calibration corrects reported confidence, yet a multiclass calibrator can also change the associated top-1 prediction. Accuracy captures only the net effect of these changes on correctness, not how often predictions change; the Top-1 Prediction Change Rate (TPCR) instead measures this frequency. We propose Calibrator-Output Repair for Top-1 Decision Preservation (CORD), the first post-fit adapter to impose exact prediction preservation by repairing the full calibrated probability vector. From the original and calibrated outputs alone, CORD determines the mass assigned to the original top-1. The calibrated conditional distribution allocates the remaining mass over the other classes, yielding a repaired vector whose own argmax recovers the original prediction. On the calibration split, CORD coordinates the repaired masses to retain the calibrated outputs' mean mass on original predictions whenever attainable. The adapter alters neither the fitted calibrator nor its direct output, fits no additional supervised map, and requires no user- or validation-tuned hyperparameter. Across CIFAR-10/100 and ImageNet-1K, CORD attains zero TPCR by construction and lowers mean ECE, NLL, and Brier relative to the corresponding direct outputs in every dataset; paired gains persist under distribution shift and across calibration-set sizes. CORD thus removes the preservation constraint from calibrator fitting and assigns exact recovery of the original decision to subsequent output repair. Our code is available at \url{https://github.com/labhai/CORD}.
\end{abstract}

\section{Introduction}
\label{sec:introduction}

\begin{figure}[t!]
    \centering
    \includegraphics[width=0.9\columnwidth]{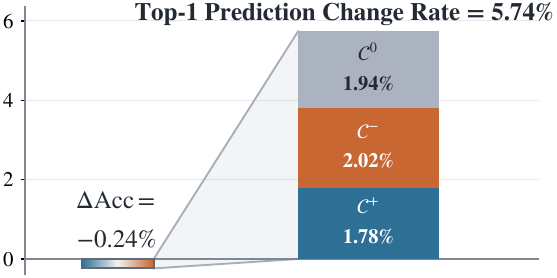}
    \caption{Accuracy hides the extent of top-1 prediction changes. For Vector Scaling on ImageNet-1K with ResNet-50, accuracy change reflects the net balance of accuracy-improving ($\mathcal{C}^{+}$) and accuracy-degrading ($\mathcal{C}^{-}$) revisions, whereas TPCR in Equation~\eqref{eq:tpcr} counts all revisions, including accuracy-neutral changes between incorrect classes ($\mathcal{C}^{0}$).}
    \label{fig:decision-revisions}
\end{figure}

Post-hoc calibration leaves the parameters of a trained classifier fixed and fits a separate map to its outputs. Its goal is confidence calibration, whereby predictions reported with confidence $p$ are correct with frequency $p$~\citep{guo2017calibration}. In standard multiclass prediction, a single probability vector both selects the predicted class through its argmax and encodes the corresponding confidence in the value at that coordinate. A post-hoc map that modifies this vector can therefore change not only the reported confidence but also the final top-1 prediction. Some calibration maps retain the original classifier's top-1~\citep{guo2017calibration,zhang2020mix,rahimi2020intra,tomani2022parameterized}, whereas others may change it~\citep{guo2017calibration,kull2019beyond}. A calibrated vector used as the final predictive distribution associates its confidence with the class selected by its own argmax; when the calibrated top-1 changes, that class differs from the original classifier's choice. The resulting report is coherent for the composed predictor, but the object of calibration has shifted. 

Preventing this shift at fit time requires constraining the calibration map to preserve the original top-1, thereby narrowing the class of admissible probability transformations. Mix-n-Match~\citep{zhang2020mix} makes this tension explicit by identifying accuracy preservation and high expressive power as calibration desiderata and illustrating how greater expressive power can come at the cost of classification accuracy. Measured as an accuracy change, the resulting compromise can appear small, yet accuracy records only the net effect on correctness of changes to the original top-1 predictions, not their extent. Applying Vector Scaling~\citep{guo2017calibration} to a pretrained ResNet-50 on ImageNet-1K reduces accuracy by only $0.24$ percentage points yet changes the original top-1 prediction for $5.74\%$ of examples (Figure~\ref{fig:decision-revisions}). Only ground-truth labels reveal how each revision affects correctness. Here, revisions with opposing effects nearly offset one another, while those between incorrect classes make no contribution to the accuracy change. The net effect is small; the extent is not.

We therefore ask a different question---\emph{What if the preservation constraint were removed from fitting and imposed afterward?} We open this route with \textbf{C}alibrator-\textbf{O}utput \textbf{R}epair for Top-1 \textbf{D}ecision Preservation (\textbf{CORD}), the first post-fit adapter to impose exact prediction preservation by repairing a fitted calibrator's full probability vector. Using only the original and calibrated outputs, CORD constructs a repaired probability vector whose argmax matches the original prediction. It leaves both the fitted calibrator and its direct output unchanged, fits no additional supervised map, and introduces no user- or validation-tuned hyperparameter. The calibrator is thus fitted without a preservation constraint; prediction preservation is imposed only afterward in constructing the repaired output and is no longer determined by the choice of calibrator.

Moving preservation outside fitting requires a principled repair, as the calibrated vector is the output of a fitted calibration map and an unrestricted rewrite would be indistinguishable from an arbitrary override. CORD therefore changes only how probability mass is split between the originally predicted class and all remaining classes, restoring the original top-1 while leaving the calibrated vector's relative allocation among those classes unchanged. Because independent pointwise repairs can collectively shift the mean mass assigned to the original predictions, CORD coordinates them over the calibration split to retain the calibrated outputs' mean mass on those predictions whenever attainable. One fitted calibrator thereby yields two normalized reports, with the direct output describing the prediction induced by its own argmax and the repaired output describing the original classifier's prediction. CORD thus allows the reported confidence to change while keeping the prediction it describes fixed. Overall, we make the following contributions:
\begin{itemize}
\item We expose how multiclass calibration can change the top-1 prediction that the reported confidence describes; accuracy records only the net effect of such changes on correctness, whereas the Top-1 Prediction Change Rate (TPCR) captures their total incidence.

\item We introduce CORD, the first post-fit adapter to impose exact prediction preservation by repairing a fitted calibrator's full probability report. It preserves the calibrated conditional distribution over the remaining classes without auxiliary supervised fitting or a user- or validation-tuned hyperparameter.

\item We demonstrate across diverse datasets, classifiers, and calibrator families that CORD attains zero TPCR by construction while lowering ECE, NLL, and Brier on average relative to the direct outputs, with gains persisting under distribution shift and across calibration-set sizes.
\end{itemize}

\section{Related Work}
\label{sec:related-work}
\noindent\textbf{Fit-time prediction preservation.}
Fit-time approaches build argmax or order preservation into the fitted calibration mechanism. Temperature Scaling (TS)~\citep{guo2017calibration} retains the complete class ordering through a single positive temperature; Mix-n-Match~\citep{zhang2020mix} introduces multiclass isotonic regression (IRM) with one strictly isotonic map shared across classes; Intra Order-Preserving Functions~\citep{rahimi2020intra} learn order-preserving neural calibration maps. Parameterized Temperature Scaling (PTS)~\citep{tomani2022parameterized} uses an input-dependent positive scalar temperature; Sample-Dependent Adaptive Temperature Scaling (AdaTS)~\citep{joy2023sample} predicts it from class-conditional latent likelihoods produced by a variational autoencoder over the classifier's feature space, whereas concurrent Quantile-Adaptive Temperature Scaling (QaTS)~\citep{chakraborty2026quantile} conditions it on the empirical quantile of the original confidence. Probability Bounding (PB)~\citep{atarashi2025box} fits uniform lower and upper probability bounds for Box-Constrained Softmax (BCSoftmax), retaining input-logit ordering but potentially introducing top-class ties. MCCT-I~\citep{zhang2025instance} fits rank-dependent inverse scales and biases to sorted logits under monotonicity constraints; recent work in semantic segmentation~\citep{kirscher2026rethinking} fits class-conditional affine calibrators under argmax- or order-preservation constraints. Outside calibrator fitting, accuracy-preserving Truth Discovery Ensemble (aTDE)~\citep{ma2021improving} projects truth-discovery iterates during aggregation onto a simplex region preserving the ensemble's top-1; CORD imposes exact top-1 preservation only when constructing the repaired output.

\noindent\textbf{Calibration maps without preservation guarantees.}
Other multiclass calibrators produce normalized vectors that need not retain the original prediction. Vector Scaling (VS) and Matrix Scaling (MS)~\citep{guo2017calibration} use diagonal and dense logit-affine maps, respectively; Structured Vector Scaling (SVS) and Structured Matrix Scaling (SMS)~\citep{berta2025structured} apply hierarchical regularization to the corresponding affine families; Dirichlet Calibration with Off-Diagonal and Intercept Regularization (Dir-ODIR)~\citep{kull2019beyond} fits a regularized affine map in log-probability space. Beyond these affine families, one-versus-all isotonic regression (IROvA) and IROvA-TS~\citep{zhang2020mix} fit classwise monotone maps before normalization, the latter after Temperature Scaling. Meta-Cal~\citep{ma2021meta} combines a base calibrator with a ranking model to control miscoverage or coverage accuracy, without guaranteeing pointwise prediction preservation. CORD instead complements each direct output with a repaired probability vector that restores the original top-1 without altering the calibrated conditional distribution over the remaining classes.

\noindent\textbf{Confidence calibration for fixed predictions.}
Another line fixes the predicted label; top-label calibration~\citep{gupta2022top} requires confidence calibration conditional on that label. Top-versus-All (TvA)~\citep{le2024confidence} treats the original prediction's correctness as a binary calibration problem and, with a binary calibrator, acts after class selection; its TS instantiation (TS--TvA) retains class ordering through a shared positive temperature. Reduced confidence calibration~\citep{panchenko2022class} lifts a calibrated top confidence to the simplex. Simplex Temperature Scaling (STS)~\citep{esaki2024accuracy} fits an input-dependent temperature while fixing the Concrete distribution's location parameter inherited from the pretrained classifier. These methods attach confidence to a preselected class or separate it from class selection during fitting; CORD instead operates on the full probability vector from an already fitted calibrator, returning a normalized vector whose own argmax recovers the original prediction.

\section{CORD: Post-Fit Prediction Preservation}
\label{sec:method}

\subsection{Problem Setup and Design Requirements}
\label{sec:problem-setup}

\noindent\textbf{Setup.}
Let $K\ge 2$ and $\Delta^{K-1}=\{u\in\mathbb R_{\ge0}^K:\sum_j u_j=1\}$. On a calibration split $\{x_i\}_{i=1}^n$, let $p_i^0,q_i\in\Delta^{K-1}$ denote the original classifier output and the direct output of an already fitted calibration map. All $\arg\max$ operations use the same fixed deterministic tie rule specified in Appendix, and $a_i=\arg\max_j p_{ij}^0$ is the originally predicted class. The repaired probability vector must recover this prediction through its own argmax.

\begin{equation}
\arg\max_j\widetilde p_{ij}=a_i.
\label{eq:preservation}
\end{equation}
The derivation below assumes $p_i^0,q_i\in\operatorname{ri}\Delta^{K-1}$ for all $i$. For boundary outputs, the Appendix defines an order-preserving stabilization using a fixed numerical constant; when required, the same symbols below denote the stabilized vectors used by CORD, while the supplied outputs remain unchanged.

\noindent\textbf{Design requirements.}
Separating repair from calibrator fitting requires CORD to use only the original and calibrated outputs (\textbf{R1}), fit no additional supervised prediction map (\textbf{R2}), and introduce no user- or validation-tuned hyperparameter (\textbf{R3}); making preservation a property of the repaired vector itself requires Equation~\eqref{eq:preservation} to hold for every input (\textbf{R4}). CORD returns $\widetilde p$ alongside the direct output, leaving that output and the fitted calibrator unchanged.

Equation~\eqref{eq:preservation} still admits infinitely many repaired vectors; CORD retains the conditional distribution induced by $q_i$ over classes $j\ne a_i$, leaving each vector repair with one degree of freedom. Within each input, CORD uses $q_i$, supplemented by $p_i^0$ only when the calibrated prediction changes, as a reference for the mass assigned to the original prediction; across the calibration split, it coordinates these masses to retain the mean encoded by $q$ when attainable. The adapter retains only one scalar computed on that split.

\subsection{A One-Dimensional Repair Family}
\label{sec:repair-family}

\noindent\textbf{Preserving the conditional distribution.}
CORD fixes the conditional distribution over classes $j\ne a_i$ by renormalizing the corresponding entries of $q_i$. Let $b_i=q_{i,a_i}$ and define $\alpha_{i,a_i}=0$ and $\alpha_{ij}=q_{ij}/(1-b_i)$ for $j\ne a_i$. Once $s_i$, the repaired mass assigned to $a_i$, is chosen, every coordinate of the repaired vector is fixed.
\begin{equation}
\widetilde p_i(s_i)
=s_i e_{a_i}+(1-s_i)\alpha_i,
\qquad s_i\in(0,1),
\label{eq:reconstruction}
\end{equation}
where $e_{a_i}$ is the corresponding standard basis vector. This reconstruction changes only the mass split between $a_i$ and the remaining classes, preserving $q_{ij}/q_{ik}$ for all $j,k\ne a_i$.

For any candidate $u_i$ assigning mass $s_i$ to $a_i$, write $\alpha_i^u$ for its conditional distribution over classes $j\ne a_i$. The KL chain rule decomposes $D_{\mathrm{KL}}(q_i\|u_i)$---the excess expected log loss relative to the calibrated report $q_i$---into the required change in mass on $a_i$ and any additional change in this conditional distribution.
\begin{equation}
D_{\mathrm{KL}}(q_i\|u_i)
=d_{\mathrm B}(b_i\|s_i)
+(1-b_i)D_{\mathrm{KL}}(\alpha_i\|\alpha_i^u),
\label{eq:kl-decomposition}
\end{equation}
where $d_{\mathrm B}(r\|s)=r\log(r/s)+(1-r)\log\{(1-r)/(1-s)\}$. For fixed $s_i$, the first term is fixed, whereas the second is nonnegative and vanishes only when $\alpha_i^u=\alpha_i$; hence the reconstruction in Equation~\eqref{eq:reconstruction} uniquely minimizes $D_{\mathrm{KL}}(q_i\|\cdot)$ on this simplex slice. Appendix gives the derivation.

\noindent\textbf{Prediction-preserving interval.}
Prediction preservation now constrains the sole remaining degree of freedom. With $\rho_i=\max_{j\ne a_i}\alpha_{ij}$, the largest repaired probability among classes $j\ne a_i$ is $(1-s_i)\rho_i$; comparison with $s_i$ yields the top-rank threshold and CORD's strictly prediction-preserving interval.
\begin{equation}
\begin{aligned}
\widetilde p_{i,a_i}(s_i)\ge
\max_{j\ne a_i}\widetilde p_{ij}(s_i)
&\Longleftrightarrow
s_i\ge\frac{\rho_i}{1+\rho_i},\\
I_i
&=\left[
\frac{\rho_i}{1+\rho_i}+\epsilon_{\mathrm{num}},
1-\epsilon_{\mathrm{num}}
\right].
\end{aligned}
\label{eq:feasible-range}
\end{equation}
With the numerical offset $\epsilon_{\mathrm{num}}$ fixed at $10^{-12}$, $\rho_i\le1$ ensures that $I_i$ is nonempty, and every $s_i\in I_i$ makes $a_i$ uniquely top-ranked. CORD thus reduces prediction-preserving repair to one scalar $s_i\in I_i$ per input.

\subsection{Coordinating Repairs on the Calibration Split}
\label{sec:aggregate-coordination}

\noindent\textbf{Local reference for the original prediction.}
The interval $I_i$ constrains $s_i$ but does not select its value. If $q_i$ keeps $a_i$ top-ranked, CORD retains $b_i$ as its local reference. Otherwise, the unconstrained choice $s_i=b_i$ would reproduce $q_i$, while $p_{i,a_i}^0$ supplies the second available output-level probability assigned to $a_i$. CORD assigns these two probabilities equal weight, rendering the Bernoulli--KL objective symmetric in them. The resulting average divergence to a candidate $s$ equals $d_{\mathrm B}((b_i+p_{i,a_i}^0)/2\|s)$ up to an $s$-independent constant, making the arithmetic mean the unique unconstrained minimizer.
\begin{equation}
g_i=
\begin{cases}
b_i,
&\arg\max_j q_{ij}=a_i,\\[2pt]
\dfrac{b_i+p_{i,a_i}^0}{2},
&\arg\max_j q_{ij}\ne a_i.
\end{cases}
\label{eq:local-reference}
\end{equation}
Both branches therefore yield the common per-input objective $d_{\mathrm B}(g_i\|s_i)$ used below. Appendix gives the corresponding derivation and local-reference sensitivity analysis.

\noindent\textbf{Retaining mean mass on the original predictions.}
The quantity $n^{-1}\sum_i b_i$ is the mean probability mass that the fitted calibrator assigns to the original predictions. Independent projection, $s_i=\Pi_{I_i}(g_i)$, satisfies pointwise prediction preservation but can shift this quantity, so CORD projects it onto the attainable mean interval:
\begin{equation}
\mu=
\Pi_{\left[
\frac1n\sum_i\min I_i,
\frac1n\sum_i\max I_i
\right]}
\left(\frac1n\sum_i b_i\right),
\label{eq:target-mean}
\end{equation}
where $\Pi_J$ denotes projection onto a closed interval $J$; thus $\mu$ retains $n^{-1}\sum_i b_i$ when attainable and otherwise selects the nearest attainable mean.

CORD minimizes total Bernoulli--KL departure from the local references over repairs with mean $\mu$.
\begin{equation}
\begin{aligned}
s^\star=\arg\min_{s_1,\ldots,s_n}\quad
&\sum_{i=1}^{n}d_{\mathrm B}(g_i\|s_i)\\
\textnormal{subject to}\quad
&s_i\in I_i\quad\forall i,
\qquad \frac1n\sum_{i=1}^{n}s_i=\mu.
\end{aligned}
\label{eq:coordinated-repair}
\end{equation}
The equality fixes the aggregate mass; the objective allocates the required adjustment. Equation~\eqref{eq:target-mean} ensures feasibility; joint strict convexity yields a unique solution.

\subsection{A Shared-Scalar Repair Rule}
\label{sec:shared-scalar}

Although Equation~\eqref{eq:coordinated-repair} involves $n$ scalar variables, they are coupled only by the aggregate equality, so a single Lagrange multiplier coordinates all repairs. For a candidate multiplier $\eta$, let $\psi(\eta;g,I)$ denote the interval-constrained response for a local reference $g$ and feasible interval $I$, and write $\psi_i(\eta)=\psi(\eta;g_i,I_i)$ on the calibration split. Interval clipping can leave the multiplier nonunique without changing the unique primal repair; CORD selects the valid multiplier closest to zero.
\begin{equation}
\begin{aligned}
\psi(\eta;g,I)
&=\arg\min_{s\in I}
\left\{d_{\mathrm B}(g\|s)-\eta s\right\},\\
\eta^\star
&=\arg\min_{\eta}\ |\eta|
\quad\textnormal{subject to}\quad
\frac1n\sum_{i=1}^{n}\psi_i(\eta)=\mu,\\
s_i^\star&=\psi_i(\eta^\star).
\end{aligned}
\label{eq:shared-scalar}
\end{equation}
Stationarity gives $(s-g)/\{s(1-s)\}=\eta$, whose unique interior solution is available in closed form; clipping this solution to $I$ evaluates $\psi(\eta;g,I)$. Each response is continuous and nondecreasing in $\eta$, and Equation~\eqref{eq:target-mean} places $\mu$ in the range of their mean, so the scalar equation can be solved by bisection. The nearest-zero rule leaves the unique calibration-split repair unchanged while defining a unique repair rule for new inputs. Appendix gives the closed-form response, its cancellation-safe evaluation, finite bracketing, and the plateau-aware solver.

Algorithm~\ref{alg:cord} summarizes CORD's construction on the calibration split and its application to a new output pair; Appendix provides a Python implementation, the KKT derivation, and existence arguments.

\begin{algorithm}[t!]
\caption{Construction and application of CORD}
\label{alg:cord}
\begin{algorithmic}[1]
\renewcommand{\algorithmicrequire}{\textbf{Input}}
\renewcommand{\algorithmicensure}{\textbf{Output}}
\REQUIRE Calibration-split output pairs
$\{(p_i^0,q_i)\}_{i=1}^{n}$ and a new output pair $(p^0,q)$
\ENSURE A repaired probability vector $\widetilde p$ whose argmax
recovers the original top-1 while preserving the calibrated
conditional distribution over the remaining classes

\STATE \textbf{Construction:} For each $i$, compute
$a_i=\arg\max_j p_{ij}^0$, $b_i=q_{i,a_i}$, and $\alpha_i$
as defined above; form $I_i$ and $g_i$ using
\eqref{eq:feasible-range} and~\eqref{eq:local-reference}

\STATE Compute the inherited feasible mean $\mu$
from~\eqref{eq:target-mean}

\STATE Solve~\eqref{eq:shared-scalar} for $\eta^\star$
and retain this scalar

\STATE \textbf{Application:} With $\eta^\star$ fixed, compute
$a$, $b$, $\alpha$, $I$, and $g$ for $(p^0,q)$ as above and set
$s=\psi(\eta^\star;g,I)$

\STATE \textbf{return}
$\widetilde p=s e_a+(1-s)\alpha$
\end{algorithmic}
\end{algorithm}

\begin{proposition}[Structural guarantees]
\label{prop:cord-guarantees}
Assume that the original and calibrated outputs, after the fixed stabilization when needed, lie in $\operatorname{ri}\Delta^{K-1}$, that all $\arg\max$ operations use the fixed deterministic tie rule, and that $0<\epsilon_{\mathrm{num}}<1/4$. The program in Equation~\eqref{eq:coordinated-repair} has a nonempty feasible set and a unique minimizer recovered by Equation~\eqref{eq:shared-scalar}; the nearest-zero selection $\eta^\star$ exists and is unique. With this scalar fixed, form the pointwise quantities for any input $x$ as above, set $s(x)=\psi(\eta^\star;g(x),I(x))$, and reconstruct $\widetilde p(x)$ using Equation~\eqref{eq:reconstruction}, where $a(x)=\arg\max_j p_j^0(x)$. Then
\begin{enumerate}
\item $\widetilde p(x)\in\Delta^{K-1}$;
\item $\arg\max_j\widetilde p_j(x)=a(x)$, with $a(x)$ uniquely top-ranked;
\item for every $j\ne a(x)$,
\begin{equation}
\frac{\widetilde p_j(x)}{1-\widetilde p_{a(x)}(x)}
=
\frac{q_j(x)}{1-q_{a(x)}(x)};
\label{eq:conditional-allocation-guarantee}
\end{equation}
\item $n^{-1}\sum_i s_i^\star=\mu$ on the calibration split.
\end{enumerate}
\end{proposition}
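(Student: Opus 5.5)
The plan has two parts. First, we treat the optimization claims on the calibration split: nonemptiness, uniqueness, recovery through Equation~\eqref{eq:shared-scalar}, and existence and uniqueness of $\eta^\star$. Then we check items 1--4 pointwise for an arbitrary input.

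\textbf{Intervals and feasibility.} Since $q_i\in\operatorname{ri}\Delta^{K-1}$, we have $b_i<1$, so $\alpha_i$ is well defined. It is a probability vector with $\rho_i\in(0,1]$, hence $\rho_i/(1+\rho_i)\le 1/2$. The assumption $\epsilon_{\mathrm{num}}<1/4$ then gives
\[
\min I_i\le \tfrac12+\epsilon_{\mathrm{num}}<\tfrac34<1-\epsilon_{\mathrm{num}}=\max I_i ,
\]
so each $I_i$ is a nondegenerate compact subinterval of $(0,1)$. Set $L=n^{-1}\sum_i\min I_i$ and $U=n^{-1}\sum_i\max I_i$. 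By Equation~\eqref{eq:target-mean}, $\mu\in[L,U]$. The map $t\mapsto n^{-1}\sum_i\{(1-t)\min I_i+t\max I_i\}$ is continuous and runs from $L$ to $U$ on $[0,1]$, so some point of $\prod_i I_i$ meets the mean constraint; the feasible set is therefore nonempty. Each $g_i\in(0,1)$, being $b_i$ or an average of interior probabilities. Consequently $s\mapsto d_{\mathrm B}(g_i\|s)$ is continuous and strictly convex on $(0,1)$, with second derivative $g_i/s^2+(1-g_i)/(1-s)^2>0$. The feasible set is compact and convex and the objective is separable and strictly convex, so there is a unique minimizer $s^\star$.

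\textbf{Lagrangian recovery and $\eta^\star$ (main obstacle).} The constraints are affine and the feasible set is nonempty, so the KKT conditions are necessary and sufficient. There is therefore a multiplier $\eta$ for which each $s_i^\star$ minimizes $d_{\mathrm B}(g_i\|s)-\eta s$ over $I_i$, that is, $s_i^\star=\psi_i(\eta)$. Conversely, suppose some $\eta$ satisfies $n^{-1}\sum_i\psi_i(\eta)=\mu$. Then the vector $(\psi_i(\eta))_i$ is feasible and minimizes the Lagrangian, so it equals $s^\star$. Hence the set $E=\{\eta: n^{-1}\sum_i\psi_i(\eta)=\mu\}$ is nonempty and every element of it yields $s^\star$.

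To describe $E$ we study $\psi$. The derivative $h(s)=(s-g)/\{s(1-s)\}$ is strictly increasing from $-\infty$ to $+\infty$ on $(0,1)$. So $\psi(\eta;g,I)$ is the clipping of $h^{-1}(\eta)$ to $I$, which is continuous and nondecreasing in $\eta$. The mean $m(\eta)=n^{-1}\sum_i\psi_i(\eta)$ is then continuous and nondecreasing, and $E=m^{-1}(\mu)$ is a closed interval, possibly unbounded.

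The delicate step is ruling out a minimizer of $|\eta|$ that is not unique or does not exist. We would argue as follows. Because $E$ is a nonempty closed interval, the minimizer of $|\eta|$ over $E$ is the projection of $0$ onto $E$, and this projection exists and is unique. An unbounded $E$ causes no difficulty, since $\operatorname{dist}(0,E)$ is still attained at a single point. Every element of $E$, including this projection, reproduces $s^\star$. This settles the recovery claim and item~4, because $s^\star$ satisfies the mean constraint.

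\textbf{Pointwise items.} For a new input, $s(x)=\psi(\eta^\star;g(x),I(x))$ lies in $I(x)\subset(0,1)$. Item~1 holds because $\widetilde p$ is a convex combination of $e_a$ and $\alpha$, both in $\Delta^{K-1}$. For item~2, write $s=s(x)$ and $\rho=\rho(x)$. Then $s\ge\rho/(1+\rho)+\epsilon_{\mathrm{num}}$ gives $s(1+\rho)>\rho$, so
\[
s>(1-s)\rho=\max_{j\ne a}\widetilde p_j .
\]
Thus $a$ is the strict top class and the tie rule is never invoked. For item~3, note that $\alpha_a=0$, so $\widetilde p_a=s$, and for $j\ne a$,
\[
\frac{\widetilde p_j}{1-\widetilde p_a}=\alpha_j=\frac{q_j}{1-q_a}.
\]
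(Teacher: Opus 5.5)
Your proposal is correct and follows essentially the same route as the paper. Both arguments get uniqueness from strict convexity on a nonempty compact feasible set, reduce the program via KKT to the clipped response $\psi$, show that $E_\mu$ is a nonempty closed interval so that $\eta^\star=\Pi_{E_\mu}(0)$ is well defined, and verify the pointwise items directly from the strict margin $s-(1-s)\rho>0$ and from $\alpha_a=0$. The differences are cosmetic: you get nonemptiness of $E$ from the existence of a KKT multiplier and sufficiency from an Everett-type Lagrangian comparison, whereas the paper applies the intermediate value theorem to the continuous, nondecreasing mean response (which saturates beyond finite thresholds) and writes out the bound multipliers explicitly.
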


Proposition~\ref{prop:cord-guarantees} couples pointwise prediction preservation with preservation of the calibrated conditional distribution and retention of the inherited feasible mean on the calibration split. Appendix provides the proof and characterizes when CORD reduces to the identity map.

CORD construction costs $O(nK)$ plus $O(n)$ per bisection step; each repair costs $O(K)$, and the adapter stores only $\eta^\star$.  With the tie rule and numerical constants fixed, CORD derives every data-dependent quantity from $p^0$ and $q$, fits no additional supervised prediction map, and uses no user- or validation-tuned hyperparameter, thereby satisfying \textbf{R1}--\textbf{R3}; Proposition~\ref{prop:cord-guarantees} establishes \textbf{R4}.

\section{Experiments}
\label{sec:experiments}

\begin{figure*}[t!]
    \centering
    \includegraphics[width=0.85\textwidth]{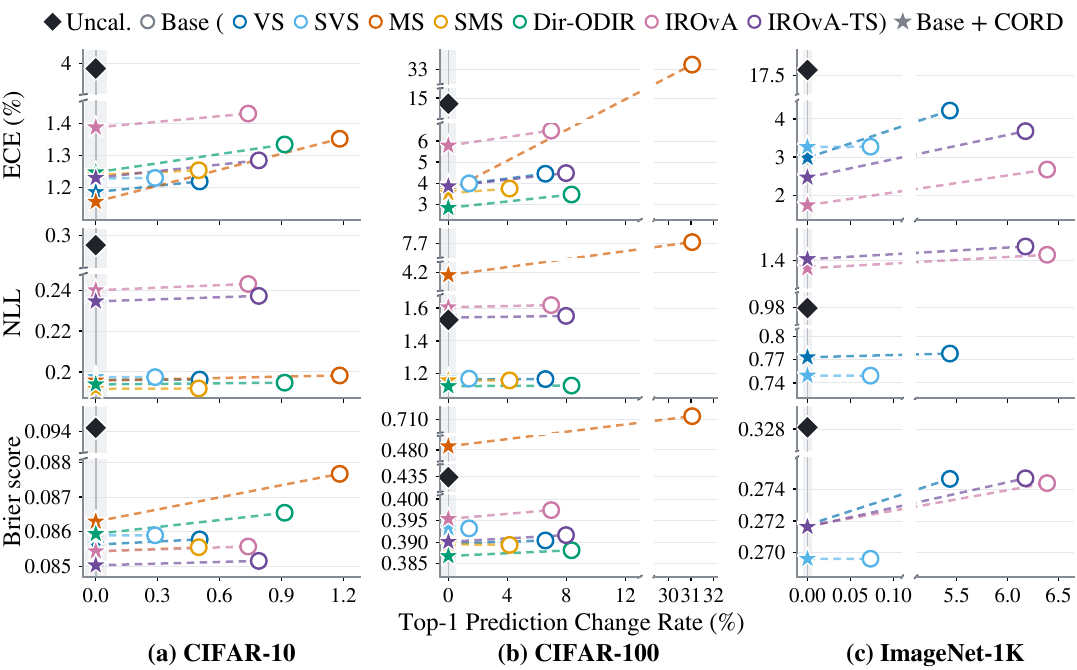}
    \caption{Post-fit prediction preservation. ECE, NLL, and Brier versus TPCR on CIFAR-10/100 and ImageNet-1K. Dashed lines pair each direct output with its CORD repair, and the gray band marks $\mathrm{TPCR}=0$. Markers are dataset-wise averages over classifiers and five splits.}
    \label{fig:main-evaluation}
\end{figure*}

\subsection{Experimental Setup}
\label{sec:experimental-setup}

\noindent\textbf{Datasets and classifiers.}
We evaluate CORD on CIFAR-10/100~\citep{krizhevsky2009learning} and ImageNet-1K~\citep{deng2009imagenet} using classifiers spanning convolutional networks, lightweight mobile architectures, and vision transformers: VGG-16-BN~\citep{simonyan2015very}, ResNet-56~\citep{he2016deep}, WRN-26-10~\citep{zagoruyko2016wide}, DenseNet-121~\citep{huang2017densely}, MobileNetV2-1.4$\times$~\citep{sandler2018mobilenetv2}, ShuffleNetV2-1.0$\times$~\citep{ma2018shufflenet}, and RepVGG-A1~\citep{ding2021repvgg} for CIFAR-10/100; ResNet-50~\citep{he2016deep}, ViT-B/16~\citep{dosovitskiy2020image}, Swin-T~\citep{liu2021swin}, and ConvNeXt-T~\citep{liu2022convnet} for ImageNet-1K. We use publicly released pretrained weights and keep all classifier parameters fixed. For each dataset, five distinct random seeds yield equal calibration and evaluation splits: $5{,}000$ examples per split from each CIFAR test set and $25{,}000$ per split from the ImageNet validation set. Within each dataset and seed, all classifiers and methods share the same split. For each split, we fit the calibrators and construct CORD on its calibration set, compute all measures on the corresponding evaluation set, and report five-split averages.

\noindent\textbf{Evaluation measures.}
We report Expected Calibration Error (ECE)~\citep{guo2017calibration}, which compares the mean maximum predicted probability with top-1 accuracy within 15 equal-width confidence bins; negative log-likelihood (NLL); and the multiclass Brier score~\citep{glenn1950verification}. Because CORD returns a full probability vector, we complement ECE with NLL and Brier, two strictly proper scoring rules for the full report; NLL evaluates the observed-class probability, whereas Brier aggregates squared errors across all coordinates relative to the one-hot outcome. These scores capture changes in the full predictive distribution that ECE and accuracy alone can miss~\citep{chidambaram2025reassessing}.

\begin{table}[t]
\renewcommand{\arraystretch}{0.5}
\centering
\footnotesize
\setlength{\tabcolsep}{1.0pt}
\resizebox{\columnwidth}{!}{%
    \begin{tabular}{
        l
        !{\color{black!18}\vrule width 0.25pt}
        c
        !{\color{black!18}\vrule width 0.25pt}
        c
        !{\color{black!18}\vrule width 0.25pt}
        c
    }
        \toprule
        \textbf{Dataset}
        & \textbf{$\Delta_{\mathrm R}$ECE (pp)}
        & \textbf{$\Delta_{\mathrm R}$NLL }
        & \textbf{$\Delta_{\mathrm R}$Brier ($\times10^{-3}$)} \\
        \midrule
        \shortstack[l]{CIFAR-10\\[-1pt]\strut}
        & \shortstack{0.061\\[-1pt]
            {[0.017, 0.125]}}
        & \shortstack{0.0014\\[-1pt]
            {[0.0009, 0.0019]}}
        & \shortstack{0.361\\[-1pt]
            {[0.215, 0.510]}} \\
        \addlinespace[2pt]
        \shortstack[l]{CIFAR-100\\[-1pt]\strut}
        & \shortstack{4.642\\[-1pt]
            {[2.822, 5.797]}}
        & \shortstack{0.5114\\[-1pt]
            {[0.2057, 0.7873]}}
        & \shortstack{33.691\\[-1pt]
            {[16.941, 45.355]}} \\
        \addlinespace[2pt]
        \shortstack[l]{ImageNet-1K\\[-1pt]\strut}
        & \shortstack{0.847\\[-1pt]
            {[0.594, 1.024]}}
        & \shortstack{0.0154\\[-1pt]
            {[0.0132, 0.0178]}}
        & \shortstack{2.159\\[-1pt]
            {[1.905, 2.470]}} \\
        \bottomrule
    \end{tabular}%
}
\caption{Mean paired Base--CORD metric reductions in Figure~\ref{fig:main-evaluation}, defined as $\Delta_{\mathrm R}M=M(\mathrm{Base})-M(\mathrm{Base}+\mathrm{CORD})$; positive values indicate improvement, and brackets denote Bonferroni-adjusted simultaneous 95\% bootstrap CIs.}
\label{tab:paired-effects}
\end{table}

\noindent\textbf{Top-1 prediction changes.}
To distinguish the extent of top-1 prediction changes from their net effect on accuracy, let $d_0$ denote the original top-1 rule and $d_1$ the rule induced by the evaluated output on an evaluation set of $N$ examples. We partition these changes into $\mathcal{C}^{+}$ (incorrect $\rightarrow$ correct), $\mathcal{C}^{-}$ (correct $\rightarrow$ incorrect), and $\mathcal{C}^{0}$ (incorrect $\rightarrow$ a different incorrect class). The net accuracy change is $\Delta\mathrm{Acc}:=\mathrm{Acc}(d_1)-\mathrm{Acc}(d_0)=(|\mathcal{C}^{+}|-|\mathcal{C}^{-}|)/N$: $\mathcal{C}^{+}$ and $\mathcal{C}^{-}$ enter with opposite signs, whereas $\mathcal{C}^{0}$ leaves accuracy unchanged. A small $\Delta\mathrm{Acc}$ can therefore coexist with many top-1 prediction changes. We report their total fraction as the Top-1 Prediction Change Rate (TPCR):
\begin{equation}
\label{eq:tpcr}
\mathrm{TPCR}
=\frac{1}{N}\sum_{i=1}^{N}
  \mathbf{1}\!\{d_1(x_i)\neq d_0(x_i)\}
=\frac{|\mathcal{C}^{+}|+|\mathcal{C}^{-}|+|\mathcal{C}^{0}|}{N}.
\end{equation}

\noindent\textbf{Calibrators.}
We apply CORD to calibrated outputs from seven maps that span parametric and nonparametric families and do not guarantee prediction preservation, namely VS~\citep{guo2017calibration}, SVS~\citep{berta2025structured}, MS~\citep{guo2017calibration}, SMS~\citep{berta2025structured}, Dir-ODIR~\citep{kull2019beyond}, IROvA~\citep{zhang2020mix}, and IROvA-TS~\citep{zhang2020mix}. We use Base to denote any fitted calibrator from this set and direct output to denote the probability vector it produces. On ImageNet-1K, we restrict this set to VS, SVS, IROvA, and IROvA-TS, omitting MS, SMS, and Dir-ODIR because each has more than $10^{6}$ fitted coefficients at $K=1000$~\citep{berta2025structured}. We include TS~\citep{guo2017calibration}, IRM~\citep{zhang2020mix}, AdaTS~\citep{joy2023sample}, TS--TvA~\citep{le2024confidence}, and MCCT-I~\citep{zhang2025instance} as fit-time prediction-preserving baselines. We run all calibrators with their default settings.

\begin{table}[t]
\renewcommand{\arraystretch}{1.0}
\centering
\resizebox{\columnwidth}{!}{%
    \begin{tabular}{
        l
        !{\color{black!18}\vrule width 0.25pt}
        l
        !{\color{black!18}\vrule width 0.25pt}
        l
        !{\color{black!18}\vrule width 0.25pt}
        l
        !{\color{black!18}\vrule width 0.25pt}
        l
    }
        \toprule
        \textbf{Dataset} & \textbf{Repair} & \textbf{ECE (\%)} & \textbf{NLL} & \textbf{Brier} \\
        \midrule
                \multirow{2}{*}{CIFAR-10}
            & Minimal repair & 1.285$^{**}$ & 0.208$^{***}$ & 0.0858$^{***}$ \\
            & \cellcolor{gray!12}CORD
            & \cellcolor{gray!12}\textbf{1.240}
            & \cellcolor{gray!12}\textbf{0.207}
            & \cellcolor{gray!12}\textbf{0.0857} \\
        \addlinespace[1pt]

        \multirow{2}{*}{CIFAR-100}
            & Minimal repair & 5.144$^{***}$ & 1.875$^{***}$ & 0.4055$^{***}$ \\
            & \cellcolor{gray!12}CORD
            & \cellcolor{gray!12}\textbf{3.912}
            & \cellcolor{gray!12}\textbf{1.705}
            & \cellcolor{gray!12}\textbf{0.4041} \\
        \addlinespace[1pt]

        \multirow{2}{*}{ImageNet-1K}
            & Minimal repair & 2.983$^{***}$ & 1.079$^{***}$ & 0.2714$^{**}$ \\
            & \cellcolor{gray!12}CORD
            & \cellcolor{gray!12}\textbf{2.614}
            & \cellcolor{gray!12}\textbf{1.077}
            & \cellcolor{gray!12}\textbf{0.2712} \\
        \bottomrule
    \end{tabular}%
}
\caption{CORD versus minimal pointwise repair, averaged over classifier--calibrator pairs and five splits. Both attain zero TPCR and preserve the calibrated conditional distribution. Asterisks mark significant reductions by CORD after Holm adjustment ($^{*}p<0.05$, $^{**}p<0.01$, $^{***}p<0.001$).}
\label{tab:minimal-repair}
\end{table}

\subsection{Prediction Preservation after Fitting}
\label{sec:post-fit-preservation}
Figure~\ref{fig:main-evaluation} demonstrates across datasets and calibrator families that prediction preservation need not be built into calibrator fitting; the Appendix provides complete classifier--calibrator tables and corresponding reliability diagrams. For every evaluated classifier--calibrator pair, CORD attains zero TPCR, recovering every original top-1 prediction and hence the original accuracy. Averaged over classifiers and five splits, the direct outputs generally attain lower ECE than their uncalibrated counterparts, while TPCR and top-1 accuracy change span $0.07\%$--$31.05\%$ and $-10.06$--$+0.22$ percentage points, respectively, across dataset--calibrator pairs. Accuracy records only these changes' net effect on correctness, whereas TPCR records their total incidence; a direct output can therefore improve calibration while changing the top-1 prediction whose correctness its confidence describes.

CORD also lowers mean ECE, NLL, and Brier relative to the corresponding direct outputs in every dataset; the NLL and Brier reductions show that its gains extend beyond confidence calibration to the full probability report. All corresponding confidence intervals lie above zero (Table~\ref{tab:paired-effects}). The paired trajectories further show that CORD's gains are largest when direct-output TPCR is high and remain small when few predictions are revised. This pattern is consistent with CORD's construction, which changes only how probability mass is split between the original prediction and the remaining classes while preserving the calibrated vector's relative allocation among those classes. The fitted calibrator thus continues to shape the repaired probability report, while the original classifier determines its top-1 prediction.

\begin{table}[t]
\renewcommand{\arraystretch}{1.0}
\centering
\resizebox{\columnwidth}{!}{%
    \Huge
    \begin{tabular}{
        l
        !{\color{black!18}\vrule width 0.25pt}
        l
        !{\color{black!18}\vrule width 0.25pt}
        l
        !{\color{black!18}\vrule width 0.25pt}
        l
        !{\color{black!18}\vrule width 0.25pt}
        l
    }
        \toprule
        \textbf{Mean target}
        & \textbf{ECE (\%)}
        & \textbf{NLL}
        & \textbf{Brier}
        & \textbf{$\lvert\bar{s}_{\mathrm{eval}}-\bar{b}_{\mathrm{eval}}\rvert$} \\
        \midrule
        None (independent)
            & 5.265$^{***}$ & 1.844$^{***}$ & 0.4056$^{***}$ & 2.808$^{***}$ \\
        Local-reference
            & 4.786$^{***}$ & 1.736$^{**}$ & 0.4050$^{**}$ & 2.277$^{***}$ \\
        Pointwise-projected Base
            & 5.000$^{***}$ & 1.841$^{***}$ & 0.4052$^{***}$ & 2.388$^{***}$ \\
        \rowcolor{gray!12}
        \textbf{CORD}
            & \textbf{3.912}
            & \textbf{1.705}
            & \textbf{0.4041}
            & \textbf{0.579} \\
        \bottomrule
    \end{tabular}%
}
\caption{Mean-target ablation on CIFAR-100, averaged over classifier--calibrator pairs and five splits. All variants attain zero TPCR; the final column is in pp. Asterisks mark significant reductions by CORD relative to each marked variant after Holm adjustment ($^{*}p<0.05$, $^{**}p<0.01$, $^{***}p<0.001$).}
\label{tab:mean-target}
\end{table}

\begin{figure}[t!]
    \centering
    \includegraphics[width=\columnwidth]{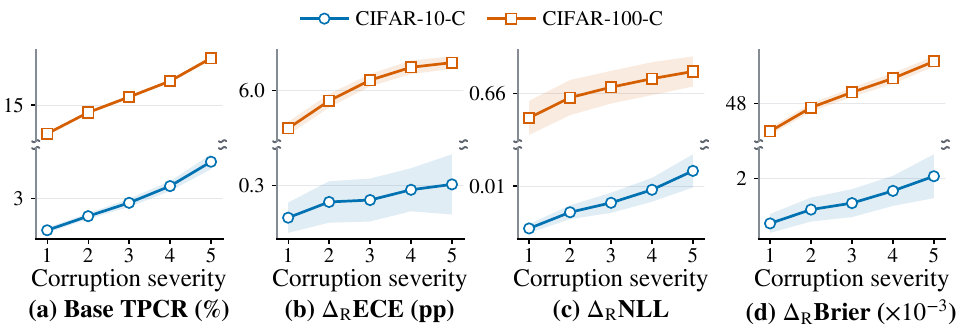}
    \caption{Base TPCR and paired metric reductions from CORD across corruption severity on CIFAR-10-C and CIFAR-100-C. For $M\in\{\mathrm{ECE},\mathrm{NLL},\mathrm{Brier}\}$, $\Delta_{\mathrm R}M=M(\mathrm{Base})-M(\mathrm{Base}+\mathrm{CORD})$, with $\Delta_{\mathrm R}M>0$ indicating improvement. Curves average over classifiers, calibrators, and 15 corruptions; shading shows pointwise 95\% CIs.}
    \label{fig:cifar-c}
\end{figure}

\subsection{Beyond Minimal Repair}
\label{sec:repair-ablations}
\noindent\textbf{Selecting the repaired mass.}
Exact prediction preservation defines a feasible family, not a unique probability report. We compare CORD with a minimal pointwise repair that leaves $q_i$ unchanged when $\arg\max_j q_{ij}=a_i$ and otherwise sets $s_i^{\mathrm{min}}=\Pi_{I_i}(b_i)$, reconstructing the vector with the calibrated conditional distribution $\alpha_i$ over classes $j\ne a_i$. Both repairs attain zero TPCR and preserve $\alpha_i$, so the comparison isolates the choice of $s_i$. CORD yields significantly lower ECE, NLL, and Brier on every dataset (Table~\ref{tab:minimal-repair}). Even with the original top-1 prediction and calibrated conditional distribution fixed, the remaining degree of freedom affects both confidence calibration and the quality of the full probability report.

\noindent\textbf{Inherited feasible mean.}
The aggregate mean constraint and its target both matter (Table~\ref{tab:mean-target}; corresponding results for CIFAR-10 and ImageNet-1K appear in the Appendix). All variants attain zero TPCR and otherwise share the intervals $I_i$, local references $g_i$, Bernoulli--KL objective, and reconstruction, yet the locally optimized independent variant performs worst in every column, whereas CORD yields the lowest value throughout. Prediction preservation fixes the original top-1 predictions and hence accuracy, not the aggregate mass assigned to them. CORD uses $g_i$ for the pointwise objective but derives the aggregate target from a single projection of the Base mean $\bar b=n^{-1}\sum_i b_i$ onto the attainable mean interval, retaining $\bar b$ when attainable and otherwise making the smallest mean change compatible with prediction preservation. The local-reference and pointwise-projected Base variants instead target $n^{-1}\sum_i g_i$ and $n^{-1}\sum_i\Pi_{I_i}(b_i)$, respectively; because the latter target is itself attainable, it cannot lie closer to $\bar b$ than CORD's target but can lie strictly farther. On held-out outputs, CORD also yields the smallest average $\lvert\bar{s}_{\mathrm{eval}}-\bar{b}_{\mathrm{eval}}\rvert$ among the variants, thereby most closely retaining the corresponding Base mean despite enforcing the aggregate equality only on the calibration split.

\begin{figure}[t!]
    \centering
    \includegraphics[width=\columnwidth]{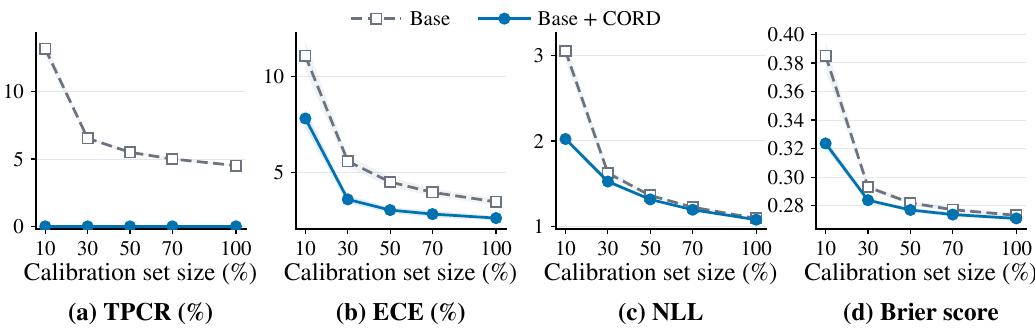}
    \caption{Calibration-size sensitivity on ImageNet-1K. Classifier- and calibrator-averaged TPCR, ECE, NLL, and Brier for Base and Base + CORD over 10\%--100\% calibration-set fractions, with shaded pointwise 95\% CIs.}
    \label{fig:calibration-size}
\end{figure}

\subsection{Robustness and Practicality}
\label{sec:robustness-practicality}

\noindent\textbf{Robustness under distribution shift.}
Under corruption, CORD maintains zero TPCR by construction and yields positive mean paired reductions in ECE, NLL, and Brier at every severity (Figure~\ref{fig:cifar-c}; the Appendix reports corresponding absolute results). We fit each calibrator and construct CORD on clean CIFAR data, holding both fixed for evaluation on CIFAR-10-C and CIFAR-100-C~\citep{hendrycks2019benchmarking}. Across severities 1--5, mean direct-output TPCR rises from $1.65\%$ to $4.58\%$ (CIFAR-10-C) and from $12.75\%$ to $18.70\%$ (CIFAR-100-C). Throughout, each repaired output retains its corresponding direct output's relative allocation among the remaining classes.

\begin{table*}[t!]
\renewcommand{\arraystretch}{0.9}
\centering
\setlength{\tabcolsep}{3.0pt}

\newcommand{\metricsep}{%
    \hspace{1pt}{\color{black!45}/}\hspace{1pt}}
\newcommand{\improved}{%
    \,{\color{green!50!black}\ensuremath{\blacktriangledown}}}
\newcommand{\degraded}{%
    \,{\color{red!75!black}\ensuremath{\blacktriangle}}}

\resizebox{\linewidth}{!}{%
    \begin{tabular}{
        l
        !{\color{black!18}\vrule width 0.25pt}
        c
        !{\color{black!18}\vrule width 0.25pt}
        c
        !{\color{black!18}\vrule width 0.25pt}
        c
        !{\color{black!18}\vrule width 0.25pt}
        c
        !{\color{black!18}\vrule width 0.25pt}
        c
        !{\color{black!18}\vrule width 0.25pt}
        c
        !{\color{black!18}\vrule width 0.25pt}
        c
        !{\color{black!18}\vrule width 0.25pt}
        >{\columncolor{gray!12}}c
    }
        \toprule
        \textbf{Classifier}
        & \textbf{Uncal.}
        & \textbf{TS}
        & \textbf{IRM}
        & \textbf{AdaTS}
        & \textbf{TS--TvA}
        & \textbf{MCCT-I}
        & \textbf{Base$^{\dagger}$}
        & \shortstack{\textbf{Base$^{\dagger}$ + CORD}} \\
        \midrule

        VGG-16-BN
        & 4.79\metricsep4.76\metricsep4.36
        & 1.54\metricsep2.52\metricsep1.95
        & 1.62\metricsep1.42\metricsep1.64
        & 1.47\metricsep1.39\metricsep1.66
        & 1.73\metricsep2.55\metricsep2.02
        & 1.74\metricsep2.88\metricsep2.16
        & 1.46\metricsep1.36\metricsep1.63
        & \textbf{1.41}\improved
          \metricsep
          \textbf{1.30}\improved
          \metricsep
          \textbf{1.61}\improved \\

        ResNet-56
        & 3.75\metricsep3.72\metricsep3.54
        & 0.94\metricsep1.24\metricsep1.23
        & 1.19\metricsep1.05\metricsep1.32
        & 1.59\metricsep1.43\metricsep1.66
        & 1.02\metricsep1.15\metricsep1.24
        & 1.02\metricsep1.22\metricsep1.21
        & 0.94\metricsep0.88\metricsep1.20
        & \textbf{0.92}\improved
          \metricsep
          \textbf{0.88}\improved
          \metricsep
          \textbf{1.13}\improved \\

        WRN-26-10
        & 3.36\metricsep3.35\metricsep2.88
        & 1.15\metricsep1.77\metricsep1.40
        & 1.19\metricsep0.90\metricsep1.27
        & 1.34\metricsep1.32\metricsep1.37
        & 1.16\metricsep1.78\metricsep1.45
        & 1.22\metricsep1.96\metricsep1.53
        & 1.13\metricsep0.91\metricsep1.21
        & \textbf{1.02}\improved
          \metricsep
          \textbf{0.88}\improved
          \metricsep
          \textbf{1.16}\improved \\

        DenseNet-121
        & 4.66\metricsep4.65\metricsep3.38
        & \textbf{1.24}\metricsep2.38\metricsep1.70
        & 1.25\metricsep\textbf{1.10}\metricsep1.31
        & 2.06\metricsep2.05\metricsep2.03
        & 1.31\metricsep2.38\metricsep1.72
        & 1.68\metricsep2.94\metricsep2.03
        & 1.32\metricsep1.19\metricsep1.37
        & 1.30\improved
          \metricsep
          1.14\improved
          \metricsep
          \textbf{1.27}\improved \\

        MobileNetV2-1.4$\times$
        & 3.73\metricsep3.68\metricsep3.58
        & 1.25\metricsep1.40\metricsep1.41
        & 1.38\metricsep1.28\metricsep1.41
        & 1.55\metricsep1.54\metricsep1.63
        & 1.22\metricsep1.33\metricsep1.38
        & 1.30\metricsep1.44\metricsep1.46
        & 1.20\metricsep1.23\metricsep1.36
        & \textbf{1.22}\degraded
          \metricsep
          \textbf{1.18}\improved
          \metricsep
          \textbf{1.35}\improved \\

        ShuffleNetV2-1.0$\times$
        & 4.03\metricsep3.97\metricsep3.86
        & 1.07\metricsep1.22\metricsep1.31
        & 1.46\metricsep1.38\metricsep1.48
        & 1.39\metricsep1.38\metricsep1.48
        & 1.02\metricsep1.15\metricsep1.29
        & \textbf{1.00}\metricsep1.09\metricsep1.29
        & 1.06\metricsep1.01\metricsep1.27
        & 1.04\improved
          \metricsep
          \textbf{1.05}\degraded
          \metricsep
          \textbf{1.25}\improved \\

        RepVGG-A1
        & 3.49\metricsep3.44\metricsep3.31
        & 1.14\metricsep1.46\metricsep1.30
        & 1.30\metricsep1.08\metricsep1.34
        & 1.34\metricsep1.42\metricsep1.49
        & 1.12\metricsep1.32\metricsep1.28
        & 1.13\metricsep1.33\metricsep1.28
        & 1.13\metricsep0.98\metricsep1.27
        & \textbf{1.10}\improved
          \metricsep
          \textbf{0.95}\improved
          \metricsep
          \textbf{1.23}\improved \\
        \bottomrule
    \end{tabular}%
}
\caption{Fit-time versus post-fit prediction preservation on CIFAR-10. Five-split means are reported as ECE / $\mathrm{ECE}_{\mathrm{EM}}$ / smECE (\%). Per classifier and metric, Base$^{\dagger}$ denotes the prediction-non-preserving Base with the lowest direct-output value, Base$^{\dagger}$+CORD its paired repair, and bold the lowest prediction-preserving value. Fit-time baselines and CORD repairs have zero TPCR; ${\color{green!50!black}\blacktriangledown}$ and ${\color{red!75!black}\blacktriangle}$ denote paired improvement and degradation from Base$^{\dagger}$, respectively.}
\label{tab:fit-time-preservation}
\end{table*}

\begin{table}[t]
\renewcommand{\arraystretch}{0.7}
\centering
\resizebox{\columnwidth}{!}{%
    \begin{tabular}{
        l
        !{\color{black!18}\vrule width 0.25pt}
        c
        !{\color{black!18}\vrule width 0.25pt}
        c
    }
        \toprule
        \textbf{Dataset} & \textbf{Construction (s)} & \textbf{Repair ($\mu$s/example)} \\
        \midrule
        CIFAR-10
        & 0.268\,[0.267,0.268]
        & 0.978\,[0.977,0.983] \\

        CIFAR-100
        & 0.280\,[0.280,0.281]
        & 1.996\,[1.947,2.089] \\

        ImageNet-1K
        & 1.384\,[1.383,1.385]
        & 25.64\,[25.45,25.84] \\
        \bottomrule
    \end{tabular}%
}
\caption{Runtime overhead of CORD. Single-threaded CPU wall-clock times for in-memory \texttt{float64} output pairs $(p^0,q)$, with $q$ from IROvA-TS. Entries are post-warm-up medians [$Q_1$, $Q_3$] over 9 construction and 21 vectorized full-split repair runs; repair time is amortized per example.}
\label{tab:runtime-overhead}
\end{table}

\noindent\textbf{Calibration-size sensitivity.}
On ImageNet-1K, CORD's mean paired reductions in ECE, NLL, and Brier are largest with limited calibration data yet persist across calibration-set fractions of 10\%--100\% (Figure~\ref{fig:calibration-size}; Appendix provides corresponding CIFAR-10/100 curves). At each fraction, we refit each calibrator and construct CORD on the same nested subset, holding the evaluation split fixed. Mean direct-output TPCR declines with calibration-set size but remains nonzero at 100\%, whereas CORD maintains zero TPCR throughout; ECE, NLL, and Brier decrease for both reports. Together, the corruption and calibration-size experiments extend the TPCR-dependent pattern in Figure~\ref{fig:main-evaluation}; paired gains widen as direct-output TPCR rises with corruption severity and narrow as it falls with additional calibration data.

\noindent\textbf{Runtime and memory.} CORD makes post-fit prediction preservation a lightweight addition to a fitted calibration pipeline without altering calibrator fitting or requiring an auxiliary model or further classifier or calibrator evaluation. Construction occurs once per fitted Base; amortized repair remains inexpensive even at ImageNet-1K scale (Table~\ref{tab:runtime-overhead}); at deployment, CORD's only data-dependent persistent state is the shared scalar $\eta^\star$ (8 bytes).

\subsection{Comparison with Prediction-Preserving Methods}
The calibration benefit of fitting without a prediction-preservation constraint persists when prediction preservation is imposed only afterward, even in the low-TPCR setting of CIFAR-10 (Table~\ref{tab:fit-time-preservation}). Its Base direct outputs exhibit the lowest TPCR among the evaluated datasets and hence the smallest observed departure from prediction preservation, yielding the most conservative decision-level comparison with the fit-time baselines. To test whether this finding depends on the fixed-width binning of standard ECE, we additionally report equal-mass $\mathrm{ECE}_{\mathrm{EM}}$~\citep{roelofs2022mitigating} and binning-free, kernel-smoothed smECE~\citep{blasiok2023smooth}. Despite revising fewer than $1\%$ of the original top-1 predictions in every case, the Base$^{\dagger}$ direct outputs often attain lower values than the fit-time baselines. Applied to those same outputs without altering the fitted calibrators, CORD lowers nearly every paired value while restoring zero TPCR; within this comparison, the repaired outputs remain broadly competitive with the fit-time baselines across all three calibration measures. The Appendix provides the NLL and Brier results and complete comparisons across all Base calibrators.

\section{Conclusion}
CORD relocates exact top-1 prediction preservation from calibrator fitting to post-fit output repair, opening post-fit preservation as a distinct calibration design space. Preservation remains a deployment choice: the unchanged direct output can be used when prediction changes are acceptable, whereas the repaired output applies when the original top-1 must be retained. Yet whether a change improves or degrades correctness is knowable only retrospectively from the label. As the TPCR analysis and its correctness-based decomposition show, calibrator-induced changes include improvements, degradations, and switches between incorrect classes, while accuracy records only their net balance. This calls into question whether prediction revision should be accepted merely as the price of greater expressiveness. The broader design problem is instead whether preservation is required, when it should be imposed, and how a probability report should be selected once exact preservation defines a feasible family. CORD establishes one lightweight post-fit route through this space without an auxiliary supervised map or a user- or validation-tuned hyperparameter. Richer post-fit objectives, alternative selection principles, and extensions beyond top-1 preservation remain open directions.

\bibliography{aaai2027}

@inproceedings{guo2017calibration,
  title={On calibration of modern neural networks},
  author={Guo, Chuan and Pleiss, Geoff and Sun, Yu and Weinberger, Kilian Q},
  booktitle={International conference on machine learning},
  pages={1321--1330},
  year={2017},
  organization={PMLR}
}

@article{kull2019beyond,
  title={Beyond temperature scaling: Obtaining well-calibrated multi-class probabilities with dirichlet calibration},
  author={Kull, Meelis and Perello Nieto, Miquel and K{\"a}ngsepp, Markus and Silva Filho, Telmo and Song, Hao and Flach, Peter},
  journal={Advances in neural information processing systems},
  volume={32},
  year={2019}
}

@article{zhang2020mix,
  title={Mix-n-match: Ensemble and compositional methods for uncertainty calibration in deep learning},
  author={Zhang, Jize and Kailkhura, Bhavya and Han, T},
  journal={arXiv preprint arXiv:2003.07329},
  year={2020}
}

@article{rahimi2020intra,
  title={Intra order-preserving functions for calibration of multi-class neural networks},
  author={Rahimi, Amir and Shaban, Amirreza and Cheng, Ching-An and Hartley, Richard and Boots, Byron},
  journal={Advances in neural information processing systems},
  volume={33},
  pages={13456--13467},
  year={2020}
}

@inproceedings{ma2021improving,
  title={Improving uncertainty calibration of deep neural networks via truth discovery and geometric optimization},
  author={Ma, Chunwei and Huang, Ziyun and Xian, Jiayi and Gao, Mingchen and Xu, Jinhui},
  booktitle={Uncertainty in Artificial Intelligence},
  pages={75--85},
  year={2021},
  organization={PMLR}
}

@inproceedings{ma2021meta,
  title={Meta-cal: Well-controlled post-hoc calibration by ranking},
  author={Ma, Xingchen and Blaschko, Matthew B},
  booktitle={International Conference on Machine Learning},
  pages={7235--7245},
  year={2021},
  organization={PMLR}
}

@inproceedings{gupta2022top,
  title={Top-label calibration and multiclass-to-binary reductions},
  author={Gupta, Chirag and Ramdas, Aaditya},
  booktitle={International Conference on Learning Representations},
  year={2022},
  organization={OpenReview}
}

@inproceedings{tomani2022parameterized,
  title={Parameterized temperature scaling for boosting the expressive power in post-hoc uncertainty calibration},
  author={Tomani, Christian and Cremers, Daniel and Buettner, Florian},
  booktitle={European conference on computer vision},
  pages={555--569},
  year={2022},
  organization={Springer}
}

@inproceedings{esaki2024accuracy,
  title={Accuracy-preserving calibration via statistical modeling on probability simplex},
  author={Esaki, Yasushi and Nakamura, Akihiro and Kawano, Keisuke and Tokuhisa, Ryoko and Kutsuna, Takuro},
  booktitle={International Conference on Artificial Intelligence and Statistics},
  pages={1666--1674},
  year={2024},
  organization={PMLR}
}

@article{le2024confidence,
  title={Confidence calibration of classifiers with many classes},
  author={Le Coz, Adrien and Herbin, St{\'e}phane and Adjed, Faouzi},
  journal={Advances in Neural Information Processing Systems},
  volume={37},
  pages={77686--77725},
  year={2024}
}

@article{berta2025structured,
  title={Structured Matrix Scaling for Multi-Class Calibration},
  author={Berta, Eug{\`e}ne and Holzm{\"u}ller, David and Jordan, Michael I and Bach, Francis},
  journal={arXiv preprint arXiv:2511.03685},
  year={2025}
}

@article{kirscher2026rethinking,
  title={Rethinking Post-Hoc Calibration in Semantic Segmentation},
  author={Kirscher, Tristan and Kahl, Kim-Celine and Kovacs, Balint and Rokuss, Maximilian R and Maier-Hein, Klaus and Coubez, Xavier and Meyer, Philippe and Faisan, Sylvain},
  journal={arXiv preprint arXiv:2607.01902},
  year={2026}
}

@article{krizhevsky2009learning,
  title={Learning multiple layers of features from tiny images},
  author={Krizhevsky, Alex and Hinton, Geoffrey and others},
  year={2009},
  publisher={Toronto, ON, Canada}
}

@inproceedings{deng2009imagenet,
  title={Imagenet: A large-scale hierarchical image database},
  author={Deng, Jia and Dong, Wei and Socher, Richard and Li, Li-Jia and Li, Kai and Fei-Fei, Li},
  booktitle={2009 IEEE conference on computer vision and pattern recognition},
  pages={248--255},
  year={2009},
  organization={Ieee}
}

@inproceedings{simonyan2015very,
  title={Very deep convolutional networks for large-scale image recognition},
  author={Simonyan, Karen and Zisserman, Andrew},
  booktitle={3rd international conference on learning representations (ICLR 2015)},
  year={2015},
  organization={Computational and Biological Learning Society}
}

@inproceedings{he2016deep,
  title={Deep residual learning for image recognition},
  author={He, Kaiming and Zhang, Xiangyu and Ren, Shaoqing and Sun, Jian},
  booktitle={Proceedings of the IEEE conference on computer vision and pattern recognition},
  pages={770--778},
  year={2016}
}

@article{zagoruyko2016wide,
  title={Wide residual networks},
  author={Zagoruyko, Sergey and Komodakis, Nikos},
  journal={arXiv preprint arXiv:1605.07146},
  year={2016}
}

@inproceedings{huang2017densely,
  title={Densely connected convolutional networks},
  author={Huang, Gao and Liu, Zhuang and Van Der Maaten, Laurens and Weinberger, Kilian Q},
  booktitle={Proceedings of the IEEE conference on computer vision and pattern recognition},
  pages={4700--4708},
  year={2017}
}

@inproceedings{sandler2018mobilenetv2,
  title={Mobilenetv2: Inverted residuals and linear bottlenecks},
  author={Sandler, Mark and Howard, Andrew and Zhu, Menglong and Zhmoginov, Andrey and Chen, Liang-Chieh},
  booktitle={Proceedings of the IEEE conference on computer vision and pattern recognition},
  pages={4510--4520},
  year={2018}
}

@inproceedings{ma2018shufflenet,
  title={Shufflenet v2: Practical guidelines for efficient cnn architecture design},
  author={Ma, Ningning and Zhang, Xiangyu and Zheng, Hai-Tao and Sun, Jian},
  booktitle={Proceedings of the European conference on computer vision (ECCV)},
  pages={116--131},
  year={2018}
}

@inproceedings{ding2021repvgg,
  title={Repvgg: Making vgg-style convnets great again},
  author={Ding, Xiaohan and Zhang, Xiangyu and Ma, Ningning and Han, Jungong and Ding, Guiguang and Sun, Jian},
  booktitle={Proceedings of the IEEE/CVF conference on computer vision and pattern recognition},
  pages={13733--13742},
  year={2021}
}

@article{dosovitskiy2020image,
  title={An image is worth 16x16 words: Transformers for image recognition at scale},
  author={Dosovitskiy, Alexey and Beyer, Lucas and Kolesnikov, Alexander and Weissenborn, Dirk and Zhai, Xiaohua and Unterthiner, Thomas and Dehghani, Mostafa and Minderer, Matthias and Heigold, Georg and Gelly, Sylvain and others},
  journal={arXiv preprint arXiv:2010.11929},
  year={2020}
}

@inproceedings{liu2021swin,
  title={Swin transformer: Hierarchical vision transformer using shifted windows},
  author={Liu, Ze and Lin, Yutong and Cao, Yue and Hu, Han and Wei, Yixuan and Zhang, Zheng and Lin, Stephen and Guo, Baining},
  booktitle={Proceedings of the IEEE/CVF international conference on computer vision},
  pages={10012--10022},
  year={2021}
}

@inproceedings{liu2022convnet,
  title={A convnet for the 2020s},
  author={Liu, Zhuang and Mao, Hanzi and Wu, Chao-Yuan and Feichtenhofer, Christoph and Darrell, Trevor and Xie, Saining},
  booktitle={Proceedings of the IEEE/CVF conference on computer vision and pattern recognition},
  pages={11976--11986},
  year={2022}
}

@article{glenn1950verification,
  title={Verification of forecasts expressed in terms of probability},
  author={Glenn, W Brier and others},
  journal={Monthly weather review},
  volume={78},
  number={1},
  pages={1--3},
  year={1950},
  publisher={War Department, Office of the Chief Signal Officer}
}

@inproceedings{chidambaram2025reassessing,
  title={Reassessing how to compare and improve the calibration of machine learning models},
  author={Chidambaram, Muthu and Ge, Rong},
  booktitle={International Conference on Learning Representations},
  volume={2025},
  pages={61542--61570},
  year={2025}
}

@article{zhang2025instance,
  title={Instance-wise monotonic calibration by constrained transformation},
  author={Zhang, Yunrui and Batista, Gustavo and Kanhere, Salil S},
  journal={arXiv preprint arXiv:2507.06516},
  year={2025}
}

@inproceedings{joy2023sample,
  title={Sample-dependent adaptive temperature scaling for improved calibration},
  author={Joy, Tom and Pinto, Francesco and Lim, Ser-Nam and Torr, Philip HS and Dokania, Puneet K},
  booktitle={Proceedings of the AAAI Conference on Artificial Intelligence},
  volume={37},
  number={12},
  pages={14919--14926},
  year={2023}
}

@article{hendrycks2019benchmarking,
  title={Benchmarking neural network robustness to common corruptions and perturbations},
  author={Hendrycks, Dan and Dietterich, Thomas},
  journal={arXiv preprint arXiv:1903.12261},
  year={2019}
}

@inproceedings{roelofs2022mitigating,
  title={Mitigating bias in calibration error estimation},
  author={Roelofs, Rebecca and Cain, Nicholas and Shlens, Jonathon and Mozer, Michael C},
  booktitle={International Conference on Artificial Intelligence and Statistics},
  pages={4036--4054},
  year={2022},
  organization={PMLR}
}

@inproceedings{blasiok2023smooth,
  title={Smooth ECE: Principled reliability diagrams via kernel smoothing},
  author={Blasiok, Jaroslaw and Nakkiran, Preetum},
  booktitle={The Twelfth International Conference on Learning Representations},
  year={2023}
}

@article{chakraborty2026quantile,
  title={Quantile Adaptive Temperature Scaling for Confidence Calibration},
  author={Chakraborty, Omprakash and Fillioux, Leo and Ayed, Ismail Ben and Dolz, Jose},
  journal={arXiv preprint arXiv:2606.21749},
  year={2026}
}

@article{atarashi2025box,
  title={Box-Constrained Softmax Function and Its Application for Post-Hoc Calibration},
  author={Atarashi, Kyohei and Oyama, Satoshi and Arai, Hiromi and Kashima, Hisashi},
  journal={arXiv preprint arXiv:2506.10572},
  year={2025}
}

@inproceedings{panchenko2022class,
  title={Class-wise and reduced calibration methods},
  author={Panchenko, Michael and Benmerzoug, Anes and de Benito Delgado, Miguel},
  booktitle={2022 21st IEEE International Conference on Machine Learning and Applications (ICMLA)},
  pages={1093--1100},
  year={2022},
  organization={IEEE}
}
\clearpage
\onecolumn
\appendix

\begin{center}
    {\LARGE\bfseries
    Let Confidence Change, Not the Prediction:
    Prediction-Preserving Repair for Post-hoc Calibration}\\[8pt]
    {\Large\bfseries \textemdash\ Appendix \textemdash}
\end{center}

\section{Derivations and Numerical Details for CORD}
\label{app:cord-details}

This appendix supplies the derivations and fixed numerical conventions deferred from Section~3 (\emph{CORD: Post-Fit Prediction Preservation}) of the main paper.

\subsection{Boundary Outputs and Fixed Numerical Conventions}
\label{app:boundary-stabilization}

\noindent\textbf{Tie rule and numerical constants.}
Every exact tie is resolved by selecting the smallest maximizing class index, and the same rule is used for the original, direct, and repaired outputs. We use $\epsilon_{\mathrm{num}}=10^{-12}$ and $\delta_{\mathrm{stab}}=10^{-10}$ throughout.

\noindent\textbf{Order-preserving stabilization.}
For a vector $u$ on the boundary of $\Delta^{K-1}$, CORD forms the internal copy
\begin{equation}
\mathcal S_{\delta_{\mathrm{stab}}}(u)
=
(1-\delta_{\mathrm{stab}})u
+
\frac{\delta_{\mathrm{stab}}}{K}\mathbf 1.
\label{eq:app-stabilization}
\end{equation}
A vector already in $\operatorname{ri}\Delta^{K-1}$ is left unchanged, and $p^0$ and $q$ are stabilized independently when needed. Since
$[\mathcal S_{\delta_{\mathrm{stab}}}(u)]_j-[\mathcal S_{\delta_{\mathrm{stab}}}(u)]_k
=(1-\delta_{\mathrm{stab}})(u_j-u_k)$,
stabilization preserves every pairwise order relation and the complete maximizer set, while producing a normalized vector with all coordinates in $(0,1)$.
Consequently, the fixed tie rule selects the same original prediction from the supplied and stabilized copies of $p^0$.

Stabilization is internal to CORD; the supplied outputs remain unchanged. When needed, $p^0$ and $q$ in Section~3 of the main paper denote the stabilized internal copies. Preservation of the calibrated conditional distribution over the remaining classes in Proposition~1 of the main paper is therefore exact with respect to the internal copy of $q$. This copy equals the supplied direct output when the latter already lies in $\operatorname{ri}\Delta^{K-1}$; otherwise,
$\|\mathcal S_{\delta_{\mathrm{stab}}}(q)-q\|_1
=\delta_{\mathrm{stab}}\|K^{-1}\mathbf1-q\|_1
\le 2\delta_{\mathrm{stab}}$.
For numerical evaluation, CORD computes the denominator $1-q_a$ in the normalization over classes $j\ne a$ as the algebraically equivalent sum $\sum_{j\ne a}q_j$.

\subsection{KL Characterization of the Repair Family}
\label{app:kl-characterization}

This subsection derives Equation~(3) of the main paper. Fix an input and omit its index. Let $a$ be the originally predicted class, $b=q_a$, and $\alpha_a=0$, with $\alpha_j=q_j/(1-b)$ for $j\ne a$. For a fixed $s\in(0,1)$, consider any $u\in\Delta^{K-1}$ satisfying $u_a=s$, and define its conditional distribution over the remaining classes by $\alpha_a^u=0$ and $\alpha_j^u=u_j/(1-s)$ for $j\ne a$. All KL divergences below use the usual extended-value convention, and $D_{\mathrm{KL}}(\alpha\|\alpha^u)$ is taken over $j\ne a$. Substituting $q_j=(1-b)\alpha_j$ and $u_j=(1-s)\alpha_j^u$ gives
\begin{align}
D_{\mathrm{KL}}(q\|u)
&=
b\log\frac{b}{s}
+
\sum_{j\ne a}(1-b)\alpha_j
\log\frac{(1-b)\alpha_j}{(1-s)\alpha_j^u}
\notag\\
&=
d_{\mathrm B}(b\|s)
+
(1-b)D_{\mathrm{KL}}(\alpha\|\alpha^u).
\label{eq:app-kl-chain}
\end{align}
For fixed $s$, the first term is constant, while the second is nonnegative and vanishes only when $\alpha^u=\alpha$. The reconstruction $u=se_a+(1-s)\alpha$ in Equation~(2) of the main paper is therefore the unique minimizer of $D_{\mathrm{KL}}(q\|\cdot)$ on the simplex slice $\{u\in\Delta^{K-1}:u_a=s\}$. Once the mass assigned to the original prediction is fixed, the calibrated conditional distribution over the remaining classes is thus inherited rather than re-estimated.

\subsection{Shared-Scalar Characterization}
\label{app:shared-scalar-derivation}

This subsection establishes the characterization in Equation~(8) of the main paper. Write $I_i=[s_i^-,s_i^+]$ and $f_i(s)=d_{\mathrm B}(g_i\|s)$. Because $g_i,s_i^-,s_i^+\in(0,1)$,
\begin{equation}
f_i'(s)=\frac{s-g_i}{s(1-s)},
\qquad
f_i''(s)=\frac{g_i}{s^2}
+
\frac{1-g_i}{(1-s)^2}>0.
\label{eq:app-bernoulli-derivatives}
\end{equation}
The objective in Equation~(7) of the main paper is therefore strictly convex. Moreover,
$s_i^+-s_i^-=(1+\rho_i)^{-1}-2\epsilon_{\mathrm{num}}
\ge 1/2-2\epsilon_{\mathrm{num}}>0$,
and the attainable means are exactly
$[\bar s^-,\bar s^+]$, where
$\bar s^-=n^{-1}\sum_i s_i^-$ and
$\bar s^+=n^{-1}\sum_i s_i^+$.
Because Equation~(6) of the main paper places $\mu$ in this interval, the feasible set is nonempty and compact. The coordinated repair consequently has a unique minimizer.

\noindent\textbf{Lagrangian and KKT reduction.}
Rewrite the aggregate equality in Equation~(7) of the main paper as
$\sum_{i=1}^n s_i=n\mu$. Introduce $\eta\in\mathbb R$ for this
equality and $\nu_i^-,\nu_i^+\ge0$ for the lower and upper interval
bounds, respectively. Using the sign convention in
Equation~(8) of the main paper, the Lagrangian is
\begin{equation}
\begin{aligned}
\mathcal L(s,\eta,\nu^-,\nu^+)
={}&
\sum_{i=1}^n f_i(s_i)
-\eta\left(\sum_{i=1}^n s_i-n\mu\right)\\
&+
\sum_{i=1}^n\nu_i^-(s_i^--s_i)
+
\sum_{i=1}^n\nu_i^+(s_i-s_i^+).
\end{aligned}
\label{eq:app-lagrangian}
\end{equation}
Because the feasible set is a nonempty polyhedron and the objective
is convex and differentiable on a neighborhood of that set, the KKT
conditions below are necessary and sufficient for every
$\mu\in[\bar s^-,\bar s^+]$. In addition to primal and dual
feasibility, they require
\begin{equation}
\begin{aligned}
f_i'(s_i)-\eta-\nu_i^-+\nu_i^+&=0,\\
\nu_i^-(s_i-s_i^-)&=0,\qquad
\nu_i^+(s_i^+-s_i)=0,
\end{aligned}
\qquad i=1,\ldots,n.
\label{eq:app-kkt}
\end{equation}
Here $\eta$ is the only multiplier shared across repairs; the bound
multipliers are local to individual $s_i$ and are eliminated
by the interval-constrained minimization in Equation~(8) of the main
paper. Let $h_g:(0,1)\to\mathbb R$ be given by
$h_g(s)=(s-g)/\{s(1-s)\}$ and write $H(g,\eta)=h_g^{-1}(\eta)$.
Equation~\eqref{eq:app-bernoulli-derivatives}
makes $h_g$ continuous and strictly increasing from $-\infty$ to
$+\infty$. Stationarity and complementary slackness give
$\eta\le h_g(s^-)$ at the lower endpoint, $h_g(s)=\eta$ in the
interval interior, and $\eta\ge h_g(s^+)$ at the upper endpoint.
Consequently, the unique interval-constrained response is
\begin{equation}
\psi(\eta;g,[s^-,s^+])
=
\begin{cases}
s^-, & \eta\le h_g(s^-),\\
H(g,\eta), & h_g(s^-)<\eta<h_g(s^+),\\
s^+, & \eta\ge h_g(s^+).
\end{cases}
\label{eq:app-clipped-response}
\end{equation}
Thus $\psi$ is continuous and nondecreasing in $\eta$. Define
\begin{equation}
\bar\psi(\eta)=\frac1n\sum_{i=1}^n\psi(\eta;g_i,I_i),
\qquad
E_\mu=\{\eta\in\mathbb R:\bar\psi(\eta)=\mu\}.
\label{eq:app-mean-response}
\end{equation}
The mean response is continuous and nondecreasing, equals $\bar s^-$ and $\bar s^+$ beyond finite lower and upper thresholds, respectively, and hence attains every value in $[\bar s^-,\bar s^+]$. It follows that $E_\mu$ is a nonempty closed interval. It is bounded when $\mu\in(\bar s^-,\bar s^+)$, equals
$(-\infty,\min_i h_{g_i}(s_i^-)]$ when $\mu=\bar s^-$, and equals
$[\max_i h_{g_i}(s_i^+),\infty)$ when $\mu=\bar s^+$. For every
$\eta\in E_\mu$, the vector with coordinates
$s_i=\psi(\eta;g_i,I_i)$ satisfies the KKT conditions by taking
$\nu_i^-=h_{g_i}(s_i^-)-\eta$ at a lower endpoint,
$\nu_i^+=\eta-h_{g_i}(s_i^+)$ at an upper endpoint, and the inactive
bound multipliers equal to zero. By uniqueness, this vector is the
calibration-split repair. CORD makes the corresponding rule unique
for new inputs by selecting
\begin{equation}
\eta^\star=\Pi_{E_\mu}(0),
\label{eq:app-nearest-zero-multiplier}
\end{equation}
which exists and is unique because $E_\mu$ is a nonempty closed interval.

\subsection{Closed-Form Response and Numerical Solver}
\label{app:shared-scalar-solver}

\noindent\textbf{Cancellation-safe response.}
The interior stationarity equation $h_g(s)=\eta$ is quadratic.
A cancellation-safe form is obtained by defining
\begin{equation}
\begin{aligned}
R(g,\eta)
&=
\sqrt{(\eta+2g-1)^2+4g(1-g)}\\
&=
\operatorname{hypot}\!\left(
\eta+2g-1,\,
2\sqrt{g(1-g)}
\right).
\end{aligned}
\label{eq:app-stable-discriminant}
\end{equation}
The corresponding root in $(0,1)$ is
\begin{equation}
H(g,\eta)
=
\begin{cases}
\displaystyle
\frac{2g}{1-\eta+R(g,\eta)},
& \eta\le1,\\[3mm]
\displaystyle
\frac{\eta-1+R(g,\eta)}{2\eta},
& \eta>1.
\end{cases}
\label{eq:app-stable-response}
\end{equation}
At $\eta=0$, $H(g,0)=g$, and projecting $H(g,\eta)$ onto
$[s^-,s^+]$ gives the constrained response in
Equation~\eqref{eq:app-clipped-response}.

\noindent\textbf{Finite bracket.}
Let
\begin{equation}
\begin{aligned}
\eta_{\mathrm L}&=\min_i h_{g_i}(s_i^-),
&
\eta_{\mathrm U}&=\max_i h_{g_i}(s_i^+),\\
L&=\min\{0,\eta_{\mathrm L}\},
&
U&=\max\{0,\eta_{\mathrm U}\}.
\end{aligned}
\label{eq:app-finite-bracket}
\end{equation}
All responses equal their lower endpoints at $L$ and their upper endpoints at $U$, so
$\bar\psi(L)=\bar s^-\le\mu\le\bar s^+=\bar\psi(U)$.
The interval $[L,U]$ is therefore a finite bracket containing both zero and the nearest-zero multiplier $\eta^\star$.

\noindent\textbf{Plateau-aware bisection.}
When $E_\mu$ is non-singleton, an ordinary root finder can return an arbitrary point on the corresponding plateau. The nearest-zero convention is equivalently
\begin{equation}
\eta^\star
=
\begin{cases}
0, & \bar\psi(0)=\mu,\\[2pt]
\inf\{\eta\in[0,U]:\bar\psi(\eta)\ge\mu\},
& \bar\psi(0)<\mu,\\[2pt]
\sup\{\eta\in[L,0]:\bar\psi(\eta)\le\mu\},
& \bar\psi(0)>\mu.
\end{cases}
\label{eq:app-directional-inverse}
\end{equation}
These generalized inverses equal the projection in Equation~\eqref{eq:app-nearest-zero-multiplier}. CORD obtains the applicable value by bisection on the corresponding one-sided bracket, using the upper endpoint in the second case and the lower endpoint in the third as the target-side endpoint; equality at a midpoint updates this endpoint. In binary64 arithmetic, iteration continues until the bracket admits no distinct representable midpoint, and the target-side endpoint is returned.

\subsection{Proof of the Structural Guarantees}
\label{app:proof-cord-guarantees}

\begin{proof}[Proof of Proposition~1 of the main paper]
Existence and uniqueness of the calibration-split repair and the nearest-zero scalar follow from the preceding shared-scalar characterization. Fix an arbitrary input and suppress its dependence on $x$. Because $s\in I\subset(0,1)$ and $\alpha$ is a distribution over classes $j\ne a$, Equation~(2) of the main paper gives $\widetilde p\in\Delta^{K-1}$.

For every $j\ne a$, $\widetilde p_j=(1-s)\alpha_j\le(1-s)\rho$, whereas
\begin{equation}
s-(1-s)\rho
=(1+\rho)\left(
s-\frac{\rho}{1+\rho}
\right)
\ge
(1+\rho)\epsilon_{\mathrm{num}}>0.
\label{eq:app-strict-margin}
\end{equation}
Thus $a$ is uniquely top-ranked, establishing prediction preservation independently of the tie rule. Moreover, for every $j\ne a$,
\begin{equation}
\frac{\widetilde p_j}{1-\widetilde p_a}
=
\frac{(1-s)\alpha_j}{1-s}
=
\alpha_j
=
\frac{q_j}{1-q_a},
\label{eq:app-conditional-allocation}
\end{equation}
which proves preservation of the calibrated conditional distribution over the remaining classes. Finally, $n^{-1}\sum_i s_i^\star=\mu$ is the equality constraint in Equation~(7) of the main paper.
\end{proof}

\noindent\textbf{Identity characterization.}
On the calibration split, CORD returns $q_i$ for every $i$ if and only if $b_i\in I_i$ for every $i$. Under this condition, $q_i$ already has $a_i$ as its unique top-ranked class, so $g_i=b_i$; the vector $(b_1,\ldots,b_n)$ is feasible, Equation~(6) of the main paper gives $\mu=n^{-1}\sum_i b_i$, and $\psi(0;b_i,I_i)=b_i$. Hence $0\in E_\mu$, the nearest-zero rule selects $\eta^\star=0$, and Equation~(2) of the main paper recovers $q_i$. Conversely, $\widetilde p_i=q_i$ requires $s_i=b_i\in I_i$. With the resulting scalar $\eta^\star=0$ fixed, CORD returns $q$ for a new input if and only if $b=q_a\in I$. When boundary stabilization is used, identity is exact with respect to the stabilized internal copy of $q$, whose distance from the supplied output is bounded above by $2\delta_{\mathrm{stab}}$.
\clearpage
\section{Sensitivity to Local-Reference Weighting}
\label{app:design-sensitivity}

In the changed-prediction branch, the local reference in Equation~(5) of the main paper arises from assigning equal weight to the Bernoulli--KL departures from $b_i$ and $p^0_{i,a_i}$ to a candidate $s$. To examine variations of this weighting without changing any other component of CORD, consider the family indexed by $\lambda\in[0,1]$:
\begin{equation}
g_i^{(\lambda)}
=
\begin{cases}
b_i,
& \arg\max_j q_{ij}=a_i,\\[2pt]
\lambda b_i+(1-\lambda)p^0_{i,a_i},
& \arg\max_j q_{ij}\ne a_i.
\end{cases}
\label{eq:app-weighted-reference}
\end{equation}
Here $\lambda$ weights the probability mass $b_i$ that the direct output assigns to the original prediction; CORD uses the equal-weight setting $\lambda=1/2$. When $\arg\max_j q_{ij}\ne a_i$,
\begin{align}
&\lambda d_{\mathrm B}(b_i\|s)
+(1-\lambda)d_{\mathrm B}(p^0_{i,a_i}\|s)
\notag\\
&\quad=
d_{\mathrm B}(g_i^{(\lambda)}\|s)
+\lambda d_{\mathrm B}(b_i\|g_i^{(\lambda)})
\notag\\
&\qquad
+(1-\lambda)d_{\mathrm B}(p^0_{i,a_i}\|g_i^{(\lambda)}).
\label{eq:app-weighted-kl}
\end{align}
The final two terms do not depend on $s$, so $g_i^{(\lambda)}$ is the unique unconstrained minimizer of this weighted local objective. Within this fixed-weight family, $\lambda=1/2$ is the only fixed setting for which exchanging $b_i$ and $p^0_{i,a_i}$ leaves the local-reference rule unchanged for all admissible pairs, yielding the arithmetic mean in Equation~(5) of the main paper.

The sensitivity study changes only the local references through $\lambda$; the feasible intervals and inherited feasible mean $\mu$ remain fixed. For each $\lambda$, the same coordinated-repair program and shared-scalar solver apply, and the structural guarantees in Proposition~1 of the main paper remain unchanged. We evaluate seven fixed values over the 114 dataset--classifier--calibrator conditions underlying Figure~2 of the main paper, using five calibration/evaluation splits per condition. Each entry in Table~\ref{tab:local-reference-sensitivity} is the unweighted mean of the 114 condition-level paired reductions after first averaging over the five splits within each condition.

\begin{table}[ht!]
\renewcommand{\arraystretch}{1.0}
\centering
\setlength{\tabcolsep}{3.0pt}
\resizebox{0.5\linewidth}{!}{%
    \begin{tabular}{
        c
        !{\color{black!18}\vrule width 0.25pt}
        c
        !{\color{black!18}\vrule width 0.25pt}
        c
        !{\color{black!18}\vrule width 0.25pt}
        c
    }
        \toprule
        \textbf{$\lambda$}
        & \textbf{$\Delta_{\mathrm R}$ECE (pp)}
        & \textbf{$\Delta_{\mathrm R}$NLL}
        & \shortstack{
            \textbf{$\Delta_{\mathrm R}$Brier}
            \textbf{($\times10^{-3}$)}
        } \\
        \midrule

        0.0 & 1.888 & 0.2212 & 14.731 \\
        0.1 & 1.936 & 0.2220 & 14.761 \\
        0.3 & 2.147 & 0.2224 & 14.748 \\

        \rowcolor{gray!12}
        \textbf{0.5} & 2.140 & 0.2225 & 14.939 \\

        0.7 & 2.108 & 0.2225 & 14.959 \\
        0.9 & 2.061 & 0.2222 & 14.892 \\
        1.0 & 2.054 & 0.2222 & 14.878 \\

        \bottomrule
    \end{tabular}%
}
\caption{Sensitivity to local-reference weighting. The weight $\lambda$ multiplies $b_i$ in the changed-prediction branch of Equation~\eqref{eq:app-weighted-reference}; $\lambda=0.5$ is the equal-weight setting used by CORD. Following Table~1 of the main paper, paired reductions are defined as $\Delta_{\mathrm R}M=M(\mathrm{Base})-M(\mathrm{Base}+\mathrm{CORD})$; positive values indicate improvement.}
\label{tab:local-reference-sensitivity}
\end{table}

All seven fixed weights yield positive mean reductions in ECE, NLL, and Brier. Across $\lambda\in[0.3,0.9]$, the largest point estimate occurs at different weights across the three metrics; CORD fixes $\lambda=1/2$ by the symmetry criterion above.

\clearpage
\section{Python Implementation}
\label{app:cord-implementation}
The Python listing below compactly implements Algorithm~1 of the main paper; the preceding sections specify the fixed numerical conventions.

\begin{lstlisting}[style=cordpython]
import numpy as np


EPS_NUM = 1e-12
DELTA_STAB = 1e-10


class CORD:
    def fit(self, original: np.ndarray, calibrated: np.ndarray) -> "CORD":
        p0, q = _stabilize(original), _stabilize(calibrated)
        _, b, _, g, lower, upper = _quantities(p0, q)
        target = float(np.clip(b.mean(), lower.mean(), upper.mean()))
        self.eta_ = _solve_eta(g, lower, upper, target)
        return self

    def transform(self, original: np.ndarray, calibrated: np.ndarray) -> np.ndarray:
        p0, q = _stabilize(original), _stabilize(calibrated)
        a, _, tail, g, lower, upper = _quantities(p0, q)
        head = _response(self.eta_, g, lower, upper)
        repaired = (1.0 - head[:, None]) * tail
        repaired[np.arange(a.size), a] = head
        return repaired


def _stabilize(p: np.ndarray) -> np.ndarray:
    p = np.asarray(p, dtype=np.float64)
    boundary = np.any(p == 0.0, axis=1)

    if not np.any(boundary):
        return p

    stabilized = p.copy()
    stabilized[boundary] = (
        (1.0 - DELTA_STAB) * stabilized[boundary]
        + DELTA_STAB / p.shape[1]
    )
    return stabilized


def _quantities(p0: np.ndarray, q: np.ndarray):
    rows = np.arange(p0.shape[0])
    a = p0.argmax(axis=1)
    b = q[rows, a]
    tail = q.copy()
    tail[rows, a] = 0.0
    tail /= tail.sum(axis=1, keepdims=True)
    rho = tail.max(axis=1)
    lower = rho / (1.0 + rho) + EPS_NUM
    upper = np.full_like(lower, 1.0 - EPS_NUM)
    g = np.where(q.argmax(axis=1) == a, b, 0.5 * (b + p0[rows, a]))
    return a, b, tail, g, lower, upper


def _interior(eta: float, g: np.ndarray) -> np.ndarray:
    radius = np.hypot(
        eta + 2.0 * g - 1.0,
        2.0 * np.sqrt(g * (1.0 - g)),
    )

    if eta > 1.0:
        return (eta - 1.0 + radius) / (2.0 * eta)

    return 2.0 * g / (1.0 - eta + radius)


def _response(
    eta: float,
    g: np.ndarray,
    lower: np.ndarray,
    upper: np.ndarray,
) -> np.ndarray:
    return np.clip(_interior(eta, g), lower, upper)


def _stationarity(s: np.ndarray, g: np.ndarray) -> np.ndarray:
    return (s - g) / (s * (1.0 - s))


def _solve_eta(
    g: np.ndarray,
    lower: np.ndarray,
    upper: np.ndarray,
    target: float,
) -> float:
    mean_at_zero = float(_response(0.0, g, lower, upper).mean())
    if mean_at_zero == target:
        return 0.0

    if mean_at_zero < target:
        left, right = 0.0, float(_stationarity(upper, g).max())
        while True:
            midpoint = left + 0.5 * (right - left)
            if midpoint == left or midpoint == right:
                return right
            if _response(midpoint, g, lower, upper).mean() < target:
                left = midpoint
            else:
                right = midpoint

    left, right = float(_stationarity(lower, g).min()), 0.0
    while True:
        midpoint = left + 0.5 * (right - left)
        if midpoint == left or midpoint == right:
            return left
        if _response(midpoint, g, lower, upper).mean() <= target:
            left = midpoint
        else:
            right = midpoint
\end{lstlisting}

\clearpage
\section{Complete Classifier--Calibrator Results}
\label{app:complete-results}
The tables below report the classifier--calibrator-level results underlying the dataset-wise averages in Figure~2 of the main paper, with the corresponding reliability diagrams providing a complementary view of confidence calibration. Across all pairs, CORD exactly recovers the original top-1 predictions and hence the original accuracy, while the paired ECE, NLL, and Brier entries show how the effects of repair vary with the revisions induced by each Base direct output.

\vspace{-0.2cm}

\subsection{CIFAR-10}
\vspace{-0.3cm}
\begin{table*}[h!]
\renewcommand{\arraystretch}{1.0}
\centering
\setlength{\tabcolsep}{2.0pt}
\resizebox{\textwidth}{!}{%
\begin{tabular}{
    l
    !{\color{black!18}\vrule width 0.25pt}
    l
    !{\color{black!18}\vrule width 0.25pt}
    r
    !{\color{black!18}\vrule width 0.25pt}
    r
    !{\color{black!18}\vrule width 0.25pt}
    r
    !{\color{black!18}\vrule width 0.25pt}
    r
    !{\color{black!18}\vrule width 0.25pt}
    r
    !{\color{black!18}\vrule width 0.25pt}
    r
    !{\color{black!18}\vrule width 0.25pt}
    r
    !{\color{black!18}\vrule width 0.25pt}
    r
    !{\color{black!18}\vrule width 0.25pt}
    r
    !{\color{black!18}\vrule width 0.25pt}
    r
    !{\color{black!18}\vrule width 0.25pt}
    r
    !{\color{black!18}\vrule width 0.25pt}
    r
    !{\color{black!18}\vrule width 0.25pt}
    r
    !{\color{black!18}\vrule width 0.25pt}
    r
    !{\color{black!18}\vrule width 0.25pt}
    r
}
\toprule
\multirow{2}{*}{\textbf{Classifier}}
& \multirow{2}{*}{\textbf{Metric}}
& \multirow{2}{*}{\textbf{Uncal.}}
& \multicolumn{2}{c}{\textbf{VS}}
& \multicolumn{2}{c}{\textbf{SVS}}
& \multicolumn{2}{c}{\textbf{MS}}
& \multicolumn{2}{c}{\textbf{SMS}}
& \multicolumn{2}{c}{\textbf{Dir-ODIR}}
& \multicolumn{2}{c}{\textbf{IROvA}}
& \multicolumn{2}{c}{\textbf{IROvA-TS}} \\
\cmidrule(lr){4-5}
\cmidrule(lr){6-7}
\cmidrule(lr){8-9}
\cmidrule(lr){10-11}
\cmidrule(lr){12-13}
\cmidrule(lr){14-15}
\cmidrule(lr){16-17}
& & &
Base & \textbf{+ CORD}
& Base & \textbf{+ CORD}
& Base & \textbf{+ CORD}
& Base & \textbf{+ CORD}
& Base & \textbf{+ CORD}
& Base & \textbf{+ CORD}
& Base & \textbf{+ CORD} \\
\midrule

\multirow{5}{*}{VGG-16-BN}
& Acc. (\%) & 94.09 & -0.04 & \cellcolor{gray!12}\textbf{0.00} & 0.00 & \cellcolor{gray!12}\textbf{0.00} & -0.12 & \cellcolor{gray!12}\textbf{0.00} & -0.02 & \cellcolor{gray!12}\textbf{0.00} & -0.08 & \cellcolor{gray!12}\textbf{0.00} & -0.08 & \cellcolor{gray!12}\textbf{0.00} & -0.10 & \cellcolor{gray!12}\textbf{0.00} \\
& TPCR (\%) & 0.00 & 0.35 & \cellcolor{gray!12}\textbf{0.00} & 0.23 & \cellcolor{gray!12}\textbf{0.00} & 0.86 & \cellcolor{gray!12}\textbf{0.00} & 0.29 & \cellcolor{gray!12}\textbf{0.00} & 0.59 & \cellcolor{gray!12}\textbf{0.00} & 0.42 & \cellcolor{gray!12}\textbf{0.00} & 0.54 & \cellcolor{gray!12}\textbf{0.00} \\
& ECE (\%) & 4.79 & 1.46 & \cellcolor{green!8}\textbf{0.04~{\color{green!50!black}\ensuremath{\blacktriangledown}}} & 1.56 & \cellcolor{red!8}\textbf{-0.05~{\color{red!75!black}\ensuremath{\blacktriangle}}} & 1.52 & \cellcolor{green!8}\textbf{0.11~{\color{green!50!black}\ensuremath{\blacktriangledown}}} & 1.54 & \cellcolor{red!8}\textbf{-0.01~{\color{red!75!black}\ensuremath{\blacktriangle}}} & 1.80 & \cellcolor{green!8}\textbf{0.10~{\color{green!50!black}\ensuremath{\blacktriangledown}}} & 1.70 & \cellcolor{green!8}\textbf{0.02~{\color{green!50!black}\ensuremath{\blacktriangledown}}} & 1.59 & \cellcolor{gray!12}\textbf{0.00} \\
& NLL & 0.3353 & 0.2243 & \cellcolor{green!8}\textbf{0.0002~{\color{green!50!black}\ensuremath{\blacktriangledown}}} & 0.2253 & \cellcolor{gray!12}\textbf{0.0000} & 0.2275 & \cellcolor{green!8}\textbf{0.0019~{\color{green!50!black}\ensuremath{\blacktriangledown}}} & 0.2191 & \cellcolor{gray!12}\textbf{0.0000} & 0.2231 & \cellcolor{green!8}\textbf{0.0005~{\color{green!50!black}\ensuremath{\blacktriangledown}}} & 0.2759 & \cellcolor{green!8}\textbf{0.0036~{\color{green!50!black}\ensuremath{\blacktriangledown}}} & 0.2688 & \cellcolor{green!8}\textbf{0.0028~{\color{green!50!black}\ensuremath{\blacktriangledown}}} \\
& Brier & 0.1056 & 0.0941 & \cellcolor{green!8}\textbf{0.0001~{\color{green!50!black}\ensuremath{\blacktriangledown}}} & 0.0944 & \cellcolor{gray!12}\textbf{0.0000} & 0.0956 & \cellcolor{green!8}\textbf{0.0010~{\color{green!50!black}\ensuremath{\blacktriangledown}}} & 0.0939 & \cellcolor{gray!12}\textbf{0.0000} & 0.0947 & \cellcolor{green!8}\textbf{0.0004~{\color{green!50!black}\ensuremath{\blacktriangledown}}} & 0.0929 & \cellcolor{green!8}\textbf{0.0002~{\color{green!50!black}\ensuremath{\blacktriangledown}}} & 0.0924 & \cellcolor{green!8}\textbf{0.0002~{\color{green!50!black}\ensuremath{\blacktriangledown}}} \\

\cmidrule(lr){1-17}
\multirow{5}{*}{ResNet-56}
& Acc. (\%) & 94.38 & -0.10 & \cellcolor{gray!12}\textbf{0.00} & -0.04 & \cellcolor{gray!12}\textbf{0.00} & -0.26 & \cellcolor{gray!12}\textbf{0.00} & -0.11 & \cellcolor{gray!12}\textbf{0.00} & -0.09 & \cellcolor{gray!12}\textbf{0.00} & -0.04 & \cellcolor{gray!12}\textbf{0.00} & -0.06 & \cellcolor{gray!12}\textbf{0.00} \\
& TPCR (\%) & 0.00 & 0.48 & \cellcolor{gray!12}\textbf{0.00} & 0.24 & \cellcolor{gray!12}\textbf{0.00} & 1.25 & \cellcolor{gray!12}\textbf{0.00} & 0.58 & \cellcolor{gray!12}\textbf{0.00} & 1.01 & \cellcolor{gray!12}\textbf{0.00} & 0.62 & \cellcolor{gray!12}\textbf{0.00} & 0.76 & \cellcolor{gray!12}\textbf{0.00} \\
& ECE (\%) & 3.75 & 1.00 & \cellcolor{green!8}\textbf{0.05~{\color{green!50!black}\ensuremath{\blacktriangledown}}} & 0.94 & \cellcolor{green!8}\textbf{0.02~{\color{green!50!black}\ensuremath{\blacktriangledown}}} & 1.12 & \cellcolor{green!8}\textbf{0.27~{\color{green!50!black}\ensuremath{\blacktriangledown}}} & 1.00 & \cellcolor{green!8}\textbf{0.01~{\color{green!50!black}\ensuremath{\blacktriangledown}}} & 1.01 & \cellcolor{green!8}\textbf{0.07~{\color{green!50!black}\ensuremath{\blacktriangledown}}} & 1.33 & \cellcolor{red!8}\textbf{-0.01~{\color{red!75!black}\ensuremath{\blacktriangle}}} & 1.11 & \cellcolor{green!8}\textbf{0.01~{\color{green!50!black}\ensuremath{\blacktriangledown}}} \\
& NLL & 0.2525 & 0.1884 & \cellcolor{green!8}\textbf{0.0003~{\color{green!50!black}\ensuremath{\blacktriangledown}}} & 0.1889 & \cellcolor{green!8}\textbf{0.0001~{\color{green!50!black}\ensuremath{\blacktriangledown}}} & 0.1884 & \cellcolor{green!8}\textbf{0.0017~{\color{green!50!black}\ensuremath{\blacktriangledown}}} & 0.1841 & \cellcolor{green!8}\textbf{0.0003~{\color{green!50!black}\ensuremath{\blacktriangledown}}} & 0.1862 & \cellcolor{green!8}\textbf{0.0007~{\color{green!50!black}\ensuremath{\blacktriangledown}}} & 0.2302 & \cellcolor{green!8}\textbf{0.0033~{\color{green!50!black}\ensuremath{\blacktriangledown}}} & 0.2250 & \cellcolor{green!8}\textbf{0.0032~{\color{green!50!black}\ensuremath{\blacktriangledown}}} \\
& Brier & 0.0939 & 0.0866 & \cellcolor{green!8}\textbf{0.0002~{\color{green!50!black}\ensuremath{\blacktriangledown}}} & 0.0863 & \cellcolor{gray!12}\textbf{0.0000} & 0.0886 & \cellcolor{green!8}\textbf{0.0012~{\color{green!50!black}\ensuremath{\blacktriangledown}}} & 0.0861 & \cellcolor{green!8}\textbf{0.0002~{\color{green!50!black}\ensuremath{\blacktriangledown}}} & 0.0872 & \cellcolor{green!8}\textbf{0.0005~{\color{green!50!black}\ensuremath{\blacktriangledown}}} & 0.0861 & \cellcolor{gray!12}\textbf{0.0000} & 0.0858 & \cellcolor{green!8}\textbf{0.0001~{\color{green!50!black}\ensuremath{\blacktriangledown}}} \\

\cmidrule(lr){1-17}
\multirow{5}{*}{WRN-26-10}
& Acc. (\%) & 96.03 & 0.00 & \cellcolor{gray!12}\textbf{0.00} & +0.02 & \cellcolor{gray!12}\textbf{0.00} & -0.17 & \cellcolor{gray!12}\textbf{0.00} & -0.01 & \cellcolor{gray!12}\textbf{0.00} & +0.06 & \cellcolor{gray!12}\textbf{0.00} & +0.04 & \cellcolor{gray!12}\textbf{0.00} & +0.04 & \cellcolor{gray!12}\textbf{0.00} \\
& TPCR (\%) & 0.00 & 0.38 & \cellcolor{gray!12}\textbf{0.00} & 0.18 & \cellcolor{gray!12}\textbf{0.00} & 0.94 & \cellcolor{gray!12}\textbf{0.00} & 0.30 & \cellcolor{gray!12}\textbf{0.00} & 0.63 & \cellcolor{gray!12}\textbf{0.00} & 0.92 & \cellcolor{gray!12}\textbf{0.00} & 0.80 & \cellcolor{gray!12}\textbf{0.00} \\
& ECE (\%) & 3.36 & 1.14 & \cellcolor{green!8}\textbf{0.02~{\color{green!50!black}\ensuremath{\blacktriangledown}}} & 1.17 & \cellcolor{gray!12}\textbf{0.00} & 1.27 & \cellcolor{green!8}\textbf{0.20~{\color{green!50!black}\ensuremath{\blacktriangledown}}} & 1.24 & \cellcolor{green!8}\textbf{0.04~{\color{green!50!black}\ensuremath{\blacktriangledown}}} & 1.20 & \cellcolor{green!8}\textbf{0.05~{\color{green!50!black}\ensuremath{\blacktriangledown}}} & 1.21 & \cellcolor{green!8}\textbf{0.13~{\color{green!50!black}\ensuremath{\blacktriangledown}}} & 1.13 & \cellcolor{green!8}\textbf{0.11~{\color{green!50!black}\ensuremath{\blacktriangledown}}} \\
& NLL & 0.2775 & 0.1514 & \cellcolor{red!8}\textbf{-0.0001~{\color{red!75!black}\ensuremath{\blacktriangle}}} & 0.1539 & \cellcolor{red!8}\textbf{-0.0001~{\color{red!75!black}\ensuremath{\blacktriangle}}} & 0.1602 & \cellcolor{green!8}\textbf{0.0053~{\color{green!50!black}\ensuremath{\blacktriangledown}}} & 0.1479 & \cellcolor{green!8}\textbf{0.0001~{\color{green!50!black}\ensuremath{\blacktriangledown}}} & 0.1492 & \cellcolor{green!8}\textbf{0.0007~{\color{green!50!black}\ensuremath{\blacktriangledown}}} & 0.2019 & \cellcolor{green!8}\textbf{0.0036~{\color{green!50!black}\ensuremath{\blacktriangledown}}} & 0.1935 & \cellcolor{green!8}\textbf{0.0029~{\color{green!50!black}\ensuremath{\blacktriangledown}}} \\
& Brier & 0.0711 & 0.0621 & \cellcolor{gray!12}\textbf{0.0000} & 0.0626 & \cellcolor{red!8}\textbf{-0.0001~{\color{red!75!black}\ensuremath{\blacktriangle}}} & 0.0646 & \cellcolor{green!8}\textbf{0.0021~{\color{green!50!black}\ensuremath{\blacktriangledown}}} & 0.0621 & \cellcolor{green!8}\textbf{0.0001~{\color{green!50!black}\ensuremath{\blacktriangledown}}} & 0.0622 & \cellcolor{green!8}\textbf{0.0003~{\color{green!50!black}\ensuremath{\blacktriangledown}}} & 0.0622 & \cellcolor{gray!12}\textbf{0.0000} & 0.0615 & \cellcolor{red!8}\textbf{-0.0001~{\color{red!75!black}\ensuremath{\blacktriangle}}} \\

\cmidrule(lr){1-17}
\multirow{5}{*}{DenseNet-121}
& Acc. (\%) & 94.88 & +0.08 & \cellcolor{gray!12}\textbf{0.00} & +0.09 & \cellcolor{gray!12}\textbf{0.00} & -0.08 & \cellcolor{gray!12}\textbf{0.00} & +0.07 & \cellcolor{gray!12}\textbf{0.00} & +0.02 & \cellcolor{gray!12}\textbf{0.00} & +0.04 & \cellcolor{gray!12}\textbf{0.00} & -0.01 & \cellcolor{gray!12}\textbf{0.00} \\
& TPCR (\%) & 0.00 & 0.48 & \cellcolor{gray!12}\textbf{0.00} & 0.30 & \cellcolor{gray!12}\textbf{0.00} & 0.94 & \cellcolor{gray!12}\textbf{0.00} & 0.41 & \cellcolor{gray!12}\textbf{0.00} & 0.72 & \cellcolor{gray!12}\textbf{0.00} & 0.66 & \cellcolor{gray!12}\textbf{0.00} & 0.82 & \cellcolor{gray!12}\textbf{0.00} \\
& ECE (\%) & 4.66 & 1.48 & \cellcolor{green!8}\textbf{0.03~{\color{green!50!black}\ensuremath{\blacktriangledown}}} & 1.38 & \cellcolor{red!8}\textbf{-0.05~{\color{red!75!black}\ensuremath{\blacktriangle}}} & 1.51 & \cellcolor{green!8}\textbf{0.15~{\color{green!50!black}\ensuremath{\blacktriangledown}}} & 1.57 & \cellcolor{green!8}\textbf{0.03~{\color{green!50!black}\ensuremath{\blacktriangledown}}} & 1.56 & \cellcolor{green!8}\textbf{0.09~{\color{green!50!black}\ensuremath{\blacktriangledown}}} & 1.32 & \cellcolor{green!8}\textbf{0.02~{\color{green!50!black}\ensuremath{\blacktriangledown}}} & 1.34 & \cellcolor{green!8}\textbf{0.15~{\color{green!50!black}\ensuremath{\blacktriangledown}}} \\
& NLL & 0.4390 & 0.2095 & \cellcolor{gray!12}\textbf{0.0000} & 0.2117 & \cellcolor{red!8}\textbf{-0.0001~{\color{red!75!black}\ensuremath{\blacktriangle}}} & 0.2082 & \cellcolor{green!8}\textbf{0.0024~{\color{green!50!black}\ensuremath{\blacktriangledown}}} & 0.2021 & \cellcolor{green!8}\textbf{0.0001~{\color{green!50!black}\ensuremath{\blacktriangledown}}} & 0.2028 & \cellcolor{green!8}\textbf{0.0006~{\color{green!50!black}\ensuremath{\blacktriangledown}}} & 0.2524 & \cellcolor{green!8}\textbf{0.0022~{\color{green!50!black}\ensuremath{\blacktriangledown}}} & 0.2446 & \cellcolor{green!8}\textbf{0.0024~{\color{green!50!black}\ensuremath{\blacktriangledown}}} \\
& Brier & 0.0950 & 0.0838 & \cellcolor{red!8}\textbf{-0.0001~{\color{red!75!black}\ensuremath{\blacktriangle}}} & 0.0844 & \cellcolor{red!8}\textbf{-0.0001~{\color{red!75!black}\ensuremath{\blacktriangle}}} & 0.0848 & \cellcolor{green!8}\textbf{0.0013~{\color{green!50!black}\ensuremath{\blacktriangledown}}} & 0.0835 & \cellcolor{gray!12}\textbf{0.0000} & 0.0838 & \cellcolor{green!8}\textbf{0.0004~{\color{green!50!black}\ensuremath{\blacktriangledown}}} & 0.0825 & \cellcolor{gray!12}\textbf{0.0000} & 0.0820 & \cellcolor{green!8}\textbf{0.0002~{\color{green!50!black}\ensuremath{\blacktriangledown}}} \\

\cmidrule(lr){1-17}
\multirow{5}{*}{MobileNetV2-1.4$\times$}
& Acc. (\%) & 94.14 & -0.13 & \cellcolor{gray!12}\textbf{0.00} & -0.06 & \cellcolor{gray!12}\textbf{0.00} & -0.34 & \cellcolor{gray!12}\textbf{0.00} & -0.07 & \cellcolor{gray!12}\textbf{0.00} & -0.21 & \cellcolor{gray!12}\textbf{0.00} & -0.13 & \cellcolor{gray!12}\textbf{0.00} & -0.11 & \cellcolor{gray!12}\textbf{0.00} \\
& TPCR (\%) & 0.00 & 0.80 & \cellcolor{gray!12}\textbf{0.00} & 0.48 & \cellcolor{gray!12}\textbf{0.00} & 1.52 & \cellcolor{gray!12}\textbf{0.00} & 0.67 & \cellcolor{gray!12}\textbf{0.00} & 1.08 & \cellcolor{gray!12}\textbf{0.00} & 0.93 & \cellcolor{gray!12}\textbf{0.00} & 0.97 & \cellcolor{gray!12}\textbf{0.00} \\
& ECE (\%) & 3.73 & 1.22 & \cellcolor{green!8}\textbf{0.02~{\color{green!50!black}\ensuremath{\blacktriangledown}}} & 1.31 & \cellcolor{green!8}\textbf{0.05~{\color{green!50!black}\ensuremath{\blacktriangledown}}} & 1.50 & \cellcolor{green!8}\textbf{0.34~{\color{green!50!black}\ensuremath{\blacktriangledown}}} & 1.20 & \cellcolor{red!8}\textbf{-0.02~{\color{red!75!black}\ensuremath{\blacktriangle}}} & 1.42 & \cellcolor{green!8}\textbf{0.09~{\color{green!50!black}\ensuremath{\blacktriangledown}}} & 1.45 & \cellcolor{red!8}\textbf{-0.03~{\color{red!75!black}\ensuremath{\blacktriangle}}} & 1.34 & \cellcolor{green!8}\textbf{0.04~{\color{green!50!black}\ensuremath{\blacktriangledown}}} \\
& NLL & 0.2509 & 0.1972 & \cellcolor{green!8}\textbf{0.0006~{\color{green!50!black}\ensuremath{\blacktriangledown}}} & 0.1984 & \cellcolor{green!8}\textbf{0.0001~{\color{green!50!black}\ensuremath{\blacktriangledown}}} & 0.1987 & \cellcolor{green!8}\textbf{0.0028~{\color{green!50!black}\ensuremath{\blacktriangledown}}} & 0.1939 & \cellcolor{green!8}\textbf{0.0004~{\color{green!50!black}\ensuremath{\blacktriangledown}}} & 0.1969 & \cellcolor{green!8}\textbf{0.0012~{\color{green!50!black}\ensuremath{\blacktriangledown}}} & 0.2320 & \cellcolor{green!8}\textbf{0.0024~{\color{green!50!black}\ensuremath{\blacktriangledown}}} & 0.2284 & \cellcolor{green!8}\textbf{0.0020~{\color{green!50!black}\ensuremath{\blacktriangledown}}} \\
& Brier & 0.0977 & 0.0909 & \cellcolor{green!8}\textbf{0.0005~{\color{green!50!black}\ensuremath{\blacktriangledown}}} & 0.0906 & \cellcolor{green!8}\textbf{0.0001~{\color{green!50!black}\ensuremath{\blacktriangledown}}} & 0.0933 & \cellcolor{green!8}\textbf{0.0019~{\color{green!50!black}\ensuremath{\blacktriangledown}}} & 0.0907 & \cellcolor{green!8}\textbf{0.0004~{\color{green!50!black}\ensuremath{\blacktriangledown}}} & 0.0920 & \cellcolor{green!8}\textbf{0.0010~{\color{green!50!black}\ensuremath{\blacktriangledown}}} & 0.0914 & \cellcolor{green!8}\textbf{0.0002~{\color{green!50!black}\ensuremath{\blacktriangledown}}} & 0.0910 & \cellcolor{green!8}\textbf{0.0002~{\color{green!50!black}\ensuremath{\blacktriangledown}}} \\

\cmidrule(lr){1-17}
\multirow{5}{*}{ShuffleNetV2-1.0$\times$}
& Acc. (\%) & 93.35 & -0.03 & \cellcolor{gray!12}\textbf{0.00} & +0.02 & \cellcolor{gray!12}\textbf{0.00} & -0.25 & \cellcolor{gray!12}\textbf{0.00} & -0.03 & \cellcolor{gray!12}\textbf{0.00} & -0.20 & \cellcolor{gray!12}\textbf{0.00} & -0.08 & \cellcolor{gray!12}\textbf{0.00} & -0.08 & \cellcolor{gray!12}\textbf{0.00} \\
& TPCR (\%) & 0.00 & 0.65 & \cellcolor{gray!12}\textbf{0.00} & 0.37 & \cellcolor{gray!12}\textbf{0.00} & 1.56 & \cellcolor{gray!12}\textbf{0.00} & 0.78 & \cellcolor{gray!12}\textbf{0.00} & 1.38 & \cellcolor{gray!12}\textbf{0.00} & 1.01 & \cellcolor{gray!12}\textbf{0.00} & 0.92 & \cellcolor{gray!12}\textbf{0.00} \\
& ECE (\%) & 4.03 & 1.06 & \cellcolor{green!8}\textbf{0.02~{\color{green!50!black}\ensuremath{\blacktriangledown}}} & 1.10 & \cellcolor{green!8}\textbf{0.02~{\color{green!50!black}\ensuremath{\blacktriangledown}}} & 1.22 & \cellcolor{green!8}\textbf{0.10~{\color{green!50!black}\ensuremath{\blacktriangledown}}} & 1.10 & \cellcolor{green!8}\textbf{0.02~{\color{green!50!black}\ensuremath{\blacktriangledown}}} & 1.11 & \cellcolor{red!8}\textbf{-0.05~{\color{red!75!black}\ensuremath{\blacktriangle}}} & 1.52 & \cellcolor{green!8}\textbf{0.11~{\color{green!50!black}\ensuremath{\blacktriangledown}}} & 1.20 & \cellcolor{green!8}\textbf{0.03~{\color{green!50!black}\ensuremath{\blacktriangledown}}} \\
& NLL & 0.2839 & 0.2222 & \cellcolor{green!8}\textbf{0.0002~{\color{green!50!black}\ensuremath{\blacktriangledown}}} & 0.2226 & \cellcolor{gray!12}\textbf{0.0000} & 0.2245 & \cellcolor{green!8}\textbf{0.0015~{\color{green!50!black}\ensuremath{\blacktriangledown}}} & 0.2194 & \cellcolor{green!8}\textbf{0.0001~{\color{green!50!black}\ensuremath{\blacktriangledown}}} & 0.2248 & \cellcolor{green!8}\textbf{0.0011~{\color{green!50!black}\ensuremath{\blacktriangledown}}} & 0.2766 & \cellcolor{green!8}\textbf{0.0022~{\color{green!50!black}\ensuremath{\blacktriangledown}}} & 0.2738 & \cellcolor{green!8}\textbf{0.0018~{\color{green!50!black}\ensuremath{\blacktriangledown}}} \\
& Brier & 0.1091 & 0.1024 & \cellcolor{green!8}\textbf{0.0001~{\color{green!50!black}\ensuremath{\blacktriangledown}}} & 0.1023 & \cellcolor{gray!12}\textbf{0.0000} & 0.1048 & \cellcolor{green!8}\textbf{0.0011~{\color{green!50!black}\ensuremath{\blacktriangledown}}} & 0.1023 & \cellcolor{green!8}\textbf{0.0001~{\color{green!50!black}\ensuremath{\blacktriangledown}}} & 0.1044 & \cellcolor{green!8}\textbf{0.0008~{\color{green!50!black}\ensuremath{\blacktriangledown}}} & 0.1033 & \cellcolor{green!8}\textbf{0.0003~{\color{green!50!black}\ensuremath{\blacktriangledown}}} & 0.1028 & \cellcolor{green!8}\textbf{0.0002~{\color{green!50!black}\ensuremath{\blacktriangledown}}} \\

\cmidrule(lr){1-17}
\multirow{5}{*}{RepVGG-A1}
& Acc. (\%) & 94.78 & -0.08 & \cellcolor{gray!12}\textbf{0.00} & -0.04 & \cellcolor{gray!12}\textbf{0.00} & -0.15 & \cellcolor{gray!12}\textbf{0.00} & -0.08 & \cellcolor{gray!12}\textbf{0.00} & -0.18 & \cellcolor{gray!12}\textbf{0.00} & -0.05 & \cellcolor{gray!12}\textbf{0.00} & -0.04 & \cellcolor{gray!12}\textbf{0.00} \\
& TPCR (\%) & 0.00 & 0.39 & \cellcolor{gray!12}\textbf{0.00} & 0.20 & \cellcolor{gray!12}\textbf{0.00} & 1.20 & \cellcolor{gray!12}\textbf{0.00} & 0.46 & \cellcolor{gray!12}\textbf{0.00} & 1.00 & \cellcolor{gray!12}\textbf{0.00} & 0.62 & \cellcolor{gray!12}\textbf{0.00} & 0.71 & \cellcolor{gray!12}\textbf{0.00} \\
& ECE (\%) & 3.49 & 1.18 & \cellcolor{green!8}\textbf{0.04~{\color{green!50!black}\ensuremath{\blacktriangledown}}} & 1.15 & \cellcolor{green!8}\textbf{0.02~{\color{green!50!black}\ensuremath{\blacktriangledown}}} & 1.32 & \cellcolor{green!8}\textbf{0.19~{\color{green!50!black}\ensuremath{\blacktriangledown}}} & 1.13 & \cellcolor{green!8}\textbf{0.04~{\color{green!50!black}\ensuremath{\blacktriangledown}}} & 1.25 & \cellcolor{green!8}\textbf{0.25~{\color{green!50!black}\ensuremath{\blacktriangledown}}} & 1.49 & \cellcolor{green!8}\textbf{0.05~{\color{green!50!black}\ensuremath{\blacktriangledown}}} & 1.28 & \cellcolor{green!8}\textbf{0.04~{\color{green!50!black}\ensuremath{\blacktriangledown}}} \\
& NLL & 0.2319 & 0.1807 & \cellcolor{green!8}\textbf{0.0004~{\color{green!50!black}\ensuremath{\blacktriangledown}}} & 0.1812 & \cellcolor{green!8}\textbf{0.0001~{\color{green!50!black}\ensuremath{\blacktriangledown}}} & 0.1804 & \cellcolor{green!8}\textbf{0.0020~{\color{green!50!black}\ensuremath{\blacktriangledown}}} & 0.1768 & \cellcolor{green!8}\textbf{0.0003~{\color{green!50!black}\ensuremath{\blacktriangledown}}} & 0.1800 & \cellcolor{green!8}\textbf{0.0012~{\color{green!50!black}\ensuremath{\blacktriangledown}}} & 0.2328 & \cellcolor{green!8}\textbf{0.0044~{\color{green!50!black}\ensuremath{\blacktriangledown}}} & 0.2269 & \cellcolor{green!8}\textbf{0.0038~{\color{green!50!black}\ensuremath{\blacktriangledown}}} \\
& Brier & 0.0871 & 0.0806 & \cellcolor{green!8}\textbf{0.0002~{\color{green!50!black}\ensuremath{\blacktriangledown}}} & 0.0806 & \cellcolor{green!8}\textbf{0.0001~{\color{green!50!black}\ensuremath{\blacktriangledown}}} & 0.0819 & \cellcolor{green!8}\textbf{0.0011~{\color{green!50!black}\ensuremath{\blacktriangledown}}} & 0.0802 & \cellcolor{green!8}\textbf{0.0002~{\color{green!50!black}\ensuremath{\blacktriangledown}}} & 0.0813 & \cellcolor{green!8}\textbf{0.0008~{\color{green!50!black}\ensuremath{\blacktriangledown}}} & 0.0806 & \cellcolor{green!8}\textbf{0.0002~{\color{green!50!black}\ensuremath{\blacktriangledown}}} & 0.0805 & \cellcolor{green!8}\textbf{0.0001~{\color{green!50!black}\ensuremath{\blacktriangledown}}} \\
\bottomrule
\end{tabular}%
}
\vspace{-0.2cm}
\caption{Complete classifier--calibrator results on CIFAR-10 corresponding to Figure~2 of the main paper. Values are five-split means. Uncal. reports absolute values. Within each Base/+CORD pair, Acc. reports $\Delta\mathrm{Acc}$ from Uncal. in percentage points, TPCR reports absolute rates in percent, and $M\in\{\mathrm{ECE},\mathrm{NLL},\mathrm{Brier}\}$ is reported as $M(\mathrm{Base})/\Delta_{\mathrm R}M$, where $\Delta_{\mathrm R}M:=M(\mathrm{Base})-M(\mathrm{Base}+\mathrm{CORD})$; positive values indicate improvement.}
\label{tab:complete-results-cifar10}
\end{table*}

\begin{figure*}[h!]
    \centering
    \includegraphics[width=\linewidth]
    {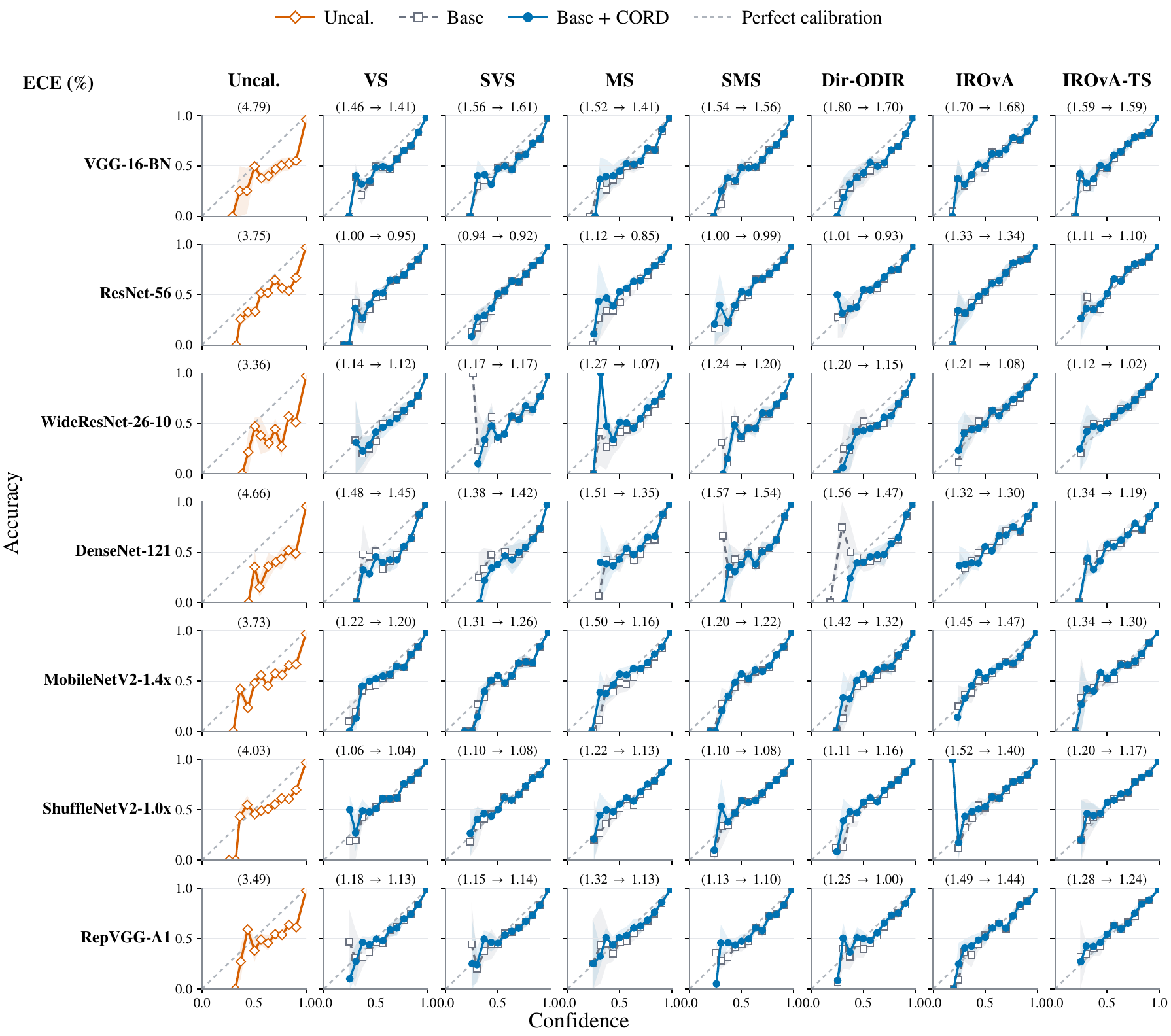}
    \caption{Reliability diagrams on CIFAR-10 corresponding to Table~\ref{tab:complete-results-cifar10}. Rows denote classifiers and columns denote Uncal. and Base calibrators; parenthetical values report ECE (\%) for Uncal. or Base $\rightarrow$ Base + CORD. Curves are averaged over five splits; shading shows pointwise 95\% CIs.}
    \label{fig:reliability-cifar10}
\end{figure*}

\clearpage
\subsection{CIFAR-100}
\vspace{-0.3cm}
\begin{table*}[h!]
\renewcommand{\arraystretch}{1.0}
\centering
\setlength{\tabcolsep}{2.0pt}
\resizebox{1.0\textwidth}{!}{%
\begin{tabular}{
    l
    !{\color{black!18}\vrule width 0.25pt}
    l
    !{\color{black!18}\vrule width 0.25pt}
    r
    !{\color{black!18}\vrule width 0.25pt}
    r
    !{\color{black!18}\vrule width 0.25pt}
    r
    !{\color{black!18}\vrule width 0.25pt}
    r
    !{\color{black!18}\vrule width 0.25pt}
    r
    !{\color{black!18}\vrule width 0.25pt}
    r
    !{\color{black!18}\vrule width 0.25pt}
    r
    !{\color{black!18}\vrule width 0.25pt}
    r
    !{\color{black!18}\vrule width 0.25pt}
    r
    !{\color{black!18}\vrule width 0.25pt}
    r
    !{\color{black!18}\vrule width 0.25pt}
    r
    !{\color{black!18}\vrule width 0.25pt}
    r
    !{\color{black!18}\vrule width 0.25pt}
    r
    !{\color{black!18}\vrule width 0.25pt}
    r
    !{\color{black!18}\vrule width 0.25pt}
    r
}
\toprule
\multirow{2}{*}{\textbf{Classifier}}
& \multirow{2}{*}{\textbf{Metric}}
& \multirow{2}{*}{\textbf{Uncal.}}
& \multicolumn{2}{c}{\textbf{VS}}
& \multicolumn{2}{c}{\textbf{SVS}}
& \multicolumn{2}{c}{\textbf{MS}}
& \multicolumn{2}{c}{\textbf{SMS}}
& \multicolumn{2}{c}{\textbf{Dir-ODIR}}
& \multicolumn{2}{c}{\textbf{IROvA}}
& \multicolumn{2}{c}{\textbf{IROvA-TS}} \\
\cmidrule(lr){4-5}
\cmidrule(lr){6-7}
\cmidrule(lr){8-9}
\cmidrule(lr){10-11}
\cmidrule(lr){12-13}
\cmidrule(lr){14-15}
\cmidrule(lr){16-17}
& & &
Base & \textbf{+ CORD}
& Base & \textbf{+ CORD}
& Base & \textbf{+ CORD}
& Base & \textbf{+ CORD}
& Base & \textbf{+ CORD}
& Base & \textbf{+ CORD}
& Base & \textbf{+ CORD} \\
\midrule

\multirow{5}{*}{VGG-16-BN}
& Acc. (\%) & 71.27 & +0.26 & \cellcolor{gray!12}\textbf{0.00} & +0.04 & \cellcolor{gray!12}\textbf{0.00} & -2.14 & \cellcolor{gray!12}\textbf{0.00} & +0.14 & \cellcolor{gray!12}\textbf{0.00} & +0.20 & \cellcolor{gray!12}\textbf{0.00} & -0.09 & \cellcolor{gray!12}\textbf{0.00} & -0.53 & \cellcolor{gray!12}\textbf{0.00} \\
& TPCR (\%) & 0.00 & 5.08 & \cellcolor{gray!12}\textbf{0.00} & 1.01 & \cellcolor{gray!12}\textbf{0.00} & 16.55 & \cellcolor{gray!12}\textbf{0.00} & 3.24 & \cellcolor{gray!12}\textbf{0.00} & 6.27 & \cellcolor{gray!12}\textbf{0.00} & 6.24 & \cellcolor{gray!12}\textbf{0.00} & 7.41 & \cellcolor{gray!12}\textbf{0.00} \\
& ECE (\%) & 20.66 & 4.43 & \cellcolor{green!8}\textbf{0.38~{\color{green!50!black}\ensuremath{\blacktriangledown}}} & 4.09 & \cellcolor{green!8}\textbf{0.06~{\color{green!50!black}\ensuremath{\blacktriangledown}}} & 9.77 & \cellcolor{green!8}\textbf{6.91~{\color{green!50!black}\ensuremath{\blacktriangledown}}} & 3.91 & \cellcolor{red!8}\textbf{-0.02~{\color{red!75!black}\ensuremath{\blacktriangle}}} & 3.45 & \cellcolor{green!8}\textbf{0.20~{\color{green!50!black}\ensuremath{\blacktriangledown}}} & 7.22 & \cellcolor{green!8}\textbf{0.82~{\color{green!50!black}\ensuremath{\blacktriangledown}}} & 4.23 & \cellcolor{green!8}\textbf{1.08~{\color{green!50!black}\ensuremath{\blacktriangledown}}} \\
& NLL & 1.8368 & 1.2272 & \cellcolor{green!8}\textbf{0.0006~{\color{green!50!black}\ensuremath{\blacktriangledown}}} & 1.2261 & \cellcolor{red!8}\textbf{-0.0002~{\color{red!75!black}\ensuremath{\blacktriangle}}} & 1.7526 & \cellcolor{green!8}\textbf{0.0659~{\color{green!50!black}\ensuremath{\blacktriangledown}}} & 1.2167 & \cellcolor{red!8}\textbf{-0.0007~{\color{red!75!black}\ensuremath{\blacktriangle}}} & 1.2100 & \cellcolor{green!8}\textbf{0.0016~{\color{green!50!black}\ensuremath{\blacktriangledown}}} & 1.7156 & \cellcolor{green!8}\textbf{0.0103~{\color{green!50!black}\ensuremath{\blacktriangledown}}} & 1.6109 & \cellcolor{green!8}\textbf{0.0083~{\color{green!50!black}\ensuremath{\blacktriangledown}}} \\
& Brier & 0.4788 & 0.4003 & \cellcolor{green!8}\textbf{0.0001~{\color{green!50!black}\ensuremath{\blacktriangledown}}} & 0.4034 & \cellcolor{gray!12}\textbf{0.0000} & 0.4487 & \cellcolor{green!8}\textbf{0.0229~{\color{green!50!black}\ensuremath{\blacktriangledown}}} & 0.3997 & \cellcolor{red!8}\textbf{-0.0002~{\color{red!75!black}\ensuremath{\blacktriangle}}} & 0.4000 & \cellcolor{green!8}\textbf{0.0005~{\color{green!50!black}\ensuremath{\blacktriangledown}}} & 0.4099 & \cellcolor{green!8}\textbf{0.0023~{\color{green!50!black}\ensuremath{\blacktriangledown}}} & 0.4017 & \cellcolor{green!8}\textbf{0.0020~{\color{green!50!black}\ensuremath{\blacktriangledown}}} \\

\cmidrule(lr){1-17}
\multirow{5}{*}{ResNet-56}
& Acc. (\%) & 67.64 & +0.27 & \cellcolor{gray!12}\textbf{0.00} & +0.15 & \cellcolor{gray!12}\textbf{0.00} & -13.06 & \cellcolor{gray!12}\textbf{0.00} & +0.32 & \cellcolor{gray!12}\textbf{0.00} & +0.15 & \cellcolor{gray!12}\textbf{0.00} & -0.19 & \cellcolor{gray!12}\textbf{0.00} & -0.27 & \cellcolor{gray!12}\textbf{0.00} \\
& TPCR (\%) & 0.00 & 9.34 & \cellcolor{gray!12}\textbf{0.00} & 2.04 & \cellcolor{gray!12}\textbf{0.00} & 39.96 & \cellcolor{gray!12}\textbf{0.00} & 6.51 & \cellcolor{gray!12}\textbf{0.00} & 11.58 & \cellcolor{gray!12}\textbf{0.00} & 8.30 & \cellcolor{gray!12}\textbf{0.00} & 9.78 & \cellcolor{gray!12}\textbf{0.00} \\
& ECE (\%) & 15.97 & 3.30 & \cellcolor{green!8}\textbf{0.77~{\color{green!50!black}\ensuremath{\blacktriangledown}}} & 2.60 & \cellcolor{green!8}\textbf{0.14~{\color{green!50!black}\ensuremath{\blacktriangledown}}} & 39.18 & \cellcolor{green!8}\textbf{37.26~{\color{green!50!black}\ensuremath{\blacktriangledown}}} & 2.62 & \cellcolor{green!8}\textbf{0.55~{\color{green!50!black}\ensuremath{\blacktriangledown}}} & 2.30 & \cellcolor{green!8}\textbf{0.61~{\color{green!50!black}\ensuremath{\blacktriangledown}}} & 6.46 & \cellcolor{green!8}\textbf{1.08~{\color{green!50!black}\ensuremath{\blacktriangledown}}} & 3.46 & \cellcolor{green!8}\textbf{0.90~{\color{green!50!black}\ensuremath{\blacktriangledown}}} \\
& NLL & 1.5343 & 1.2323 & \cellcolor{green!8}\textbf{0.0006~{\color{green!50!black}\ensuremath{\blacktriangledown}}} & 1.2381 & \cellcolor{red!8}\textbf{-0.0003~{\color{red!75!black}\ensuremath{\blacktriangle}}} & 6.9181 & \cellcolor{green!8}\textbf{2.4006~{\color{green!50!black}\ensuremath{\blacktriangledown}}} & 1.2254 & \cellcolor{red!8}\textbf{-0.0011~{\color{red!75!black}\ensuremath{\blacktriangle}}} & 1.2214 & \cellcolor{green!8}\textbf{0.0020~{\color{green!50!black}\ensuremath{\blacktriangledown}}} & 1.7049 & \cellcolor{green!8}\textbf{0.0118~{\color{green!50!black}\ensuremath{\blacktriangledown}}} & 1.6676 & \cellcolor{green!8}\textbf{0.0070~{\color{green!50!black}\ensuremath{\blacktriangledown}}} \\
& Brier & 0.4801 & 0.4342 & \cellcolor{green!8}\textbf{0.0005~{\color{green!50!black}\ensuremath{\blacktriangledown}}} & 0.4367 & \cellcolor{red!8}\textbf{-0.0001~{\color{red!75!black}\ensuremath{\blacktriangle}}} & 0.8248 & \cellcolor{green!8}\textbf{0.2684~{\color{green!50!black}\ensuremath{\blacktriangledown}}} & 0.4323 & \cellcolor{red!8}\textbf{-0.0005~{\color{red!75!black}\ensuremath{\blacktriangle}}} & 0.4338 & \cellcolor{green!8}\textbf{0.0010~{\color{green!50!black}\ensuremath{\blacktriangledown}}} & 0.4448 & \cellcolor{green!8}\textbf{0.0026~{\color{green!50!black}\ensuremath{\blacktriangledown}}} & 0.4399 & \cellcolor{green!8}\textbf{0.0015~{\color{green!50!black}\ensuremath{\blacktriangledown}}} \\

\cmidrule(lr){1-17}
\multirow{5}{*}{WRN-26-10}
& Acc. (\%) & 79.40 & -0.15 & \cellcolor{gray!12}\textbf{0.00} & +0.04 & \cellcolor{gray!12}\textbf{0.00} & -9.13 & \cellcolor{gray!12}\textbf{0.00} & -0.06 & \cellcolor{gray!12}\textbf{0.00} & -0.20 & \cellcolor{gray!12}\textbf{0.00} & -0.22 & \cellcolor{gray!12}\textbf{0.00} & -0.48 & \cellcolor{gray!12}\textbf{0.00} \\
& TPCR (\%) & 0.00 & 3.14 & \cellcolor{gray!12}\textbf{0.00} & 0.62 & \cellcolor{gray!12}\textbf{0.00} & 24.02 & \cellcolor{gray!12}\textbf{0.00} & 1.45 & \cellcolor{gray!12}\textbf{0.00} & 5.06 & \cellcolor{gray!12}\textbf{0.00} & 4.27 & \cellcolor{gray!12}\textbf{0.00} & 5.14 & \cellcolor{gray!12}\textbf{0.00} \\
& ECE (\%) & 15.31 & 5.02 & \cellcolor{green!8}\textbf{0.52~{\color{green!50!black}\ensuremath{\blacktriangledown}}} & 4.68 & \cellcolor{green!8}\textbf{0.12~{\color{green!50!black}\ensuremath{\blacktriangledown}}} & 26.99 & \cellcolor{green!8}\textbf{25.12~{\color{green!50!black}\ensuremath{\blacktriangledown}}} & 4.41 & \cellcolor{green!8}\textbf{0.21~{\color{green!50!black}\ensuremath{\blacktriangledown}}} & 3.74 & \cellcolor{green!8}\textbf{0.80~{\color{green!50!black}\ensuremath{\blacktriangledown}}} & 6.42 & \cellcolor{green!8}\textbf{0.51~{\color{green!50!black}\ensuremath{\blacktriangledown}}} & 5.22 & \cellcolor{green!8}\textbf{0.66~{\color{green!50!black}\ensuremath{\blacktriangledown}}} \\
& NLL & 1.4049 & 0.9629 & \cellcolor{green!8}\textbf{0.0048~{\color{green!50!black}\ensuremath{\blacktriangledown}}} & 0.9494 & \cellcolor{green!8}\textbf{0.0002~{\color{green!50!black}\ensuremath{\blacktriangledown}}} & 5.7379 & \cellcolor{green!8}\textbf{2.4294~{\color{green!50!black}\ensuremath{\blacktriangledown}}} & 0.9478 & \cellcolor{green!8}\textbf{0.0010~{\color{green!50!black}\ensuremath{\blacktriangledown}}} & 0.8947 & \cellcolor{green!8}\textbf{0.0054~{\color{green!50!black}\ensuremath{\blacktriangledown}}} & 1.3913 & \cellcolor{green!8}\textbf{0.0125~{\color{green!50!black}\ensuremath{\blacktriangledown}}} & 1.2901 & \cellcolor{green!8}\textbf{0.0126~{\color{green!50!black}\ensuremath{\blacktriangledown}}} \\
& Brier & 0.3511 & 0.3031 & \cellcolor{green!8}\textbf{0.0016~{\color{green!50!black}\ensuremath{\blacktriangledown}}} & 0.3026 & \cellcolor{gray!12}\textbf{0.0000} & 0.5617 & \cellcolor{green!8}\textbf{0.1824~{\color{green!50!black}\ensuremath{\blacktriangledown}}} & 0.3016 & \cellcolor{green!8}\textbf{0.0003~{\color{green!50!black}\ensuremath{\blacktriangledown}}} & 0.3005 & \cellcolor{green!8}\textbf{0.0022~{\color{green!50!black}\ensuremath{\blacktriangledown}}} & 0.3075 & \cellcolor{green!8}\textbf{0.0019~{\color{green!50!black}\ensuremath{\blacktriangledown}}} & 0.2998 & \cellcolor{green!8}\textbf{0.0021~{\color{green!50!black}\ensuremath{\blacktriangledown}}} \\

\cmidrule(lr){1-17}
\multirow{5}{*}{DenseNet-121}
& Acc. (\%) & 76.12 & +0.29 & \cellcolor{gray!12}\textbf{0.00} & +0.04 & \cellcolor{gray!12}\textbf{0.00} & -9.08 & \cellcolor{gray!12}\textbf{0.00} & +0.20 & \cellcolor{gray!12}\textbf{0.00} & -0.06 & \cellcolor{gray!12}\textbf{0.00} & -0.04 & \cellcolor{gray!12}\textbf{0.00} & -0.15 & \cellcolor{gray!12}\textbf{0.00} \\
& TPCR (\%) & 0.00 & 2.93 & \cellcolor{gray!12}\textbf{0.00} & 0.40 & \cellcolor{gray!12}\textbf{0.00} & 25.88 & \cellcolor{gray!12}\textbf{0.00} & 1.47 & \cellcolor{gray!12}\textbf{0.00} & 4.67 & \cellcolor{gray!12}\textbf{0.00} & 4.16 & \cellcolor{gray!12}\textbf{0.00} & 5.56 & \cellcolor{gray!12}\textbf{0.00} \\
& ECE (\%) & 20.29 & 5.23 & \cellcolor{green!8}\textbf{0.52~{\color{green!50!black}\ensuremath{\blacktriangledown}}} & 4.98 & \cellcolor{green!8}\textbf{0.04~{\color{green!50!black}\ensuremath{\blacktriangledown}}} & 30.97 & \cellcolor{green!8}\textbf{28.61~{\color{green!50!black}\ensuremath{\blacktriangledown}}} & 4.72 & \cellcolor{green!8}\textbf{0.13~{\color{green!50!black}\ensuremath{\blacktriangledown}}} & 4.31 & \cellcolor{green!8}\textbf{0.58~{\color{green!50!black}\ensuremath{\blacktriangledown}}} & 8.09 & \cellcolor{green!8}\textbf{0.68~{\color{green!50!black}\ensuremath{\blacktriangledown}}} & 6.61 & \cellcolor{green!8}\textbf{0.60~{\color{green!50!black}\ensuremath{\blacktriangledown}}} \\
& NLL & 2.0066 & 1.1468 & \cellcolor{green!8}\textbf{0.0017~{\color{green!50!black}\ensuremath{\blacktriangledown}}} & 1.1554 & \cellcolor{gray!12}\textbf{0.0000} & 6.8986 & \cellcolor{green!8}\textbf{3.0624~{\color{green!50!black}\ensuremath{\blacktriangledown}}} & 1.1422 & \cellcolor{green!8}\textbf{0.0001~{\color{green!50!black}\ensuremath{\blacktriangledown}}} & 1.0452 & \cellcolor{green!8}\textbf{0.0047~{\color{green!50!black}\ensuremath{\blacktriangledown}}} & 1.5937 & \cellcolor{green!8}\textbf{0.0166~{\color{green!50!black}\ensuremath{\blacktriangledown}}} & 1.4711 & \cellcolor{green!8}\textbf{0.0127~{\color{green!50!black}\ensuremath{\blacktriangledown}}} \\
& Brier & 0.4329 & 0.3561 & \cellcolor{green!8}\textbf{0.0001~{\color{green!50!black}\ensuremath{\blacktriangledown}}} & 0.3624 & \cellcolor{gray!12}\textbf{0.0000} & 0.6318 & \cellcolor{green!8}\textbf{0.2044~{\color{green!50!black}\ensuremath{\blacktriangledown}}} & 0.3579 & \cellcolor{red!8}\textbf{-0.0001~{\color{red!75!black}\ensuremath{\blacktriangle}}} & 0.3503 & \cellcolor{green!8}\textbf{0.0013~{\color{green!50!black}\ensuremath{\blacktriangledown}}} & 0.3636 & \cellcolor{green!8}\textbf{0.0021~{\color{green!50!black}\ensuremath{\blacktriangledown}}} & 0.3530 & \cellcolor{green!8}\textbf{0.0018~{\color{green!50!black}\ensuremath{\blacktriangledown}}} \\

\cmidrule(lr){1-17}
\multirow{5}{*}{MobileNetV2-1.4$\times$}
& Acc. (\%) & 71.26 & -0.02 & \cellcolor{gray!12}\textbf{0.00} & +0.10 & \cellcolor{gray!12}\textbf{0.00} & -12.73 & \cellcolor{gray!12}\textbf{0.00} & +0.17 & \cellcolor{gray!12}\textbf{0.00} & +0.04 & \cellcolor{gray!12}\textbf{0.00} & -0.37 & \cellcolor{gray!12}\textbf{0.00} & -0.36 & \cellcolor{gray!12}\textbf{0.00} \\
& TPCR (\%) & 0.00 & 7.88 & \cellcolor{gray!12}\textbf{0.00} & 1.65 & \cellcolor{gray!12}\textbf{0.00} & 35.95 & \cellcolor{gray!12}\textbf{0.00} & 5.06 & \cellcolor{gray!12}\textbf{0.00} & 9.47 & \cellcolor{gray!12}\textbf{0.00} & 7.49 & \cellcolor{gray!12}\textbf{0.00} & 8.61 & \cellcolor{gray!12}\textbf{0.00} \\
& ECE (\%) & 11.42 & 3.60 & \cellcolor{green!8}\textbf{0.53~{\color{green!50!black}\ensuremath{\blacktriangledown}}} & 3.14 & \cellcolor{green!8}\textbf{0.02~{\color{green!50!black}\ensuremath{\blacktriangledown}}} & 40.81 & \cellcolor{green!8}\textbf{36.77~{\color{green!50!black}\ensuremath{\blacktriangledown}}} & 2.66 & \cellcolor{green!8}\textbf{0.04~{\color{green!50!black}\ensuremath{\blacktriangledown}}} & 2.81 & \cellcolor{green!8}\textbf{0.51~{\color{green!50!black}\ensuremath{\blacktriangledown}}} & 5.48 & \cellcolor{green!8}\textbf{0.49~{\color{green!50!black}\ensuremath{\blacktriangledown}}} & 3.45 & \cellcolor{green!8}\textbf{0.34~{\color{green!50!black}\ensuremath{\blacktriangledown}}} \\
& NLL & 1.2337 & 1.1040 & \cellcolor{green!8}\textbf{0.0021~{\color{green!50!black}\ensuremath{\blacktriangledown}}} & 1.1068 & \cellcolor{gray!12}\textbf{0.0000} & 10.6075 & \cellcolor{green!8}\textbf{5.7217~{\color{green!50!black}\ensuremath{\blacktriangledown}}} & 1.0964 & \cellcolor{red!8}\textbf{-0.0001~{\color{red!75!black}\ensuremath{\blacktriangle}}} & 1.0901 & \cellcolor{green!8}\textbf{0.0039~{\color{green!50!black}\ensuremath{\blacktriangledown}}} & 1.5513 & \cellcolor{green!8}\textbf{0.0110~{\color{green!50!black}\ensuremath{\blacktriangledown}}} & 1.5129 & \cellcolor{green!8}\textbf{0.0091~{\color{green!50!black}\ensuremath{\blacktriangledown}}} \\
& Brier & 0.4177 & 0.3921 & \cellcolor{green!8}\textbf{0.0012~{\color{green!50!black}\ensuremath{\blacktriangledown}}} & 0.3944 & \cellcolor{gray!12}\textbf{0.0000} & 0.8195 & \cellcolor{green!8}\textbf{0.3119~{\color{green!50!black}\ensuremath{\blacktriangledown}}} & 0.3902 & \cellcolor{red!8}\textbf{-0.0002~{\color{red!75!black}\ensuremath{\blacktriangle}}} & 0.3906 & \cellcolor{green!8}\textbf{0.0018~{\color{green!50!black}\ensuremath{\blacktriangledown}}} & 0.3982 & \cellcolor{green!8}\textbf{0.0018~{\color{green!50!black}\ensuremath{\blacktriangledown}}} & 0.3944 & \cellcolor{green!8}\textbf{0.0010~{\color{green!50!black}\ensuremath{\blacktriangledown}}} \\

\cmidrule(lr){1-17}
\multirow{5}{*}{ShuffleNetV2-1.0$\times$}
& Acc. (\%) & 66.82 & +0.34 & \cellcolor{gray!12}\textbf{0.00} & +0.12 & \cellcolor{gray!12}\textbf{0.00} & -13.16 & \cellcolor{gray!12}\textbf{0.00} & +0.46 & \cellcolor{gray!12}\textbf{0.00} & +0.51 & \cellcolor{gray!12}\textbf{0.00} & -0.17 & \cellcolor{gray!12}\textbf{0.00} & -0.20 & \cellcolor{gray!12}\textbf{0.00} \\
& TPCR (\%) & 0.00 & 9.55 & \cellcolor{gray!12}\textbf{0.00} & 2.25 & \cellcolor{gray!12}\textbf{0.00} & 40.51 & \cellcolor{gray!12}\textbf{0.00} & 6.34 & \cellcolor{gray!12}\textbf{0.00} & 11.39 & \cellcolor{gray!12}\textbf{0.00} & 9.59 & \cellcolor{gray!12}\textbf{0.00} & 10.03 & \cellcolor{gray!12}\textbf{0.00} \\
& ECE (\%) & 11.83 & 4.37 & \cellcolor{green!8}\textbf{0.62~{\color{green!50!black}\ensuremath{\blacktriangledown}}} & 3.95 & \cellcolor{green!8}\textbf{0.08~{\color{green!50!black}\ensuremath{\blacktriangledown}}} & 45.54 & \cellcolor{green!8}\textbf{38.92~{\color{green!50!black}\ensuremath{\blacktriangledown}}} & 3.53 & \cellcolor{green!8}\textbf{0.21~{\color{green!50!black}\ensuremath{\blacktriangledown}}} & 3.39 & \cellcolor{green!8}\textbf{0.79~{\color{green!50!black}\ensuremath{\blacktriangledown}}} & 6.11 & \cellcolor{green!8}\textbf{0.74~{\color{green!50!black}\ensuremath{\blacktriangledown}}} & 3.89 & \cellcolor{green!8}\textbf{0.31~{\color{green!50!black}\ensuremath{\blacktriangledown}}} \\
& NLL & 1.4419 & 1.3073 & \cellcolor{green!8}\textbf{0.0010~{\color{green!50!black}\ensuremath{\blacktriangledown}}} & 1.3148 & \cellcolor{red!8}\textbf{-0.0001~{\color{red!75!black}\ensuremath{\blacktriangle}}} & 11.8949 & \cellcolor{green!8}\textbf{6.0656~{\color{green!50!black}\ensuremath{\blacktriangledown}}} & 1.3015 & \cellcolor{red!8}\textbf{-0.0006~{\color{red!75!black}\ensuremath{\blacktriangle}}} & 1.2856 & \cellcolor{green!8}\textbf{0.0038~{\color{green!50!black}\ensuremath{\blacktriangledown}}} & 1.7878 & \cellcolor{green!8}\textbf{0.0123~{\color{green!50!black}\ensuremath{\blacktriangledown}}} & 1.7594 & \cellcolor{green!8}\textbf{0.0089~{\color{green!50!black}\ensuremath{\blacktriangledown}}} \\
& Brier & 0.4716 & 0.4474 & \cellcolor{green!8}\textbf{0.0003~{\color{green!50!black}\ensuremath{\blacktriangledown}}} & 0.4502 & \cellcolor{gray!12}\textbf{0.0000} & 0.9153 & \cellcolor{green!8}\textbf{0.3384~{\color{green!50!black}\ensuremath{\blacktriangledown}}} & 0.4453 & \cellcolor{red!8}\textbf{-0.0005~{\color{red!75!black}\ensuremath{\blacktriangle}}} & 0.4447 & \cellcolor{green!8}\textbf{0.0012~{\color{green!50!black}\ensuremath{\blacktriangledown}}} & 0.4549 & \cellcolor{green!8}\textbf{0.0021~{\color{green!50!black}\ensuremath{\blacktriangledown}}} & 0.4522 & \cellcolor{green!8}\textbf{0.0013~{\color{green!50!black}\ensuremath{\blacktriangledown}}} \\

\cmidrule(lr){1-17}
\multirow{5}{*}{RepVGG-A1}
& Acc. (\%) & 70.59 & +0.46 & \cellcolor{gray!12}\textbf{0.00} & +0.14 & \cellcolor{gray!12}\textbf{0.00} & -11.08 & \cellcolor{gray!12}\textbf{0.00} & +0.34 & \cellcolor{gray!12}\textbf{0.00} & +0.36 & \cellcolor{gray!12}\textbf{0.00} & +0.08 & \cellcolor{gray!12}\textbf{0.00} & -0.16 & \cellcolor{gray!12}\textbf{0.00} \\
& TPCR (\%) & 0.00 & 8.17 & \cellcolor{gray!12}\textbf{0.00} & 1.89 & \cellcolor{gray!12}\textbf{0.00} & 34.50 & \cellcolor{gray!12}\textbf{0.00} & 5.02 & \cellcolor{gray!12}\textbf{0.00} & 10.06 & \cellcolor{gray!12}\textbf{0.00} & 8.78 & \cellcolor{gray!12}\textbf{0.00} & 9.36 & \cellcolor{gray!12}\textbf{0.00} \\
& ECE (\%) & 8.11 & 5.16 & \cellcolor{green!8}\textbf{0.71~{\color{green!50!black}\ensuremath{\blacktriangledown}}} & 4.50 & \cellcolor{green!8}\textbf{0.12~{\color{green!50!black}\ensuremath{\blacktriangledown}}} & 39.62 & \cellcolor{green!8}\textbf{33.97~{\color{green!50!black}\ensuremath{\blacktriangledown}}} & 4.35 & \cellcolor{green!8}\textbf{0.38~{\color{green!50!black}\ensuremath{\blacktriangledown}}} & 4.24 & \cellcolor{green!8}\textbf{0.93~{\color{green!50!black}\ensuremath{\blacktriangledown}}} & 5.67 & \cellcolor{green!8}\textbf{0.65~{\color{green!50!black}\ensuremath{\blacktriangledown}}} & 4.46 & \cellcolor{green!8}\textbf{0.47~{\color{green!50!black}\ensuremath{\blacktriangledown}}} \\
& NLL & 1.2381 & 1.1934 & \cellcolor{green!8}\textbf{0.0027~{\color{green!50!black}\ensuremath{\blacktriangledown}}} & 1.1999 & \cellcolor{red!8}\textbf{-0.0001~{\color{red!75!black}\ensuremath{\blacktriangle}}} & 10.2162 & \cellcolor{green!8}\textbf{5.1115~{\color{green!50!black}\ensuremath{\blacktriangledown}}} & 1.1873 & \cellcolor{green!8}\textbf{0.0002~{\color{green!50!black}\ensuremath{\blacktriangledown}}} & 1.1496 & \cellcolor{green!8}\textbf{0.0025~{\color{green!50!black}\ensuremath{\blacktriangledown}}} & 1.5807 & \cellcolor{green!8}\textbf{0.0161~{\color{green!50!black}\ensuremath{\blacktriangledown}}} & 1.5541 & \cellcolor{green!8}\textbf{0.0147~{\color{green!50!black}\ensuremath{\blacktriangledown}}} \\
& Brier & 0.4106 & 0.3997 & \cellcolor{green!8}\textbf{0.0012~{\color{green!50!black}\ensuremath{\blacktriangledown}}} & 0.4026 & \cellcolor{red!8}\textbf{-0.0001~{\color{red!75!black}\ensuremath{\blacktriangle}}} & 0.7976 & \cellcolor{green!8}\textbf{0.2846~{\color{green!50!black}\ensuremath{\blacktriangledown}}} & 0.3986 & \cellcolor{gray!12}\textbf{0.0000} & 0.3972 & \cellcolor{green!8}\textbf{0.0014~{\color{green!50!black}\ensuremath{\blacktriangledown}}} & 0.4032 & \cellcolor{green!8}\textbf{0.0015~{\color{green!50!black}\ensuremath{\blacktriangledown}}} & 0.4005 & \cellcolor{green!8}\textbf{0.0011~{\color{green!50!black}\ensuremath{\blacktriangledown}}} \\
\bottomrule
\end{tabular}%
}
\vspace{-0.3cm}
\caption{Complete classifier--calibrator results on CIFAR-100 corresponding to Figure~2 of the main paper. Values are five-split means. Uncal. reports absolute values. Within each Base/+CORD pair, Acc. reports $\Delta\mathrm{Acc}$ from Uncal. in percentage points, TPCR reports absolute rates in percent, and $M\in\{\mathrm{ECE},\mathrm{NLL},\mathrm{Brier}\}$ is reported as $M(\mathrm{Base})/\Delta_{\mathrm R}M$, where $\Delta_{\mathrm R}M:=M(\mathrm{Base})-M(\mathrm{Base}+\mathrm{CORD})$; positive values indicate improvement.}
\label{tab:complete-results-cifar100}
\end{table*}

\begin{figure*}[t!]
    \centering
    \includegraphics[width=\linewidth]
    {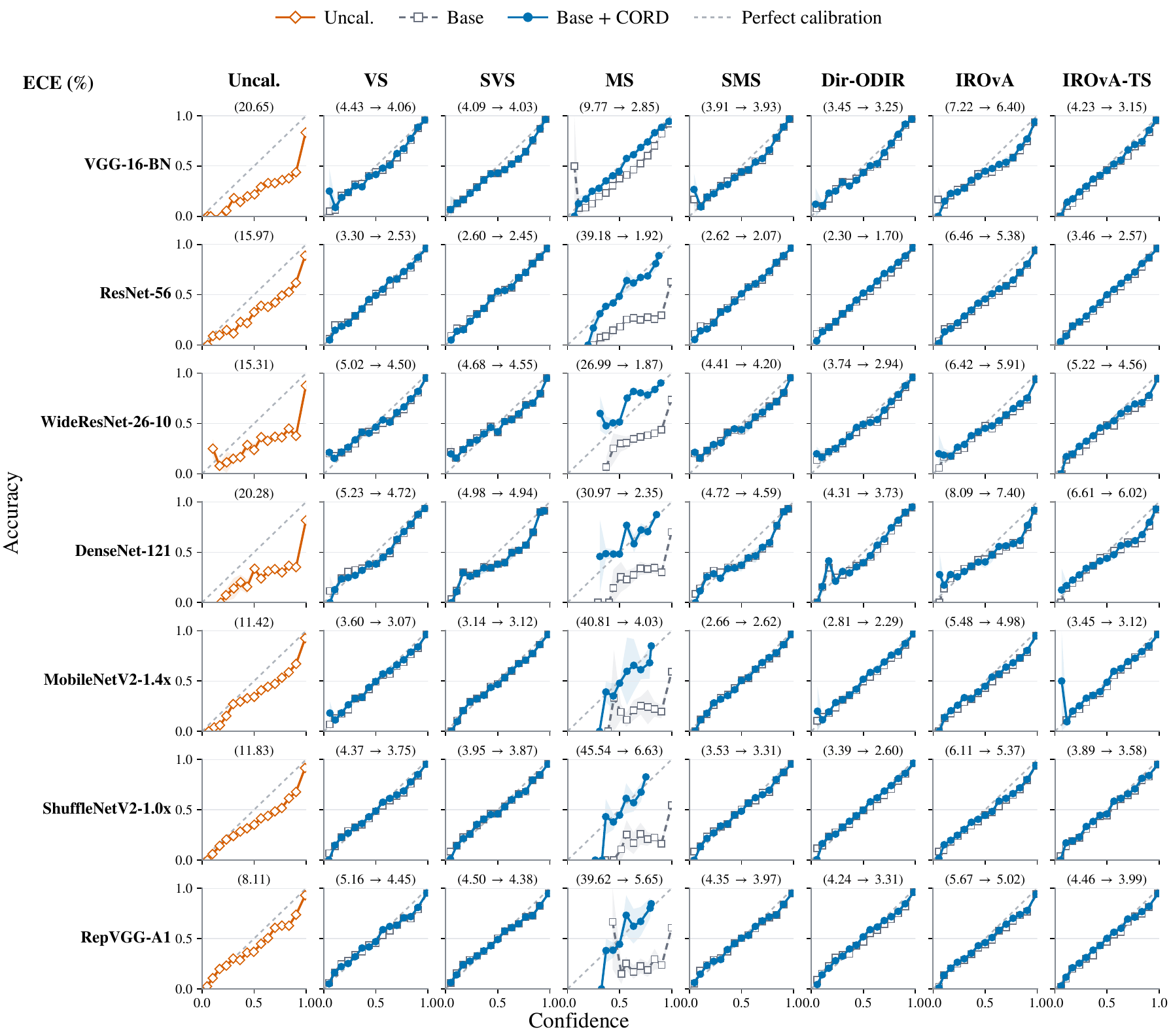}
    \caption{Reliability diagrams on CIFAR-100 corresponding to Table~\ref{tab:complete-results-cifar100}. Rows denote classifiers and columns denote Uncal. and Base calibrators; parenthetical values report ECE (\%) for Uncal. or Base $\rightarrow$ Base + CORD. Curves are averaged over five splits; shading shows pointwise 95\% CIs.}
    \label{fig:reliability-cifar100}
\end{figure*}

\clearpage
\subsection{ImageNet-1K}
\begin{table*}[h!]
\renewcommand{\arraystretch}{1.0}
\centering
\setlength{\tabcolsep}{3.0pt}
\resizebox{1.0\textwidth}{!}{%
\begin{tabular}{
    l
    !{\color{black!18}\vrule width 0.25pt}
    l
    !{\color{black!18}\vrule width 0.25pt}
    r
    !{\color{black!18}\vrule width 0.25pt}
    r
    !{\color{black!18}\vrule width 0.25pt}
    r
    !{\color{black!18}\vrule width 0.25pt}
    r
    !{\color{black!18}\vrule width 0.25pt}
    r
    !{\color{black!18}\vrule width 0.25pt}
    r
    !{\color{black!18}\vrule width 0.25pt}
    r
    !{\color{black!18}\vrule width 0.25pt}
    r
    !{\color{black!18}\vrule width 0.25pt}
    r
}
\toprule
\multirow{2}{*}{\textbf{Classifier}}
& \multirow{2}{*}{\textbf{Metric}}
& \multirow{2}{*}{\textbf{Uncal.}}
& \multicolumn{2}{c}{\textbf{VS}}
& \multicolumn{2}{c}{\textbf{SVS}}
& \multicolumn{2}{c}{\textbf{IROvA}}
& \multicolumn{2}{c}{\textbf{IROvA-TS}} \\
\cmidrule(lr){4-5}
\cmidrule(lr){6-7}
\cmidrule(lr){8-9}
\cmidrule(lr){10-11}
& & &
Base & \textbf{+ CORD}
& Base & \textbf{+ CORD}
& Base & \textbf{+ CORD}
& Base & \textbf{+ CORD} \\
\midrule

\multirow{5}{*}{ResNet-50}
& Acc. (\%) & 80.81 & -0.24 & \cellcolor{gray!12}\textbf{0.00} & 0.00 & \cellcolor{gray!12}\textbf{0.00} & -0.62 & \cellcolor{gray!12}\textbf{0.00} & -0.49 & \cellcolor{gray!12}\textbf{0.00} \\
& TPCR (\%) & 0.00 & 5.74 & \cellcolor{gray!12}\textbf{0.00} & 0.07 & \cellcolor{gray!12}\textbf{0.00} & 7.02 & \cellcolor{gray!12}\textbf{0.00} & 6.53 & \cellcolor{gray!12}\textbf{0.00} \\
& ECE (\%) & 41.15 & 4.16 & \cellcolor{green!8}\textbf{1.17~{\color{green!50!black}\ensuremath{\blacktriangledown}}} & 3.21 & \cellcolor{gray!12}\textbf{0.00} & 1.36 & \cellcolor{green!8}\textbf{0.06~{\color{green!50!black}\ensuremath{\blacktriangledown}}} & 3.72 & \cellcolor{green!8}\textbf{1.12~{\color{green!50!black}\ensuremath{\blacktriangledown}}} \\
& NLL & 1.3980 & 0.8002 & \cellcolor{green!8}\textbf{0.0048~{\color{green!50!black}\ensuremath{\blacktriangledown}}} & 0.7712 & \cellcolor{gray!12}\textbf{0.0000} & 1.4339 & \cellcolor{green!8}\textbf{0.0339~{\color{green!50!black}\ensuremath{\blacktriangledown}}} & 1.4560 & \cellcolor{green!8}\textbf{0.0253~{\color{green!50!black}\ensuremath{\blacktriangledown}}} \\
& Brier & 0.4711 & 0.2826 & \cellcolor{green!8}\textbf{0.0028~{\color{green!50!black}\ensuremath{\blacktriangledown}}} & 0.2773 & \cellcolor{gray!12}\textbf{0.0000} & 0.2830 & \cellcolor{green!8}\textbf{0.0023~{\color{green!50!black}\ensuremath{\blacktriangledown}}} & 0.2830 & \cellcolor{green!8}\textbf{0.0030~{\color{green!50!black}\ensuremath{\blacktriangledown}}} \\

\cmidrule(lr){1-11}
\multirow{5}{*}{ViT-B/16}
& Acc. (\%) & 81.01 & -0.29 & \cellcolor{gray!12}\textbf{0.00} & -0.01 & \cellcolor{gray!12}\textbf{0.00} & -0.60 & \cellcolor{gray!12}\textbf{0.00} & -0.58 & \cellcolor{gray!12}\textbf{0.00} \\
& TPCR (\%) & 0.00 & 5.13 & \cellcolor{gray!12}\textbf{0.00} & 0.06 & \cellcolor{gray!12}\textbf{0.00} & 6.20 & \cellcolor{gray!12}\textbf{0.00} & 6.16 & \cellcolor{gray!12}\textbf{0.00} \\
& ECE (\%) & 5.61 & 4.76 & \cellcolor{green!8}\textbf{1.34~{\color{green!50!black}\ensuremath{\blacktriangledown}}} & 3.82 & \cellcolor{green!8}\textbf{0.01~{\color{green!50!black}\ensuremath{\blacktriangledown}}} & 3.70 & \cellcolor{green!8}\textbf{1.36~{\color{green!50!black}\ensuremath{\blacktriangledown}}} & 4.00 & \cellcolor{green!8}\textbf{1.30~{\color{green!50!black}\ensuremath{\blacktriangledown}}} \\
& NLL & 0.8412 & 0.8227 & \cellcolor{green!8}\textbf{0.0048~{\color{green!50!black}\ensuremath{\blacktriangledown}}} & 0.7897 & \cellcolor{gray!12}\textbf{0.0000} & 1.4668 & \cellcolor{green!8}\textbf{0.0264~{\color{green!50!black}\ensuremath{\blacktriangledown}}} & 1.4742 & \cellcolor{green!8}\textbf{0.0265~{\color{green!50!black}\ensuremath{\blacktriangledown}}} \\
& Brier & 0.2773 & 0.2824 & \cellcolor{green!8}\textbf{0.0032~{\color{green!50!black}\ensuremath{\blacktriangledown}}} & 0.2766 & \cellcolor{gray!12}\textbf{0.0000} & 0.2807 & \cellcolor{green!8}\textbf{0.0032~{\color{green!50!black}\ensuremath{\blacktriangledown}}} & 0.2811 & \cellcolor{green!8}\textbf{0.0034~{\color{green!50!black}\ensuremath{\blacktriangledown}}} \\

\cmidrule(lr){1-11}
\multirow{5}{*}{Swin-T}
& Acc. (\%) & 81.49 & -0.16 & \cellcolor{gray!12}\textbf{0.00} & 0.00 & \cellcolor{gray!12}\textbf{0.00} & -0.45 & \cellcolor{gray!12}\textbf{0.00} & -0.40 & \cellcolor{gray!12}\textbf{0.00} \\
& TPCR (\%) & 0.00 & 5.59 & \cellcolor{gray!12}\textbf{0.00} & 0.09 & \cellcolor{gray!12}\textbf{0.00} & 6.41 & \cellcolor{gray!12}\textbf{0.00} & 6.27 & \cellcolor{gray!12}\textbf{0.00} \\
& ECE (\%) & 6.82 & 3.90 & \cellcolor{green!8}\textbf{1.23~{\color{green!50!black}\ensuremath{\blacktriangledown}}} & 2.99 & \cellcolor{green!8}\textbf{0.01~{\color{green!50!black}\ensuremath{\blacktriangledown}}} & 2.91 & \cellcolor{green!8}\textbf{1.17~{\color{green!50!black}\ensuremath{\blacktriangledown}}} & 3.34 & \cellcolor{green!8}\textbf{1.20~{\color{green!50!black}\ensuremath{\blacktriangledown}}} \\
& NLL & 0.7982 & 0.7626 & \cellcolor{green!8}\textbf{0.0042~{\color{green!50!black}\ensuremath{\blacktriangledown}}} & 0.7363 & \cellcolor{gray!12}\textbf{0.0000} & 1.4055 & \cellcolor{green!8}\textbf{0.0321~{\color{green!50!black}\ensuremath{\blacktriangledown}}} & 1.4153 & \cellcolor{green!8}\textbf{0.0307~{\color{green!50!black}\ensuremath{\blacktriangledown}}} \\
& Brier & 0.2719 & 0.2720 & \cellcolor{green!8}\textbf{0.0026~{\color{green!50!black}\ensuremath{\blacktriangledown}}} & 0.2674 & \cellcolor{gray!12}\textbf{0.0000} & 0.2729 & \cellcolor{green!8}\textbf{0.0027~{\color{green!50!black}\ensuremath{\blacktriangledown}}} & 0.2728 & \cellcolor{green!8}\textbf{0.0027~{\color{green!50!black}\ensuremath{\blacktriangledown}}} \\

\cmidrule(lr){1-11}
\multirow{5}{*}{ConvNeXt-T}
& Acc. (\%) & 82.53 & -0.25 & \cellcolor{gray!12}\textbf{0.00} & -0.01 & \cellcolor{gray!12}\textbf{0.00} & -0.58 & \cellcolor{gray!12}\textbf{0.00} & -0.62 & \cellcolor{gray!12}\textbf{0.00} \\
& TPCR (\%) & 0.00 & 5.27 & \cellcolor{gray!12}\textbf{0.00} & 0.07 & \cellcolor{gray!12}\textbf{0.00} & 5.92 & \cellcolor{gray!12}\textbf{0.00} & 5.74 & \cellcolor{gray!12}\textbf{0.00} \\
& ECE (\%) & 16.93 & 4.04 & \cellcolor{green!8}\textbf{1.20~{\color{green!50!black}\ensuremath{\blacktriangledown}}} & 3.06 & \cellcolor{green!8}\textbf{0.01~{\color{green!50!black}\ensuremath{\blacktriangledown}}} & 2.72 & \cellcolor{green!8}\textbf{1.15~{\color{green!50!black}\ensuremath{\blacktriangledown}}} & 3.67 & \cellcolor{green!8}\textbf{1.24~{\color{green!50!black}\ensuremath{\blacktriangledown}}} \\
& NLL & 0.8779 & 0.7268 & \cellcolor{green!8}\textbf{0.0054~{\color{green!50!black}\ensuremath{\blacktriangledown}}} & 0.7007 & \cellcolor{gray!12}\textbf{0.0000} & 1.3463 & \cellcolor{green!8}\textbf{0.0251~{\color{green!50!black}\ensuremath{\blacktriangledown}}} & 1.3790 & \cellcolor{green!8}\textbf{0.0281~{\color{green!50!black}\ensuremath{\blacktriangledown}}} \\
& Brier & 0.2935 & 0.2616 & \cellcolor{green!8}\textbf{0.0030~{\color{green!50!black}\ensuremath{\blacktriangledown}}} & 0.2572 & \cellcolor{gray!12}\textbf{0.0000} & 0.2608 & \cellcolor{green!8}\textbf{0.0025~{\color{green!50!black}\ensuremath{\blacktriangledown}}} & 0.2619 & \cellcolor{green!8}\textbf{0.0031~{\color{green!50!black}\ensuremath{\blacktriangledown}}} \\
\bottomrule
\end{tabular}%
}
\caption{Complete classifier--calibrator results on ImageNet-1K corresponding to Figure~2 of the main paper. Values are five-split means. Uncal. reports absolute values. Within each Base/+CORD pair, Acc. reports $\Delta\mathrm{Acc}$ from Uncal. in percentage points, TPCR reports absolute rates in percent, and $M\in\{\mathrm{ECE},\mathrm{NLL},\mathrm{Brier}\}$ is reported as $M(\mathrm{Base})/\Delta_{\mathrm R}M$, where $\Delta_{\mathrm R}M:=M(\mathrm{Base})-M(\mathrm{Base}+\mathrm{CORD})$; positive values indicate improvement.}
\label{tab:complete-results-imagenet}
\end{table*}

\begin{figure*}[h!]
    \centering
    \includegraphics[width=1.0\linewidth]
    {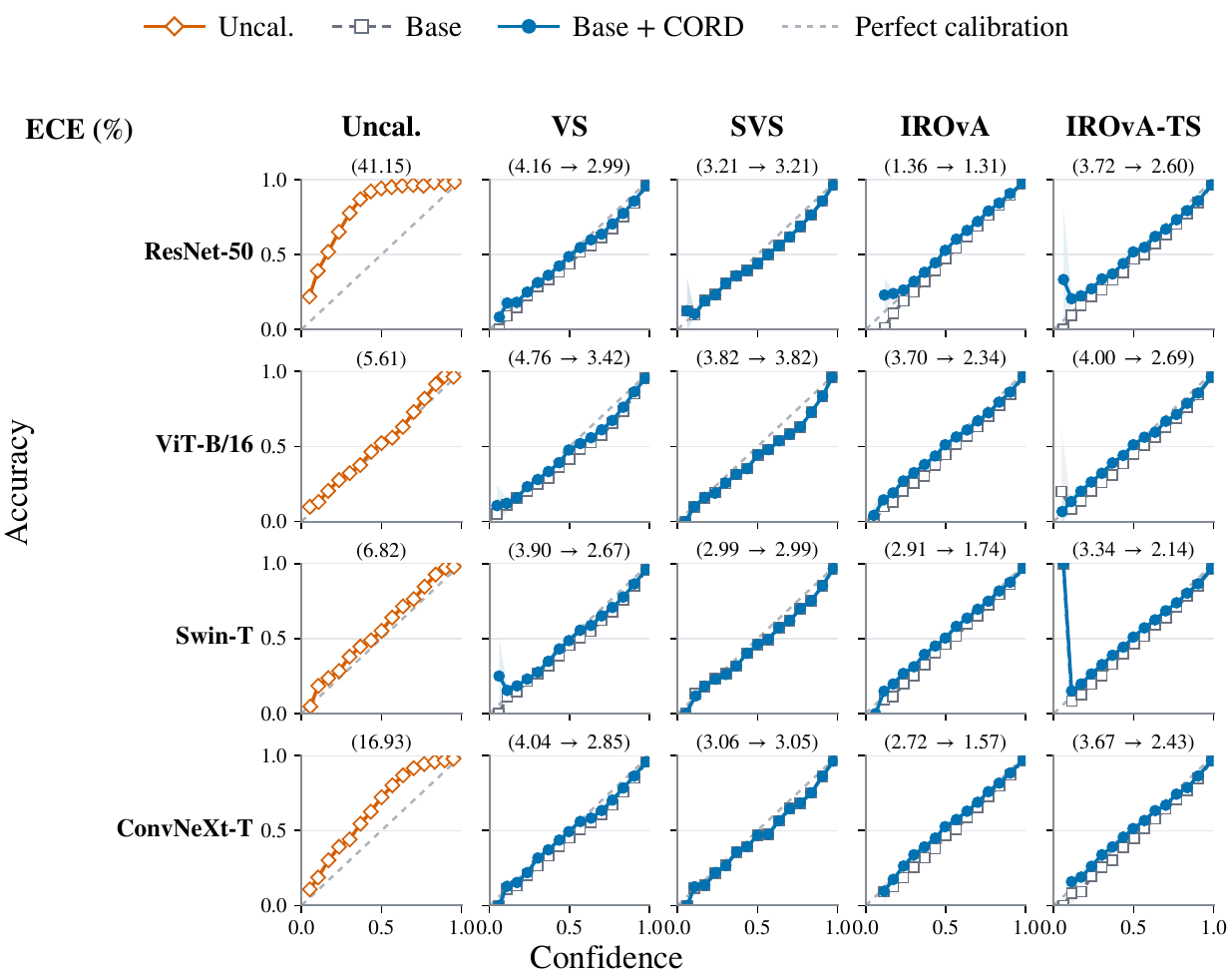}
    \caption{Reliability diagrams on ImageNet-1K corresponding to
    Table~\ref{tab:complete-results-imagenet}. Rows denote classifiers
    and columns denote Uncal. and Base calibrators; parenthetical
    values report ECE (\%) for Uncal. or Base $\rightarrow$
    Base + CORD. Curves are averaged over five splits; shading shows
    pointwise 95\% CIs.}
    \label{fig:reliability-imagenet1k}
\end{figure*}

\clearpage

\vspace{-0.3cm}

\section{Mean-Target Ablation on CIFAR-10 and ImageNet-1K}
\label{app:mean-target-additional}
The additional datasets preserve the overall pattern observed on CIFAR-100 in Table~3 of the main paper. CORD attains the lowest ECE and NLL and the smallest held-out mean discrepancy on both datasets, while matching the lowest Brier value at the reported precision. The inherited feasible mean thus most closely retains the corresponding Base mean beyond the calibration split, including in the low-TPCR setting of CIFAR-10.

\begin{table}[h!]
\renewcommand{\arraystretch}{1.0}
\centering
\resizebox{1.0\linewidth}{!}{%
    \begin{tabular}{
        l
        !{\color{black!18}\vrule width 0.25pt}
        c
        !{\color{black!18}\vrule width 0.25pt}
        c
        !{\color{black!18}\vrule width 0.25pt}
        c
        !{\color{black!18}\vrule width 0.25pt}
        c
        !{\color{black!18}\vrule width 0.25pt}
        c
    }
        \toprule
        \textbf{Dataset}
        & \textbf{Mean target}
        & \textbf{ECE (\%)}
        & \textbf{NLL}
        & \textbf{Brier}
        & \shortstack{
            \textbf{$\lvert\bar{s}_{\mathrm{eval}}
            -\bar{b}_{\mathrm{eval}}\rvert$}
            \textbf{(pp)}
        } \\
        \midrule

        \multirow{4}{*}{CIFAR-10}
        & None (independent)
        & 1.271
        & 0.2073
        & 0.0858
        & 0.104 \\

        & Local-reference
        & 1.270
        & 0.2073
        & 0.0858
        & 0.098 \\

        & Pointwise-projected Base
        & 1.257
        & 0.2072
        & 0.0857
        & 0.058 \\

        & \cellcolor{gray!12}\textbf{CORD}
        & \cellcolor{gray!12}\textbf{1.240}
        & \cellcolor{gray!12}\textbf{0.2071}
        & \cellcolor{gray!12}\textbf{0.0857}
        & \cellcolor{gray!12}\textbf{0.019} \\

        \midrule

        \multirow{4}{*}{ImageNet-1K}
        & None (independent)
        & 2.974
        & 1.0793
        & 0.2713
        & 0.569 \\

        & Local-reference
        & 2.725
        & 1.0780
        & 0.2712
        & 0.199 \\

        & Pointwise-projected Base
        & 2.958
        & 1.0792
        & 0.2713
        & 0.548 \\

        & \cellcolor{gray!12}\textbf{CORD}
        & \cellcolor{gray!12}\textbf{2.614}
        & \cellcolor{gray!12}\textbf{1.0775}
        & \cellcolor{gray!12}\textbf{0.2712}
        & \cellcolor{gray!12}\textbf{0.104} \\

        \bottomrule
    \end{tabular}%
}
\caption{Mean-target ablation on CIFAR-10 and ImageNet-1K, extending Table~3 of the main paper. Values are averaged over classifier--calibrator pairs and five splits.}
\label{tab:mean-target-additional}
\end{table}

\section{Robustness to CIFAR-C Corruptions}
\label{app:cifar-c-results}
Figure~\ref{fig:app-cifar-c-absolute} complements the paired reductions in Figure~3 of the main paper with absolute Base and Base + CORD results under corruption. Averaged over classifiers, calibrators, and 15 corruptions, CORD maintains zero TPCR and lower ECE, NLL, and Brier than Base at every severity on both datasets. The widening separation as corruption severity and direct-output TPCR increase extends the TPCR-dependent pattern observed on clean data to distribution shift.

\begin{figure}[h!]
    \centering
    \includegraphics[width=1.0\linewidth]
    {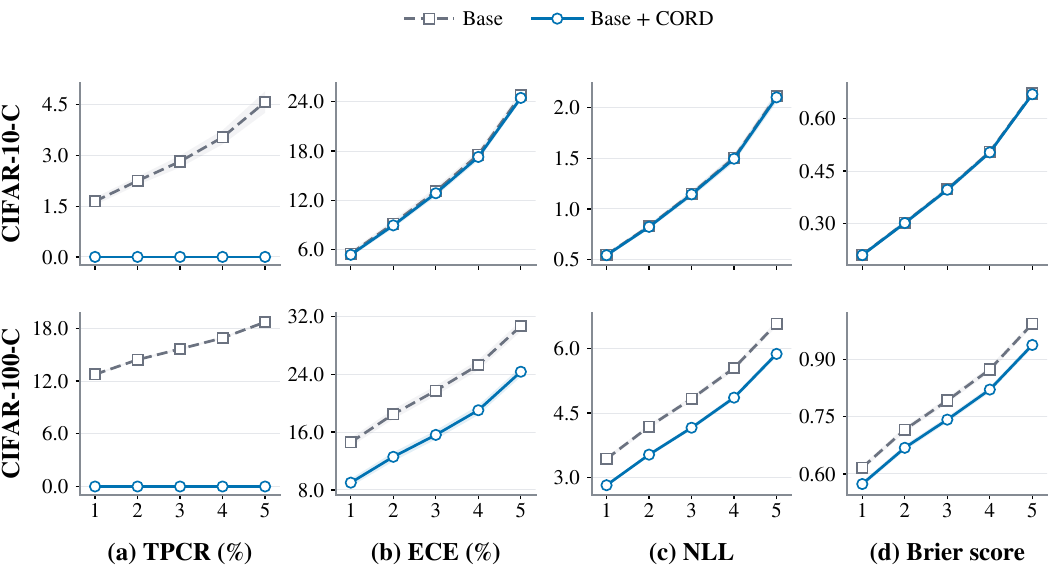}
    \caption{Absolute Base and Base + CORD results corresponding to Figure~3 of the main paper. Rows show CIFAR-10-C and CIFAR-100-C; curves average over classifiers, calibrators, 15 corruptions, and five splits.}
    \label{fig:app-cifar-c-absolute}
\end{figure}

\clearpage
\section{Calibration-Size Sensitivity on CIFAR-10/100}
\label{app:calibration-size-cifar}

Figure~\ref{fig:app-calibration-size-cifar} extends the calibration-size analysis in Figure~4 of the main paper to CIFAR-10/100. Across calibration-set fractions, CORD maintains zero TPCR by construction, and the mean paired reductions in ECE, NLL, and Brier persist on both datasets. The smaller paired changes on CIFAR-10 and larger gains on CIFAR-100 remain consistent with their respective direct-output TPCR levels.

\begin{figure}[h!]
    \centering
    \includegraphics[width=1.0\linewidth]
    {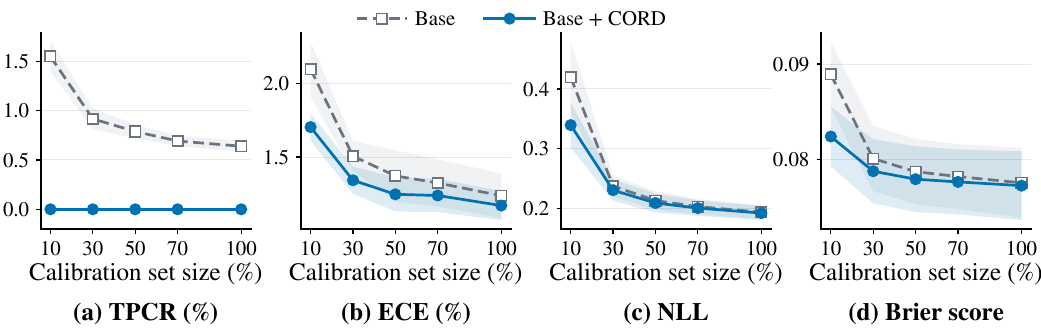}
    \includegraphics[width=1.0\linewidth]
    {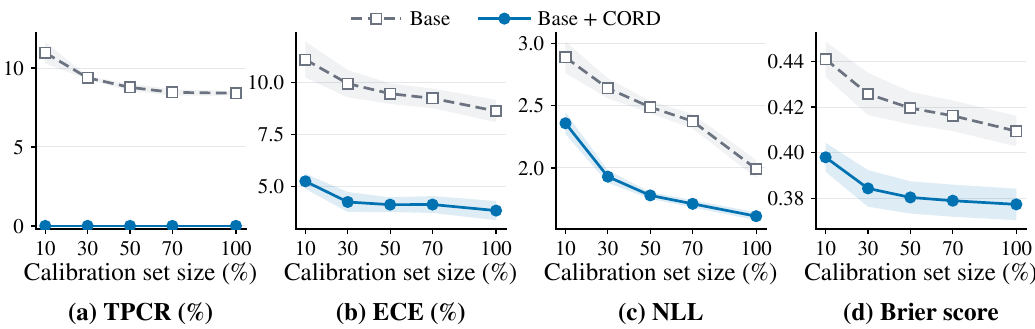}

    \caption{Calibration-size sensitivity on CIFAR-10 (top) and
    CIFAR-100 (bottom), extending Figure~4 of the main paper.
    Curves average over classifiers and calibrators; shading shows
    pointwise 95\% CIs across five splits.}
    \label{fig:app-calibration-size-cifar}
\end{figure}

\clearpage

\section{Additional Comparisons with Prediction-Preserving Methods}
\label{app:fit-time-post-fit}

These results extend Table~4 of the main paper. NLL and Brier are reported for the Base$^{\dagger}$ selected by standard ECE, while the complete CIFAR-10 comparisons report the Base outputs underlying the metric-specific selections. In both views, CORD is applied after fitting to the same direct outputs, leaving each fitted calibrator unchanged.

\subsection{NLL and Brier Results}
\label{app:fit-time-nll-brier}

\begin{table*}[h!]
\renewcommand{\arraystretch}{0.9}
\centering
\setlength{\tabcolsep}{3.0pt}

\newcommand{\metricsep}{%
    \hspace{1pt}{\color{black!45}/}\hspace{1pt}}
\newcommand{\improved}{%
    \,{\color{green!50!black}\ensuremath{\blacktriangledown}}}
\newcommand{\degraded}{%
    \,{\color{red!75!black}\ensuremath{\blacktriangle}}}

\resizebox{\linewidth}{!}{%
    \begin{tabular}{
        l
        !{\color{black!18}\vrule width 0.25pt}
        r@{\metricsep}r@{\metricsep}r
        !{\color{black!18}\vrule width 0.25pt}
        r@{\metricsep}r@{\metricsep}r
        !{\color{black!18}\vrule width 0.25pt}
        r@{\metricsep}r@{\metricsep}r
        !{\color{black!18}\vrule width 0.25pt}
        r@{\metricsep}r@{\metricsep}r
        !{\color{black!18}\vrule width 0.25pt}
        r@{\metricsep}r@{\metricsep}r
        !{\color{black!18}\vrule width 0.25pt}
        r@{\metricsep}r@{\metricsep}r
        !{\color{black!18}\vrule width 0.25pt}
        r@{\metricsep}r@{\metricsep}r
        !{\color{black!18}\vrule width 0.25pt}
        >{\columncolor{gray!12}}r
        @{\metricsep}
        >{\columncolor{gray!12}}r
        @{\metricsep}
        >{\columncolor{gray!12}}r
    }
        \toprule
        \textbf{Classifier}
        & \multicolumn{3}{
            c!{\color{black!18}\vrule width 0.25pt}
          }{\textbf{Uncal.}}
        & \multicolumn{3}{
            c!{\color{black!18}\vrule width 0.25pt}
          }{\textbf{TS}}
        & \multicolumn{3}{
            c!{\color{black!18}\vrule width 0.25pt}
          }{\textbf{IRM}}
        & \multicolumn{3}{
            c!{\color{black!18}\vrule width 0.25pt}
          }{\textbf{AdaTS}}
        & \multicolumn{3}{
            c!{\color{black!18}\vrule width 0.25pt}
          }{\textbf{TS--TvA}}
        & \multicolumn{3}{
            c!{\color{black!18}\vrule width 0.25pt}
          }{\textbf{MCCT-I}}
        & \multicolumn{3}{
            c!{\color{black!18}\vrule width 0.25pt}
          }{\textbf{Base$^{\dagger}$}}
        & \multicolumn{3}{
            >{\columncolor{gray!12}}c
          }{\textbf{Base$^{\dagger}$ + CORD}} \\
        \midrule

        VGG-16-BN
        & 4.79 & 0.3353 & 10.56
        & 1.54 & 0.2306 & 9.52
        & 1.62 & 0.2286 & \textbf{9.27}
        & 1.47 & 0.2282 & 9.28
        & 1.73 & 0.2306 & 9.47
        & 1.74 & \textbf{0.2209} & 9.46
        & 1.46 & 0.2243 & 9.41
        & \textbf{1.41}\improved
        & 0.2241\improved
        & 9.40\improved \\

        ResNet-56
        & 3.75 & 0.2525 & 9.39
        & 0.94 & 0.1910 & 8.65
        & 1.19 & 0.1917 & 8.60
        & 1.59 & 0.1951 & 8.65
        & 1.02 & 0.1912 & 8.63
        & 1.02 & 0.1955 & \textbf{8.59}
        & 0.94 & 0.1889 & 8.63
        & \textbf{0.92}\improved
        & \textbf{0.1888}\improved
        & 8.63\phantom{\improved} \\

        WRN-26-10
        & 3.36 & 0.2775 & 7.11
        & 1.15 & 0.1581 & 6.32
        & 1.19 & 0.1548 & 6.17
        & 1.34 & 0.1576 & 6.21
        & 1.16 & 0.1581 & 6.30
        & 1.22 & \textbf{0.1490} & 6.25
        & 1.13 & 0.1935 & 6.15
        & \textbf{1.02}\improved
        & 0.1905\improved
        & \textbf{6.16}\degraded \\

        DenseNet-121
        & 4.66 & 0.4390 & 9.50
        & \textbf{1.24} & 0.2211 & 8.56
        & 1.25 & 0.2128 & \textbf{8.24}
        & 2.06 & 0.2111 & 8.31
        & 1.31 & 0.2211 & 8.55
        & 1.68 & \textbf{0.2089} & 8.45
        & 1.32 & 0.2524 & 8.25
        & 1.30\improved
        & 0.2502\improved
        & 8.25\phantom{\improved} \\

        MobileNetV2-1.4$\times$
        & 3.73 & 0.2509 & 9.77
        & 1.25 & 0.2013 & 9.10
        & 1.38 & 0.2008 & 9.04
        & 1.55 & 0.1992 & 9.08
        & 1.22 & 0.2013 & 9.08
        & 1.30 & 0.1951 & 9.04
        & 1.20 & 0.1939 & 9.07
        & \textbf{1.22}\degraded
        & \textbf{0.1935}\improved
        & \textbf{9.03}\improved \\

        ShuffleNetV2-1.0$\times$
        & 4.03 & 0.2839 & 10.91
        & 1.07 & 0.2244 & 10.24
        & 1.46 & 0.2305 & 10.29
        & 1.39 & 0.2233 & 10.23
        & 1.02 & 0.2246 & 10.23
        & \textbf{1.00} & \textbf{0.2214} & \textbf{10.20}
        & 1.06 & 0.2222 & 10.24
        & 1.04\improved
        & 0.2220\improved
        & 10.23\improved \\

        RepVGG-A1
        & 3.49 & 0.2319 & 8.71
        & 1.14 & 0.1832 & 8.08
        & 1.30 & 0.1845 & 8.02
        & 1.34 & 0.1864 & 8.07
        & 1.12 & 0.1832 & 8.05
        & 1.13 & 0.1792 & 8.02
        & 1.13 & 0.1768 & 8.02
        & \textbf{1.10}\improved
        & \textbf{0.1765}\improved
        & \textbf{8.00}\improved \\

        \bottomrule
    \end{tabular}%
}
\caption{Extension of Table~4 of the main paper to NLL and Brier for the Base$^{\dagger}$ selected by standard ECE. Values are five-split means reported as ECE (\%) / NLL / Brier ($\times10^2$). Per classifier, Base$^{\dagger}$ denotes the prediction-non-preserving Base with the lowest direct-output standard ECE, and Base$^{\dagger}$+CORD its paired repair. Fit-time baselines and CORD repairs have zero TPCR; bold marks the lowest prediction-preserving value per classifier and metric, and ${\color{green!50!black}\blacktriangledown}$ and ${\color{red!75!black}\blacktriangle}$ denote paired improvement and degradation from Base$^{\dagger}$, respectively.}
\label{tab:fit-time-preservation-nll-brier}
\end{table*}

\subsection{Complete CIFAR-10 Comparisons across Base Calibrators}
\label{app:fit-time-complete-base}
\begin{table}[h!]
\renewcommand{\arraystretch}{1.0}
\centering
\setlength{\tabcolsep}{2.5pt}

\newcommand{\improved}{%
    \,{\color{green!50!black}\ensuremath{\blacktriangledown}}}
\newcommand{\degraded}{%
    \,{\color{red!75!black}\ensuremath{\blacktriangle}}}

\resizebox{1.0\linewidth}{!}{%
    \begin{tabular}{
        l
        !{\color{black!18}\vrule width 0.25pt}
        l
        !{\color{black!18}\vrule width 0.25pt}
        r >{\columncolor{gray!12}}r
        !{\color{black!18}\vrule width 0.25pt}
        r >{\columncolor{gray!12}}r
        !{\color{black!18}\vrule width 0.25pt}
        r >{\columncolor{gray!12}}r
        !{\color{black!18}\vrule width 0.25pt}
        r >{\columncolor{gray!12}}r
        !{\color{black!18}\vrule width 0.25pt}
        r >{\columncolor{gray!12}}r
        !{\color{black!18}\vrule width 0.25pt}
        r >{\columncolor{gray!12}}r
        !{\color{black!18}\vrule width 0.25pt}
        r >{\columncolor{gray!12}}r
    }
        \toprule
        \multirow{2}{*}{\textbf{Classifier}}
        & \multirow{2}{*}{\textbf{Metric (\%)}}
        & \multicolumn{2}{
            c!{\color{black!18}\vrule width 0.25pt}
          }{\textbf{VS}}
        & \multicolumn{2}{
            c!{\color{black!18}\vrule width 0.25pt}
          }{\textbf{SVS}}
        & \multicolumn{2}{
            c!{\color{black!18}\vrule width 0.25pt}
          }{\textbf{MS}}
        & \multicolumn{2}{
            c!{\color{black!18}\vrule width 0.25pt}
          }{\textbf{SMS}}
        & \multicolumn{2}{
            c!{\color{black!18}\vrule width 0.25pt}
          }{\textbf{Dir-ODIR}}
        & \multicolumn{2}{
            c!{\color{black!18}\vrule width 0.25pt}
          }{\textbf{IROvA}}
        & \multicolumn{2}{c}{\textbf{IROvA-TS}} \\
        \cmidrule(lr){3-4}
        \cmidrule(lr){5-6}
        \cmidrule(lr){7-8}
        \cmidrule(lr){9-10}
        \cmidrule(lr){11-12}
        \cmidrule(lr){13-14}
        \cmidrule(lr){15-16}
        &
        & \textbf{Base} & \textbf{+ CORD}
        & \textbf{Base} & \textbf{+ CORD}
        & \textbf{Base} & \textbf{+ CORD}
        & \textbf{Base} & \textbf{+ CORD}
        & \textbf{Base} & \textbf{+ CORD}
        & \textbf{Base} & \textbf{+ CORD}
        & \textbf{Base} & \textbf{+ CORD} \\
        \midrule

        \multirow{3}{*}{VGG-16-BN}
        & ECE
        & \textbf{1.455} & 1.413\improved
        & 1.563 & 1.609\degraded
        & 1.524 & 1.409\improved
        & 1.545 & 1.558\degraded
        & 1.799 & 1.695\improved
        & 1.703 & 1.681\improved
        & 1.589 & 1.587\improved \\
        & $\mathrm{ECE}_{\mathrm{EM}}$
        & 1.914 & 1.870\improved
        & 2.475 & 2.471\improved
        & 2.012 & 1.784\improved
        & 2.040 & 2.007\improved
        & 2.147 & 2.023\improved
        & 1.492 & 1.434\improved
        & \textbf{1.360} & 1.295\improved \\
        & smECE
        & 1.640 & 1.603\improved
        & 1.995 & 1.995\phantom{\improved}
        & 1.682 & 1.558\improved
        & 1.719 & 1.709\improved
        & 1.848 & 1.780\improved
        & 1.695 & 1.679\improved
        & \textbf{1.628} & 1.606\improved \\

        \cmidrule(lr){1-16}

        \multirow{3}{*}{ResNet-56}
        & ECE
        & 1.003 & 0.948\improved
        & \textbf{0.943} & 0.924\improved
        & 1.120 & 0.849\improved
        & 0.998 & 0.988\improved
        & 1.007 & 0.934\improved
        & 1.329 & 1.340\degraded
        & 1.110 & 1.104\improved \\
        & $\mathrm{ECE}_{\mathrm{EM}}$
        & 1.215 & 1.081\improved
        & 1.332 & 1.282\improved
        & 1.379 & 0.965\improved
        & 1.252 & 1.119\improved
        & 1.224 & 1.057\improved
        & 0.962 & 0.959\improved
        & \textbf{0.884} & 0.875\improved \\
        & smECE
        & \textbf{1.196} & 1.132\improved
        & 1.246 & 1.230\improved
        & 1.304 & 1.148\improved
        & 1.229 & 1.197\improved
        & 1.204 & 1.136\improved
        & 1.315 & 1.309\improved
        & 1.225 & 1.225\phantom{\degraded} \\

        \cmidrule(lr){1-16}

        \multirow{3}{*}{WRN-26-10}
        & ECE
        & 1.141 & 1.123\improved
        & 1.165 & 1.165\phantom{\improved}
        & 1.266 & 1.066\improved
        & 1.242 & 1.199\improved
        & 1.197 & 1.147\improved
        & 1.205 & 1.076\improved
        & \textbf{1.125} & 1.017\improved \\
        & $\mathrm{ECE}_{\mathrm{EM}}$
        & 1.533 & 1.494\improved
        & 1.824 & 1.850\degraded
        & 1.661 & 1.391\improved
        & 1.644 & 1.621\improved
        & 1.527 & 1.542\degraded
        & \textbf{0.908} & 0.878\improved
        & 0.953 & 0.930\improved \\
        & smECE
        & 1.354 & 1.325\improved
        & 1.480 & 1.478\improved
        & 1.450 & 1.257\improved
        & 1.449 & 1.423\improved
        & 1.407 & 1.391\improved
        & 1.273 & 1.161\improved
        & \textbf{1.205} & 1.161\improved \\

        \cmidrule(lr){1-16}

        \multirow{3}{*}{DenseNet-121}
        & ECE
        & 1.477 & 1.452\improved
        & 1.378 & 1.423\degraded
        & 1.506 & 1.354\improved
        & 1.569 & 1.542\improved
        & 1.560 & 1.469\improved
        & \textbf{1.320} & 1.300\improved
        & 1.337 & 1.191\improved \\
        & $\mathrm{ECE}_{\mathrm{EM}}$
        & 1.754 & 1.712\improved
        & 2.512 & 2.508\improved
        & 1.757 & 1.537\improved
        & 1.853 & 1.909\degraded
        & 1.897 & 1.867\improved
        & 1.280 & 1.232\improved
        & \textbf{1.190} & 1.139\improved \\
        & smECE
        & 1.460 & 1.459\improved
        & 1.896 & 1.845\improved
        & 1.547 & 1.355\improved
        & 1.569 & 1.551\improved
        & 1.543 & 1.514\improved
        & 1.418 & 1.322\improved
        & \textbf{1.373} & 1.270\improved \\

        \cmidrule(lr){1-16}

        \multirow{3}{*}{MobileNetV2-1.4$\times$}
        & ECE
        & 1.224 & 1.201\improved
        & 1.309 & 1.259\improved
        & 1.501 & 1.162\improved
        & \textbf{1.198} & 1.220\degraded
        & 1.416 & 1.321\improved
        & 1.446 & 1.471\degraded
        & 1.344 & 1.304\improved \\
        & $\mathrm{ECE}_{\mathrm{EM}}$
        & 1.434 & 1.270\improved
        & 1.541 & 1.467\improved
        & 1.698 & 1.193\improved
        & 1.485 & 1.339\improved
        & 1.603 & 1.338\improved
        & 1.331 & 1.329\improved
        & \textbf{1.126} & 1.183\degraded \\
        & smECE
        & \textbf{1.357} & 1.345\improved
        & 1.470 & 1.466\improved
        & 1.568 & 1.382\improved
        & 1.429 & 1.411\improved
        & 1.526 & 1.429\improved
        & 1.541 & 1.531\improved
        & 1.398 & 1.365\improved \\

        \cmidrule(lr){1-16}

        \multirow{3}{*}{ShuffleNetV2-1.0$\times$}
        & ECE
        & \textbf{1.058} & 1.035\improved
        & 1.101 & 1.084\improved
        & 1.225 & 1.129\improved
        & 1.095 & 1.076\improved
        & 1.110 & 1.163\degraded
        & 1.518 & 1.404\improved
        & 1.204 & 1.170\improved \\
        & $\mathrm{ECE}_{\mathrm{EM}}$
        & 1.085 & 1.105\degraded
        & 1.190 & 1.163\improved
        & 1.232 & 1.191\improved
        & 1.087 & 1.119\degraded
        & 1.114 & 1.099\improved
        & 1.244 & 1.231\improved
        & \textbf{1.007} & 1.046\degraded \\
        & smECE
        & 1.352 & 1.346\improved
        & 1.326 & 1.300\improved
        & 1.389 & 1.315\improved
        & 1.290 & 1.279\improved
        & 1.304 & 1.282\improved
        & 1.506 & 1.497\improved
        & \textbf{1.273} & 1.252\improved \\

        \cmidrule(lr){1-16}

        \multirow{3}{*}{RepVGG-A1}
        & ECE
        & 1.177 & 1.134\improved
        & 1.155 & 1.140\improved
        & 1.321 & 1.128\improved
        & \textbf{1.134} & 1.097\improved
        & 1.250 & 1.004\improved
        & 1.488 & 1.435\improved
        & 1.285 & 1.244\improved \\
        & $\mathrm{ECE}_{\mathrm{EM}}$
        & 1.384 & 1.269\improved
        & 1.431 & 1.381\improved
        & 1.528 & 1.146\improved
        & 1.368 & 1.256\improved
        & 1.553 & 1.234\improved
        & 1.143 & 1.067\improved
        & \textbf{0.981} & 0.949\improved \\
        & smECE
        & 1.282 & 1.217\improved
        & 1.287 & 1.284\improved
        & 1.391 & 1.239\improved
        & \textbf{1.266} & 1.231\improved
        & 1.359 & 1.219\improved
        & 1.463 & 1.406\improved
        & 1.344 & 1.303\improved \\

        \bottomrule
    \end{tabular}%
}
\caption{Complete CIFAR-10 comparisons across Base calibrators
underlying Table~4 of the main paper. Base and Base + CORD
entries are absolute five-split means (\%); bold identifies the
metric-specific Base$^{\dagger}$, and
${\color{green!50!black}\blacktriangledown}$ and
${\color{red!75!black}\blacktriangle}$ denote paired improvement
and degradation from the corresponding Base, respectively.}
\label{tab:fit-time-complete-base}
\end{table}

\end{document}